\def\eqref#1{equation~\ref{#1}}
\def\1{\bm{1}}
\DeclareMathAlphabet{\mathsfit}{\encodingdefault}{\sfdefault}{m}{sl}
\SetMathAlphabet{\mathsfit}{bold}{\encodingdefault}{\sfdefault}{bx}{n}
\DeclareMathOperator*{\argmax}{arg\,max}
\DeclareMathOperator*{\argmin}{arg\,min}
\newcommand{\xm}{x^{-}}
\newcommand{\xp}{x^{+}}
\definecolor{codegreen}{rgb}{0,0.3,0.6}
\definecolor{codegray}{rgb}{0.5,0.5,0.5}
\definecolor{codepurple}{rgb}{0.58,0,0.82}
\definecolor{backcolour}{rgb}{0.95,0.95,0.92}
\definecolor{mylavendar}{RGB}{215,131,255}
\definecolor{myblue}{RGB}{0, 150, 255}
\definecolor{mylightblue}{RGB}{118, 214, 255}
\definecolor{mygreen}{RGB}{115, 250, 121}
\definecolor{myred}{RGB}{255, 38, 0}
\lstdefinestyle{mystyle}{
    basicstyle=\tiny,
    commentstyle=\color{codegreen},
    keywordstyle=\color{magenta},
    numberstyle=\tiny\color{codegray},
    stringstyle=\color{codepurple},
    basicstyle=\fontsize{8.5}{9}\selectfont\ttfamily,
    breakatwhitespace=false,         
    breaklines=true,                 
    captionpos=b,                    
    keepspaces=true,                 
    numbers=left,                    
    numbersep=5pt,                  
    showspaces=false,                
    showstringspaces=false,
    frame = single
}
\title{Contrastive Learning with  \\ Hard Negative Samples}
\author{Joshua Robinson, Ching-Yao Chuang, Suvrit Sra, Stefanie Jegelka \\
Massachusetts Institute of Technology\\
Cambridge, MA, USA \\
\texttt{\{joshrob,cychuang,suvrit,stefje\}@mit.edu} \\
}
\newtheorem{thm}{Theorem}
\newtheorem{lemma}[thm]{Lemma}
\newtheorem{prop}[thm]{Proposition}
\newtheorem{principle}[thm]{Principle}
\definecolor{darkblue}{rgb}{0.0,0.0,0.4}
\definecolor{darkred}{rgb}{0.55,0.05,0.0}
\definecolor{darkgreen}{rgb}{0.0,0.29,0.29}
\definecolor{darkpurple}{rgb}{0.47,0.09,0.29}
\definecolor{codepurple}{rgb}{0.58,0,0.82}
\DeclareMathOperator*{\esssup}{ess\,sup}
\newcommand\blfootnote[1]{%
  \begingroup
  \renewcommand\thefootnote{}\footnote{#1}%
  \addtocounter{footnote}{-1}%
  \endgroup
}
\begin{document}

\maketitle

\begin{abstract}
How can you sample good negative examples for contrastive learning? We argue that, as with metric learning, contrastive learning of representations benefits from hard negative samples (i.e., points that are difficult to distinguish from an anchor point). The key challenge toward using hard negatives is that contrastive methods must remain unsupervised, making it infeasible to adopt existing negative sampling strategies that use \emph{true} similarity information. In response, we develop a new family of unsupervised sampling methods for selecting hard negative  samples where the user can control the hardness. A limiting case of this sampling results in a representation that tightly clusters each class, and pushes different classes as far apart as possible. The proposed method improves downstream performance across multiple modalities, requires only few additional lines of code to implement, and introduces no computational overhead. 
\end{abstract}
\vspace{-20pt}
\blfootnote{Code available at: \url{https://github.com/joshr17/HCL}}
\enlargethispage{5pt}
\section{Introduction}
\vspace{-5pt}
Owing to their empirical success, contrastive learning methods \citep{chopra2005learning,hadsell2006dimensionality} have become one of the most popular self-supervised approaches for learning representations \citep{oord2018representation,tian2019contrastive,chen2020simple}. In computer vision, unsupervised contrastive learning methods have even outperformed supervised pre-training for object detection and segmentation tasks \citep{misra2020self,he2020momentum}. 

Contrastive learning relies on two key ingredients: notions of similar (positive) $(x,\xp)$ and dissimilar (negative) $(x,\xm)$ pairs of data points. The training objective, typically \emph{noise-contrastive estimation} \citep{gutmann2010noise}, guides the learned representation $f$ to map positive pairs to nearby locations, and negative pairs farther apart; other objectives have also been considered~\citep{chen2020simple}. 
The success of the associated methods depends on the design of informative of the positive and negative pairs, which cannot exploit \emph{true} similarity information since there is no supervision.

Much research effort has addressed sampling strategies for positive pairs, and has been a key driver of recent progress in multi-view and contrastive learning \citep{blum1998combining,xu2013survey,bachman2019learning,chen2020simple,tian2020makes}. 
For image data, positive sampling strategies often apply transformations that preserve semantic content, e.g., jittering, random cropping, separating color channels, etc.\ \citep{chen2020simple,chen2020improved,tian2019contrastive}. 
Such transformations have also been effective in learning control policies from raw pixel data~\citep{srinivas2020curl}. Positive sampling techniques have also been proposed for sentence, audio, and video data \citep{logeswaran2018efficient, oord2018representation, purushwalkam2020demystifying,sermanet2018time}.

Surprisingly, the choice of negative pairs has drawn much less attention in contrastive learning. Often, given an ``anchor'' point $x$, a ``negative'' $x^-$ is simply sampled uniformly from the training data, independent of how informative it may be for the learned representation. In supervised and metric learning settings, ``hard'' (true negative) examples can help guide a learning method to correct its mistakes more quickly \citep{schroff2015facenet,oh2016deep}. For representation learning, informative negative examples are intuitively those pairs that are mapped nearby but should be far apart. This idea is successfully applied in metric learning, where true pairs of dissimilar points are available, as opposed to unsupervised contrastive learning.

With this motivation, we address the challenge of selecting informative negatives for contrastive representation learning. In response, we propose a solution that builds a tunable sampling distribution that prefers negative pairs whose representations are currently very similar. This solution faces two challenges: (1) we do not have access to any true similarity or dissimilarity information; (2) we need an efficient sampling strategy for this tunable distribution. We overcome (1) by building on ideas from positive-unlabeled learning  \citep{elkan2008learning,du2014analysis}, and (2) by designing an efficient, easy to implement importance sampling technique that incurs no computational overhead.

Our theoretical analysis shows that, as a function of the tuning parameter, the optimal representations for our new method place similar inputs in tight clusters, whilst spacing the clusters as far apart as possible. Empirically, our hard negative sampling strategy improves the downstream task performance for image, graph and text data, supporting that indeed, our negative examples are more informative.

\textbf{Contributions.} In summary, we make the following contributions:
\vspace{-8pt}
\begin{enumerate}\setlength{\itemsep}{0pt}
\item We propose a simple distribution over hard negative pairs for contrastive representation learning, and derive a practical importance sampling strategy with zero computational overhead that takes into account the lack of true dissimilarity information;
\item We theoretically analyze the hard negatives objective and optimal representations, showing that they capture desirable generalization properties;
\item We empirically observe that the proposed sampling method improves the downstream task performance on image, graph and text data.
\end{enumerate}
Before moving onto the problem formulation and our results, we summarize related work below.

\vspace*{-5pt}
\subsection{Related Work}
\vspace*{-5pt}

\textbf{Contrastive Representation Learning.}
Various frameworks for contrastive learning of visual representations have been proposed, including SimCLR  \citep{chen2020simple,chen2020big}, which uses augmented views of other items in a minibatch as negative samples, and  MoCo \citep{he2020momentum,chen2020improved}, which uses a momentum updated memory bank of old negative representations to enable the use of very large batches of negative samples. 
Most contrastive methods are unsupervised, however there exist some that use label information \citep{sylvain2019locality,khosla2020supervised}.
Many works study the role of positive pairs, and, e.g., propose to apply large perturbations for images \cite{chen2020simple,chen2020improved}, or argue to minimize the mutual information within positive pairs, apart from relevant information for the ultimate prediction task \citep{tian2020makes}. Beyond visual data, contrastive methods have been developed for sentence embeddings \citep{logeswaran2018efficient}, sequential data \citep{oord2018representation,henaff2019data}, graph \citep{sun2019infograph,hassani2020contrastive,li2019graph} and node representation learning \citep{velickovic2019deep}, and learning representations from raw images for off-policy control \citep{srinivas2020curl}. The role of negative pairs hase been much less studied. \citet{chuang2020debiased} propose a method for ``debiasing'', i.e., correcting for the fact that not all negative pairs may be true negatives.  It does so by taking the viewpoint of Positive-Unlabeled learning, and exploits a decomposition of the true negative distribution. \cite{kalantidis2020hard} consider applying Mixup \citep{zhang2017mixup} to generate hard negatives in latent space, and \cite{jin2018unsupervised} exploit the specific temporal structure of video to generate negatives for object detection.


\textbf{Negative Mining in Deep Metric Learning.}
As opposed to the contrastive representation learning literature,  selection strategies for negative samples have been thoroughly studied in (deep) metric learning \citep{schroff2015facenet,oh2016deep,harwood2017smart,wu2017sampling,ge2018deep,suh2019stochastic}.  Most of these works observe that it is helpful to use negative samples that are difficult for the current embedding to discriminate. \cite{schroff2015facenet} qualify this, observing that some examples are simply too hard, and propose selecting ``semi-hard'' negative samples. The well known importance of negative samples in metric learning, where (partial) true dissimilarity information is available, raises the question of negative samples in contrastive learning, the subject of this paper.


\vspace*{-5pt}
\section{Contrastive Learning Setup}
\label{sec_crl}
\vspace*{-5pt}

 \begin{figure}[t]
  \centering
\includegraphics[width=\textwidth]{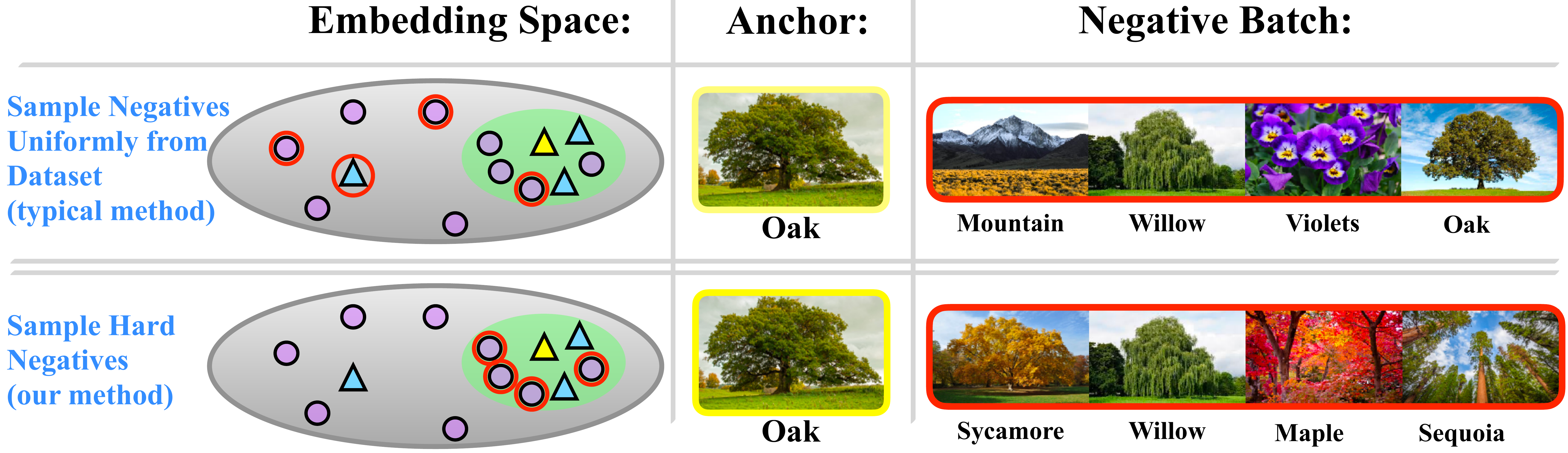} 
\caption{Schematic illustration of negative sampling methods for the example of classifying species of tree. Top row: uniformly samples negative examples (red rings); mostly focuses on very different data points from the anchor (yellow triangle), and may even sample examples from the same class (triangles, vs.\ circles). Bottom row: Hard negative sampling prefers examples that are (incorrectly) close to the anchor.}

\end{figure}

We begin with the setup and the idea of contrastive representation learning. 
 We wish to learn an embedding $f:\mathcal X \rightarrow \mathbb{S}^{d-1}/t$ that maps an observation $x$ to a point on a hypersphere $\mathbb{S}^{d-1}/t$ in $\mathbb{R}^d$ of radius $1/t$, where $t$ is the ``temperature'' scaling hyperparameter. 

Following the setup of \cite{arora2019theoretical}, we assume an underlying set of discrete latent classes $\mathcal{C}$ that represent semantic content, so that similar pairs $(x, x^+)$ have the same latent class. Denoting the distribution over latent classes by $\rho(c)$ for $c \in \mathcal{C}$, we define the joint distribution $p_{x, c}(x,c) = p(x|c)\rho(c)$ whose marginal $p(x)$ we refer to simply as $p$, and assume $\text{supp}(p)=\mathcal X$. For simplicity, we assume $\rho(c) = \tau^+$ is uniform, and let $\tau^- = 1 - \tau^+$ be the probability of another class. Since the class-prior $\tau^+$ is unknown in practice, it must either be treated as a hyperparameter, or estimated \citep{christoffel2016class,jain2016estimating}.

Let $h: \mathcal{X} \rightarrow \mathcal{C}$ be the true underlying hypothesis that assigns class labels to inputs. We write $x \sim x'$ to denote the label equivalence relation $h(x)=h(x')$. We denote by $p_x^+(x^\prime) = p(x^\prime | h(x^\prime) = h(x))$, the distribution over points with same label as $x$, and by $p_x^-(x^\prime) = p(x^\prime | h(x^\prime) \neq h(x))$, the distribution over points with labels different from $x$. We drop the subscript $x$ when the context is clear. Following the usual convention, we overload `$\sim$' and also write $x \sim p$ to denote a point sampled from $p$.

For each data point $x \sim p$, the noise-contrastive estimation (NCE) objective \citep{gutmann2010noise} for learning the representation $f$ uses a \emph{positive} example $x^+$ with the same label as $x$, and \emph{negative} examples $\{x^-_i\}_{i=1}^N$ with (supposedly) different labels, $h(x^-_i) \neq h(x)$, sampled from $q$:

\begin{equation}\label{eqn: practical objective}
 \mathbb{E}_{\substack{x \sim p, \ \ x^+ \sim p_x^+ \\ \{x^-_i\}_{i=1}^N \sim q }} \left [-\log \frac{e^{f(x)^T f(x^+)}}{e^{f(x)^T f(x^+)} + \frac{Q}{N} \sum_{i=1}^N e^{f(x)^T f(x_i^-)} } \right ].
\end{equation}
The weighting parameter $Q$ is introduced for the purpose of analysis. When $N$ is finite we take $Q=N$, yielding the usual form of the contrastive objective. The negative sample distribution $q$ is frequently chosen to be the marginal distribution $p$, or, in practice, an empirical approximation of it  \citep{tian2019contrastive,chen2020simple,chen2020improved,he2020momentum,chen2020improved,oord2018representation,henaff2019data}. In this paper we ask: is there a better way to choose $q$?

\vspace*{-8pt}
\section{Hard Negative Sampling}\label{out method}
\vspace*{-4pt}
In this section we describe our approach for hard negative sampling. We begin by asking \emph{what makes a good negative sample?} 
To answer this question we adopt the following two guiding principles:
\begin{principle}
  $q$ should only sample ``true negatives'' $x_i^-$ whose labels differ from that of the anchor $x$.
\end{principle}

\begin{principle}
  The most useful negative samples are ones that the embedding currently believes to be similar to the anchor. 
\end{principle}

In short, negative samples that have different label from the anchor, but that are embedded nearby are likely to be most useful and provide significant gradient information during training. In metric learning there is access to true negative pairs, automatically fulfilling the first principle. 

In unsupervised contrastive learning there is no supervision, so upholding Principle 1 is impossible to do exactly. In this paper we propose a method that upholds Principle 1 approximately, and simultaneously combines this idea with the key additional conceptual ingredient of ``hardness'' (encapsulated in Principle 2). The level of ``hardness'' in our method can be smoothly adjusted, allowing the user to select the hardness that best trades-off between an improved learning signal from hard negatives, and the harm due to the correction of  false negatives being only approximate. This important since the hardest points are those closest to the anchor, and are expected to have a high propensity to have the same label. Therefore the damage from the approximation  not removing all false negatives becomes larger for harder samples, creating the trade-off.  As a special case our our method, when the hardness level  is tuned fully down, we obtain the method proposed in ~\citep{chuang2020debiased} that only upholds Principle 1 (approximately) but not Principle 2.
 Finally, beyond Principles 1 and 2, we wish to design an efficient sampling method that does not add additional computational overhead during training.

\vspace*{-4pt} 
\subsection{Proposed Hard Sampling Method}
\vspace*{-4pt}
Our first goal is to design a distribution $q$ on $\mathcal X$ that is allowed to depend on the embedding $f$ and  the anchor $x$. From $q$ we sample a batch of negatives $\{x^-_i\}_{i=1}^N$ according to the principles noted above. 
We propose sampling negatives from the distribution $q_\beta^-$ defined as

\[ q_\beta^-(x^-) := q_\beta(x^- | h(x) \neq h(x^-)),\quad\text{where}\quad q_\beta(x^-) \propto e^{\beta f(x)^\top f(x^-)}\cdot  p(x^-), \] 

for $\beta \geq 0$. Note that $q_\beta^-$ and $q_\beta$ both depend on $x$, but we suppress the dependance from the notation. The exponential term in $q_\beta$ is an unnormalized von Mises–Fisher distribution with mean direction $f(x)$ and ``concentration parameter'' $\beta$~\citep{mardia00}.
There are two key components to $q_\beta^-$, corresponding to each principle: 1) conditioning on the event  $\{ h(x) \neq h(x^-)\}$ which guarantees that $(x,x^-)$ correspond to different latent classes (Principle $1$); 2) the concentration parameter $\beta$ term controls the degree by which $q_\beta$ up-weights points $x^-$ that have large inner product (similarity) to the anchor $x$ (Principle $2$). Since $f$ lies on the surface of a hypersphere of radius $1/t$, we have $\| f(x) - f(x') \|^2 = 2/t^2 - 2f(x)^\top f(x')$ so preferring points with large inner product is equivalent to preferring points with small squared Euclidean distance.

Although we have  designed $q_\beta^-$ to have all of the desired components, it is not clear how to sample efficiently from it. To work towards a practical method, note  that we can rewrite this distribution by adopting a PU-learning viewpoint \citep{elkan2008learning,du2014analysis,chuang2020debiased}. That is, by conditioning on the event $\{h(x) = h(x^-) \}$ we can split $q_\beta(x^-)$ as
\begin{equation}\label{eqn: PU equation}  q_\beta(x^-) =   \tau^-  q_\beta^-(x^-) + \tau^+ q^+_\beta(x^-),
\end{equation}
where $q^+_\beta(x^-)=  q_\beta(x^- | h(x) = h(x^-)) \propto e^{\beta f(x)^\top f(x^-)}\cdot  p^+(x^-) $. Rearranging~\eqref{eqn: PU equation} yields a formula $q_\beta^-(x^-) = \bigl ( q_\beta(x^-) - \tau^+ q^+_\beta(x^-) \bigr) / \tau^-$  for the negative sampling distribution $q_\beta^-$ in terms of two distributions that are tractable since we have samples from $p$ and can approximate samples from $p^+$ using a set of semantics-preserving transformations, as is typical in contrastive learning methods.

It is possible to generate samples from $q_\beta$ and (approximately from) $q_\beta^+$ using rejection sampling. However, rejection sampling involves an algorithmic complication since the procedure for sampling batches must be modified. To avoid this, we instead take an importance sampling approach. To obtain this, first note that fixing the number $Q$ and taking the limit $N \rightarrow \infty$ in the objective (\ref{eqn: practical objective}) yields,
\begin{equation}
\mathcal L(f,q) =  \mathbb{E}_{\substack{x \sim p \\ x^+ \sim p_x^+ }} \left [-\log \frac{e^{f(x)^T f(x^+)}}{e^{f(x)^T f(x^+)} + Q \mathbb{E}_{x^- \sim q} [e^{f(x)^T f(x^-)}] } \right ].
\end{equation}

The original objective (\ref{eqn: practical objective}) can be viewed as a finite negative sample approximation to $\mathcal L(f,q)$ (note implicitly $\mathcal L(f,q)$ depends on $Q$) . Inserting $q=q_\beta^-$  and using the rearrangement of equation  (\ref{eqn: PU equation}) we obtain the following hardness-biased objective:
\begin{equation}
 \mathbb{E}_{\substack{x \sim p \\ x^+ \sim p_x^+ }} \left [-\log \frac{e^{f(x)^T f(x^+)}}{e^{f(x)^T f(x^+)} + \frac{Q}{\tau^-}(\mathbb{E}_{x^- \sim q_\beta} [e^{f(x)^T f(x^-)}] - \tau^+ \mathbb{E}_{v \sim q_\beta^+} [e^{f(x)^T f(v) }])} \right ].
 \label{eqn: main objective}
\end{equation}

This  objective suggests that we need only to approximate \emph{expectations}  $\mathbb{E}_{x^- \sim q_\beta} [e^{f(x)^T f(x^-)}]$ and $\mathbb{E}_{v \sim q_\beta^+} [e^{f(x)^T f(v) }]$ over $q_\beta$ and $q_\beta^+$ (rather than explicily sampling). This can be achieved using classical Monte-Carlo importance sampling techniques using samples from $p$ and $p^+$ as follows: 
\begin{align*}
\mathbb{E}_{x^- \sim q_\beta} [e^{f(x)^T f(x^-)}] &= \mathbb{E}_{x^- \sim p} [e^{f(x)^T f(x^-)} q_\beta/p] =  \mathbb{E}_{x^- \sim p} [e^{(\beta+1) f(x)^T f(x^-)} /  Z_\beta],  \\
\mathbb{E}_{v \sim q_\beta^+} [e^{f(x)^T f(v) }] &= \mathbb{E}_{v \sim p^+} [e^{f(x)^T f(v)} q_\beta^+/p^+] =  \mathbb{E}_{v \sim p^+} [e^{(\beta+1) f(x)^T f(v)} / Z_\beta^+],
\end{align*}
where $Z_\beta, Z_\beta^+$ are the partition functions of $q_\beta$ and $q_\beta^+$ respectively. The right hand terms readily admit empirical approximations by replacing $p$ and $p^+$ with  $\hat{p}(x)=\frac{1}{N} \sum_{i=1}^N  \delta_{x_i^-} (x) $ and $\hat{p}^+(x) = \frac{1}{M} \sum_{i=1}^M  \delta_{ x_i^+} (x)$ respectively ($\delta_w$ denotes the Dirac delta function centered at $w$). The only unknowns left are the partition functions,  $Z_\beta=\mathbb{E}_{x^- \sim p} [e^{\beta f(x)^T f(x^-)}]$ and $ Z_\beta^+=\mathbb{E}_{x^+ \sim p^+} [e^{\beta f(x)^T f(x^+)}]$ which themselves are expectations over $p$ and $p^+$ and therefore admit empirical estimates,
\begin{align*}
\widehat{Z}_\beta = \frac{1}{N}  \sum_{i=1}^N e^{\beta f(x)^\top f(x_i^-)} ,  
\qquad
 \widehat{Z}_\beta^+ = \frac{1}{M}  \sum_{i=1}^M e^{ \beta f(x)^\top f(x_i^+)}.
\end{align*}

It is important to emphasize the simplicity of the implementation of our proposed approach. Since we propose to reweight the objective instead of modifying the sampling procedure, only two extra lines of code are needed to implement our approach, with no additional computational overhead. PyTorch-style pseudocode for the objective is given in  Fig. \ref{fig_objective_code} in Appendix \ref{ap: experiments}.

\section{Analysis of Hard Negative Sampling}
\vspace*{-5pt}
\subsection{Hard Sampling Interpolates Between Marginal and Worst-Case Negatives}
Intuitively, the concentration parameter $\beta$  in our proposed negative sample distribution $q^-_\beta$ controls the level of ``hardness'' of the negative samples. As discussed earlier, the debiasing method of~\cite{chuang2020debiased} can be recovered as a special case: taking $\beta=0$ to obtain the distribution $q^-_0$. This case amounts to correcting for the fact that some samples in a negative batch sampled from $p$ will have the same label as the anchor. But what interpretation does large $\beta$ admit? Specifically, what does the distribution $q^-_\beta$ converge to in the limit $\beta \rightarrow \infty$, if anything? We show that in the limit $q^-_\beta$ approximates an inner solution to the following zero-sum two player game.
\begin{equation}
 \inf_f \sup_{q \in \Pi} \bigg \{ \mathcal{L}(f,q) = \mathbb{E}_{\substack{x \sim p \\ x^+ \sim p_x^+ }} \bigg [-\log \frac{e^{f(x)^T f(x^+)}}{e^{f(x)^T f(x^+)} +Q \mathbb{E}_{x^- \sim q} [e^{f(x)^T f(x^-)}] } \Big ] \bigg \}.
\end{equation}

where $\Pi = \{ q=q(\cdot ; x, f) : \text{supp}  \left (q(\cdot ; x, f) \right ) \subseteq \{x' \in \mathcal X : x' \nsim x\}, \forall x \in \mathcal X\}$ is the set of distributions with support that is disjoint from points with the same class as $x$ (without loss of generality we assume $ \{x' \in \mathcal X : x' \nsim x\}$ is non-empty). Since $q=q(\cdot ; x, f)$ depends on $x$ and $f$ it can be thought of as a family of distributions. The formal statement is as follows. 
\begin{prop}\label{prop: beta convergence}
Let $\mathcal{L}^*(f) = \sup_{q \in \Pi} \mathcal{L} ( f, q)$. Then for any $t>0$ and  $f : \mathcal X \rightarrow \mathbb{S}^{d-1}/t$ we observe the convergence $ \mathcal{L}(f,q^-_\beta)   \longrightarrow \mathcal{L}^*(f) $ as $\beta \rightarrow \infty$.
\end{prop}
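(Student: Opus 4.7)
The plan is to prove the proposition in three steps: first characterize $\mathcal{L}^*(f)$ as a pointwise-in-$x$ supremum, then establish pointwise convergence $\mathbb{E}_{x^-\sim q_\beta^-}[e^{f(x)^T f(x^-)}] \to M(x)$ where $M(x)$ denotes that pointwise supremum, and finally transfer the limit through the outer $(x,x^+)$ expectation by bounded convergence.

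\textbf{Step 1 (decoupled characterization of $\mathcal{L}^*$).} The map $s \mapsto -\log \frac{e^{f(x)^T f(x^+)}}{e^{f(x)^T f(x^+)} + Q s}$ is strictly increasing in $s \ge 0$, so $\mathcal{L}(f,q)$ depends monotonically on the pointwise quantity $S_q(x) := \mathbb{E}_{x^-\sim q(\cdot;x,f)}[e^{f(x)^T f(x^-)}]$. Because each $q \in \Pi$ may be chosen independently for each $x$, the inner supremum decouples and
\[ \mathcal{L}^*(f) = \mathbb{E}_{x,x^+}\!\left[-\log \frac{e^{f(x)^T f(x^+)}}{e^{f(x)^T f(x^+)} + Q\, M(x)}\right], \qquad M(x) := \sup_{q \in \Pi_x} \mathbb{E}_{x^- \sim q}\!\left[e^{f(x)^T f(x^-)}\right], \]
where $\Pi_x$ denotes distributions supported on $\{x^- : x^- \nsim x\}$. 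Using $\mathrm{supp}(p) = \mathcal{X}$, I would identify $M(x) = \esssup_{x^- \sim p_x^-} e^{f(x)^T f(x^-)}$: the $\leq$ direction is clear, and the $\geq$ direction is obtained by concentrating $q$ on the positive-$p_x^-$-mass set $\{e^{f(x)^T f(\cdot)} \geq M(x) - \epsilon\}$ and then sending $\epsilon \downarrow 0$.

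\textbf{Step 2 (pointwise convergence under $q_\beta^-$).} Writing $h_x(x^-) := e^{f(x)^T f(x^-)}$, the density formula $q_\beta^-(x^-) \propto e^{\beta f(x)^T f(x^-)} p_x^-(x^-)$ gives
\[ \mathbb{E}_{x^-\sim q_\beta^-}[h_x(x^-)] = \frac{\mathbb{E}_{p_x^-}[h_x^{\beta+1}]}{\mathbb{E}_{p_x^-}[h_x^{\beta}]}, \]
a classical ratio of moments. Its limit as $\beta \to \infty$ equals $M(x)$: the upper bound $\mathbb{E}[h_x^{\beta+1}]/\mathbb{E}[h_x^\beta] \leq M(x)$ is immediate, while for the lower bound, fix $\epsilon > 0$ and split both integrals on $A_\epsilon := \{h_x \geq M(x) - \epsilon\}$ versus $A_\epsilon^c$. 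On $A_\epsilon^c$ we have $h_x \leq M(x) - \epsilon$, while on $A_\epsilon$ we have $h_x \geq M(x) - \epsilon$ and $p_x^-(A_\epsilon) > 0$ (by the definition of essential supremum). The exponential tilt then forces the tilted measure $h_x^\beta p_x^- / Z_\beta$ to place vanishing mass on $A_\epsilon^c$, yielding $\liminf_\beta \mathbb{E}_{q_\beta^-}[h_x] \geq M(x) - \epsilon$, and $\epsilon \downarrow 0$ closes the gap.

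\textbf{Step 3 (passing the limit through the outer expectation).} Since $f$ takes values in $\mathbb{S}^{d-1}/t$, the inner products $f(x)^T f(x^\pm)$ lie in $[-1/t^2, 1/t^2]$, so $h_x$ is uniformly bounded above and bounded away from zero, uniformly in $x$. Consequently $\mathbb{E}_{q_\beta^-}[h_x] \in [e^{-1/t^2}, e^{1/t^2}]$ and the integrand $-\log \frac{e^{f(x)^T f(x^+)}}{e^{f(x)^T f(x^+)} + Q\, \mathbb{E}_{q_\beta^-}[h_x]}$ is uniformly bounded in $\beta$. Bounded convergence (applied to the outer probability measure in $(x,x^+)$) then transfers the pointwise limit from Step 2 to give $\mathcal{L}(f,q_\beta^-) \to \mathcal{L}^*(f)$.

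The main obstacle is Step 2: identifying $\lim_\beta \mathbb{E}[h_x^{\beta+1}]/\mathbb{E}[h_x^\beta]$ with $M(x)$. It is essentially a Laplace-type asymptotic and is elementary because $h_x$ is bounded and $p_x^-$ charges every neighborhood of its essential supremum by definition; but the tilted-measure bookkeeping and the interface with the variational definition of $M(x)$ in Step 1 (to confirm the two suprema coincide) need to be carried out carefully.
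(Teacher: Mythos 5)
Your proposal is correct and follows essentially the same route as the paper's proof: both identify $\mathcal{L}^*(f)$ with the essential-supremum expression, prove pointwise convergence of $\mathbb{E}_{x^-\sim q_\beta^-}[e^{f(x)^T f(x^-)}]$ to that essential supremum by the same exponential-tilting argument (splitting on the event $\{h_x \geq M(x)-\varepsilon\}$ and showing the tilted measure places vanishing mass on its complement), and then pass the limit through the outer expectation by dominated/bounded convergence. Your ratio-of-moments packaging of Step 2 and your direct appeal to bounded convergence in Step 3 (where the paper instead uses a Lipschitz bound on the logarithm before applying dominated convergence) are only cosmetic variations.
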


\vspace{-10pt}
\begin{proof}
See Appendix \ref{appendix: first proof}.
\end{proof}

\vspace{-8pt}
To develop a better intuitive understanding of the worst case negative distribution objective $  \mathcal{L}^*(f) = \sup_{q \in \Pi} \mathcal{L} ( f, q)$,  we note that the  supremum can be characterized analytically. Indeed,

\vspace{-8pt}
\begin{align*}
 \sup_{q \in \Pi} \mathcal{L}(f,q)  &= -  \mathbb{E}_{\substack{x \sim p \\ x^+ \sim p_x^+ }} f(x)^T f(x^+) + \sup_{q \in \Pi}   \mathbb{E}_{\substack{x \sim p \\ x^+ \sim p_x^+ }} \log \left \{ e^{f(x)^T f(x^+)} +Q \mathbb{E}_{x^- \sim q} [e^{f(x)^T f(x^-)}]  \right \} \\
&= - \mathbb{E}_{\substack{x \sim p \\ x^+ \sim p_x^+ }} f(x)^T f(x^+) +    \mathbb{E}_{\substack{x \sim p \\ x^+ \sim p_x^+ }} \log  \Big \{ e^{f(x)^T f(x^+)} + Q \cdot \sup_{q \in \Pi} \mathbb{E}_{x^- \sim q} [e^{f(x)^T f(x^-)}]   \Big \}.
\end{align*}

The supremum over $q$ can be pushed inside the expectation since $q$ is a family of distribution indexed by $x$, reducing the problem to maximizing $\mathbb{E}_{x^- \sim q} [e^{f(x)^T f(x^-)}]$, which is solved by any $q^*$ whose support is a subset of $\arg \sup_{x^- : x^- \nsim x} e^{f(x)^T f(x^-)}$ if the supremum is attained. However, computing such points involves maximizing a neural network. Instead of taking this challenging route, using $q^-_\beta$ defines a lower bound by placing higher probability on $x^-$ for which $f(x)^T f(x^-)$ is large. This lower bound becomes tight as $\beta \rightarrow \infty$ (Proposition \ref{prop: beta convergence}).

\subsection{Optimal Embeddings on the Hypersphere for Worst-Case Negative Samples}

What desirable properties does an optimal contrastive embedding (global minimizer of $\mathcal L$) possess that make the representation generalizable? To study this question, we first analyze the distribution of an optimal embedding $f^*$ on the hypersphere when negatives are sampled from the adversarial worst-case distribution. We consider a different limiting viewpoint of objective~(\ref{eqn: practical objective}) as the number of negative samples $N\rightarrow \infty$. Following the formulation of \cite{wang2018understanding} we take $Q=N$ in (\ref{eqn: practical objective}), and subtract $\log N$. This changes neither the set of minimizers, nor the geometry of the loss surface. Taking the number of negative samples $N \rightarrow \infty$ yields the limiting objective, 
\begin{equation}
\mathcal L_\infty (f,q) =  \mathbb{E}_{\substack{x \sim p \\ x^+ \sim p_x^+ }} \bigg [-\log \frac{e^{f(x)^T f(x^+)}}{ \mathbb{E}_{x^- \sim q} [e^{f(x)^T f(x^-)}] } \bigg ] .
\end{equation}
\begin{thm}\label{thm: back packing}
Suppose the downstream task is classification (i.e. $\mathcal C$ is finite), and let $ \mathcal{L}_\infty^* ( f) = \sup_{q \in \Pi} \mathcal{L}_\infty ( f, q)$ . The infimum $ \inf_{f: \; \text{measurable} } \mathcal{L}_\infty^* ( f)$  is attained, and any $f^*$ achieving the global minimum is such that $f^*(x) = f^*(x^+)$ almost surely. Furthermore, letting $\bold v_c = f^*(x)$ for any $x$ such that $h(x)=c$ (so $\bold v_c$ is well defined up to a set of $x$ of measure zero), $f^*$ is characterized as being any solution to the following ball-packing problem,

\vspace{-10pt}
\begin{equation} \max_{ \{ \bold v_c \in \mathbb{S}^{d-1}/t \}_{c \in \mathcal C}} \sum_{c \in \mathcal C} \rho(c) \cdot \min_{c' \neq c} \| \bold v_c - \bold v_{c'} \| ^2 . 
\label{eqn: repulsion eqn}
\end{equation}
\end{thm}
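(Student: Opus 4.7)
The plan is to first collapse $\mathcal{L}_\infty^*(f)$ to a closed form. Because $\Pi$ consists of distributions $q(\cdot;x,f)$ supported in $\{x'\nsim x\}$ (a pointwise family in $x$), and because $\sup_q \mathbb{E}_{x^-\sim q}[e^{f(x)^\top f(x^-)}]=\exp(\sup_{x'\nsim x} f(x)^\top f(x'))$, I can push the supremum inside the outer expectation to obtain
\[
\mathcal{L}_\infty^*(f)=-\mathbb{E}_{x,x^+}[f(x)^\top f(x^+)]+\mathbb{E}_x\!\left[\sup_{x^-\nsim x} f(x)^\top f(x^-)\right].
\]
Using $a^\top b=1/t^2-\tfrac12\|a-b\|^2$ on $\mathbb{S}^{d-1}/t$, I would rewrite this as
\[
\mathcal{L}_\infty^*(f)=\tfrac12\mathbb{E}_{x,x^+}\|f(x)-f(x^+)\|^2\;-\;\tfrac12\mathbb{E}_x\inf_{x^-\nsim x}\|f(x)-f(x^-)\|^2,
\]
cleanly separating alignment and worst-case separation.

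\textbf{Alignment forces class-constancy.} The first term is nonnegative and vanishes iff $f(x)=f(x^+)$ a.s., i.e.\ $f$ is essentially constant on each class. I would show any global minimizer $f^*$ must have this property by a surgical substitution: suppose instead some class $c_0$ has $\|\bar f^*_{c_0}\|<1/t$, where $\bar f^*_c=\mathbb{E}[f^*(x)\mid h(x)=c]$. Let $S_c$ denote the essential image of $f^*$ on class $c$. Replace $f^*$ on $h^{-1}(c_0)$ by a constant $v\in S_{c_0}$ chosen to (approximately) minimize $\sup_{c'\neq c_0,\,v'\in S_{c'}} v^\top v'$. Then (i) the alignment term drops by $\rho(c_0)(1/t^2-\|\bar f^*_{c_0}\|^2)>0$; (ii) for every anchor $x$ in a class $c\neq c_0$, $f^*(x)^\top v\le\sup_{v'\in S_{c_0}} f^*(x)^\top v'$ because $v\in S_{c_0}$; and (iii) for anchors $x\in h^{-1}(c_0)$, the new separation is the constant $\sup_{c',v'} v^\top v'$ which is at most the expected original separation, since the infimum over $S_{c_0}$ is dominated by any average over $\mu^*_{c_0}$. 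The net result is a strict decrease in $\mathcal{L}_\infty^*$, contradicting optimality.

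\textbf{Reduction to ball-packing and attainment.} Once we know any minimizer has the form $f^*(x)=v_{h(x)}^*$ with $\{v_c^*\}\subset\mathbb{S}^{d-1}/t$, the alignment term vanishes and
\[
\mathcal{L}_\infty^*(f^*)=-\tfrac12\sum_{c\in\mathcal C}\rho(c)\min_{c'\neq c}\|v_c^*-v_{c'}^*\|^2,
\]
so minimizing $\mathcal{L}_\infty^*$ over $f$ is equivalent to the ball-packing problem in~(\ref{eqn: repulsion eqn}). Because $|\mathcal C|$ is finite the domain $(\mathbb{S}^{d-1}/t)^{|\mathcal C|}$ is compact and the objective (a $\rho$-weighted sum of minima of finitely many continuous functions) is continuous, so the maximum is attained; conversely, given any packing optimizer $\{v_c^*\}$, the measurable lift $f^*(x)=v_{h(x)}^*$ achieves $\inf_f\mathcal{L}_\infty^*$.

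\textbf{Main obstacle.} The delicate step is the class-constancy argument. The naive projection $v=\bar f^*_{c_0}/\|\bar f^*_{c_0}\|\cdot(1/t)$ need not lie in $\overline{\mathrm{conv}}(S_{c_0})$ (e.g.\ three asymmetric masses on a spherical triangle), so the obvious substitution could perversely increase the separation term for anchors in other classes. The remedy is to keep $v$ inside $S_{c_0}$ itself, where the pointwise bound in (ii) above is automatic; the finer choice of $v$ within $S_{c_0}$ is a near-infimizer whose $\varepsilon$-error is absorbed by the strict alignment gain. A secondary technicality is the measurability of $x\mapsto\sup_{x'\nsim x}f(x)^\top f(x')$, which follows from a standard measurable-selection argument once $\mathcal C$ is finite and stratifies $\mathcal X$.
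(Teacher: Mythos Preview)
Your approach is essentially the paper's: both decompose $\mathcal{L}_\infty^*$ into alignment plus worst-case separation, and both establish class-constancy by replacing $f$ on a class $c$ with a single constant taken from that class's image $S_c$, chosen to minimize the maximum inner product with the other classes. Your near-infimizer $v\in S_{c_0}$ of $\sup_{c',v'}v^\top v'$ is exactly the paper's choice $f(x_c)$ with $x_c\in\arg\max_{x:h(x)=c}\inf_{x^-\nsim x}\|f(x)-f(x^-)\|^2$; the paper handles attainment of that $\arg\max$ via a closure argument, whereas you absorb the $\varepsilon$-slack into the strict alignment gain, which is arguably cleaner.

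The one packaging difference worth flagging: you frame class-constancy as a contradiction (``if a minimizer were non-constant, improve it strictly''), and then invoke compactness of $(\mathbb{S}^{d-1}/t)^{|\mathcal C|}$ for attainment of the ball-packing problem. But the contradiction step presupposes a minimizer of $\mathcal{L}_\infty^*$ exists, so as written there is a small circularity---you have not yet established $\inf_f\mathcal{L}_\infty^*(f)=\inf_{\text{class-const. }f}\mathcal{L}_\infty^*(f)$, which is what you need for the ball-packing lift to actually achieve the infimum. The paper sidesteps this by applying the same substitution to an \emph{arbitrary} $f$ (not a putative minimizer) and iterating the construction over all classes $c_1,\dots,c_{|\mathcal C|}$ to produce a class-constant $\tilde f$ with $\mathcal{L}_{\text{unif}}^*(\tilde f)\le\mathcal{L}_{\text{unif}}^*(f)$; this gives the equality of infima directly, after which compactness yields existence. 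Your one-class improvement step supports this iterative reframing without any change to the construction.
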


\vspace{-10pt}
\begin{proof}
See Appendix \ref{appendix: second proof}.
\end{proof}
\vspace{-15pt}

\paragraph{Interpretation:} The first component of the result is that $f^*(x) = f^*(x^+)$ almost surely for an optimal $f^*$. That is, an optimal embedding $f^*$ must be invariant across pairs of similar inputs $x,x^+$. The second component is characterizing solutions via the classical geometrical Ball-Packing Problem of \cite{tammes1930origin} (Eq. \ref{eqn: repulsion eqn}) that has only been solved exactly for uniform $\rho$, for specific of  $\abs{\mathcal C}$ and typically for $\mathbb{S}^{2}$ \citep{schutte1951kugel,musin2015tammes,tammes1930origin}. When the distribution $\rho$ over classes is uniform this problem is solved by a set of  $\abs{\mathcal C}$ points on the hypersphere such that the average squared-$\ell_2$ distance from a point to the nearest other point is as large as possible. In other words, suppose we wish to place $\abs{\mathcal C}$  number of balls\footnote{For a manifold $\mathcal M \subseteq \mathbb{R}^d $,  we say $C \subset \mathcal M$ is a ball if it is connected, and there exists a Euclidean ball $\mathcal B = \{ x \in \mathbb{R}^d : \| x \|_2 \leq R\}$  for which $C = \mathcal M \cap \mathcal B$.} on $ \mathbb{S}^{d-1}$ so that they do not intersect. Then solutions to Tammes' Problem (\ref{eqn: repulsion eqn}) expresses (twice) the largest possible average squared radius that the balls can have.  So, we have a ball-packing problem where instead of trying to pack as many balls as possible of a fixed size, we aim to pack a fixed number of balls (one for each class) to have as big radii as possible. Non-uniform $\rho$ adds importance weights to each fixed ball. In summary, solutions of the  problem $\min_f \mathcal{L}_\infty^*( f)$ are a maximum margin clustering.

This understanding of global minimizers of  $\mathcal{L}_\infty^* ( f) = \sup_{q \in \Pi} \mathcal{L}_\infty ( f, q)$ can further developed into a better understanding of generalization on downstream tasks. The next result shows that representations that achieve small excess risk on the objective $\mathcal L_\infty ^*$ still separate clusters well in the sense that a simple 1-nearest neighbor classifier achieves low classification error. 

\begin{thm}
Suppose $\rho$ is uniform on $\mathcal C$ and $f$ is such that $\mathcal L_\infty ^*(f) - \inf_{\bar{f} \; \text{measurable}} \mathcal L_\infty ^*(\bar{f}) \leq \varepsilon$ with $\varepsilon \leq 1$. Let $ \{ \bold v^*_c \in \mathbb{S}^{d-1}/t \}_{c \in \mathcal C}$ be a solution to Problem \ref{eqn: repulsion eqn}, and define the constant $\xi = \min_{c,c^- : c \neq c^-} \norm{\bold v^*_c -\bold v^*_{c-} } > 0$. Then there exists a set of vectors $ \{ \bold v_c \in \mathbb{S}^{d-1}/t \}_{c \in \mathcal C}$ such that the 1-nearest neighbor classifier $\hat{h}(x) = \arg\min_{\bar{c} \in \mathcal C} \norm{f(x) - \bold v_{\bar{c}} }$ (ties broken arbitrarily) achieves misclassification risk,
\begin{equation*} 
\mathbb P_{x,c}(\hat{h}(x) \neq c) \leq \frac{8\varepsilon}{(\xi^2 - 2\abs{\mathcal C} (1+1/t)\varepsilon^{1/2})^2}
\end{equation*}
\label{generalization thm}
\end{thm}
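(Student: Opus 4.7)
The plan is to convert the excess-risk hypothesis into quantitative alignment and separation bounds on $f$, construct class representatives $\bold v_c$ from class-conditional centroids, and conclude the 1-NN bound via a Markov / triangle-inequality argument. The starting point is a clean rewrite of $\mathcal L_\infty^*(f)$. Since $f$ ranges on the sphere of radius $1/t$, $2 f(x)^\top f(y) = 2/t^2 - \|f(x) - f(y)\|^2$, and the inner supremum over $q \in \Pi$ is attained in the limit by Dirac mass on the nearest opposite-class point. This gives
\[
\mathcal L_\infty^*(f) = \tfrac{1}{2} A(f) - \tfrac{1}{2} B(f),\qquad A(f) := \mathbb E_{x,x^+}\|f(x) - f(x^+)\|^2,\qquad B(f) := \mathbb E_x \inf_{x^- \nsim x}\|f(x) - f(x^-)\|^2.
\]
Theorem~\ref{thm: back packing} then gives $\mathcal L_\infty^*(f^*) = -\tfrac{1}{2} M^*$ where $M^* \geq \xi^2$ under uniform $\rho$, so the excess-risk hypothesis becomes $A(f) + [M^* - B(f)] \leq 2\varepsilon$. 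After verifying that $B(f) \leq M^*$ for every $f$ (via the Jensen argument below), this splits into the two useful bounds $A(f) \leq 2\varepsilon$ and $B(f) \geq M^* - 2\varepsilon$.

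Next, I would build candidate class centers as projected class-conditional centroids. Let $\bold u_c := \mathbb E[f(x) \mid h(x) = c]$ and $\bold v_c := (1/t)\, \bold u_c / \|\bold u_c\|$. Because $\|f(x)\| = 1/t$ almost surely, $\|\bold u_c\|^2 = 1/t^2 - \Var[f \mid h = c]$, which both guarantees well-definedness of the projection and yields the key perturbation estimate $\|\bold u_c - \bold v_c\|^2 \leq \Var[f \mid h = c]$. Combining the identity $A(f) = 2 \mathbb E_c \Var[f \mid h = c]$ with $A(f) \leq 2\varepsilon$ gives $\mathbb E_c \Var[f \mid h = c] \leq \varepsilon$, and under uniform $\rho$ one further gets $\max_c \Var[f \mid h = c] \leq |\mathcal C| \varepsilon$. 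A triangle inequality then yields the second-moment bound $\mathbb E_x \|f(x) - \bold v_{h(x)}\|^2 \leq O(\varepsilon)$, which is the quantity to which I will ultimately apply Markov.

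The heart of the argument is lower-bounding the minimum pairwise separation $d := \min_{c \neq c'} \|\bold v_c - \bold v_{c'}\|$; this is the step I expect to be the main obstacle. From the spherical identity, $\mathbb E_{x^- \mid h(x^-) = c'}\|f(x) - f(x^-)\|^2 = \|f(x) - \bold u_{c'}\|^2 + \Var[f \mid h = c']$, so by Jensen's inequality $\inf_{x^-: h(x^-) = c'}\|f(x) - f(x^-)\|^2 \leq \|f(x) - \bold u_{c'}\|^2 + \Var[f \mid h = c']$. Integrating and minimizing over $c' \neq h(x)$,
\[
B(f) \leq \sum_c \rho(c) \min_{c' \neq c} \|\bold u_c - \bold u_{c'}\|^2 + O(|\mathcal C| \varepsilon),
\]
which, combined with $B(f) \geq M^* - 2\varepsilon$, forces the weighted ball-packing value of $\{\bold u_c\}$ to be within $O(|\mathcal C| \varepsilon)$ of $M^*$. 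A quantitative stability/robustness argument for Tammes' problem~(\ref{eqn: repulsion eqn}) then upgrades this weighted-average bound to the uniform pairwise lower bound $\min_{c \neq c'} \|\bold u_c - \bold u_{c'}\| \geq \xi - O(|\mathcal C| \sqrt{\varepsilon})$; this is where the uniform-prior assumption enters essentially (otherwise one low-weight class could have a close neighbor without much penalty) and where the $|\mathcal C|$ factor in the final rate originates. Since $\|\bold u_c - \bold v_c\| \leq \sqrt{|\mathcal C| \varepsilon}$, the sphere projection perturbs each pairwise distance by at most $2\sqrt{|\mathcal C| \varepsilon}$; squaring and using $\xi \leq 2/t$ to handle the cross term gives the claimed $d^2 \geq \xi^2 - 2|\mathcal C|(1 + 1/t) \sqrt{\varepsilon}$.

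To conclude, I apply Markov plus triangle inequality. The 1-NN classifier is correct whenever $\|f(x) - \bold v_{h(x)}\| < d/2$, because for every $c' \neq h(x)$ one has $\|f(x) - \bold v_{c'}\| \geq d - \|f(x) - \bold v_{h(x)}\| > d/2 \geq \|f(x) - \bold v_{h(x)}\|$. Hence
\[
\mathbb P_{x, c}(\hat{h}(x) \neq c) \leq \mathbb P_x\bigl( \|f(x) - \bold v_{h(x)}\|^2 \geq d^2/4 \bigr),
\]
and a Markov-type bound using the $O(\varepsilon)$ second-moment estimate together with the separation lower bound on $d^2$ delivers the stated rate.
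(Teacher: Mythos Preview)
Your high-level architecture matches the paper's: split the excess risk into an alignment piece and a uniformity piece, build per-class anchors $\bold v_c$, lower-bound their pairwise separation, then finish with Markov and the triangle inequality. The gap is exactly where you flag it. You reduce the separation step to ``a quantitative stability/robustness argument for Tammes' problem,'' passing from an $O(|\mathcal C|\varepsilon)$ deficit in the \emph{averaged} nearest-neighbour squared distance of $\{\bold u_c\}$ to an $O(|\mathcal C|\sqrt\varepsilon)$ deficit in the \emph{minimum} pairwise distance. No such stability result is standard, your $\bold u_c$ are not even feasible for Problem~(\ref{eqn: repulsion eqn}) (they lie in the ball, not on the sphere), and the scaling you assert does not follow from your own estimates: an $O(|\mathcal C|\varepsilon)$ average deficit, amplified by $|\mathcal C|$ under the uniform prior, would give an $O(|\mathcal C|^2\varepsilon)$ pointwise deficit in \emph{squared} distances, not the $O(|\mathcal C|\sqrt\varepsilon)$ you need. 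The $(1+1/t)\sqrt\varepsilon$ shape of the final bound does not arise anywhere in your outline; it has a specific origin that your centroid route bypasses. (Relatedly, your justification of $B(f)\leq M^*$ ``via the Jensen argument below'' does not work---that argument bounds $B(f)$ by a quantity involving $\{\bold u_c\}$, not by $M^*$. The correct reason $B(f)\leq M^*$ is simply that $f^*$ minimizes $\mathcal L_\text{unif}^*$, from Theorem~\ref{thm: back packing}.)

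The paper avoids any black-box Tammes stability. It takes $\bold v_c = f(x_c)$ for a specific $x_c$ in class $c$ minimizing $x^+\mapsto\mathbb E_{x\sim p(\cdot|c)}\|f(x)-f(x^+)\|^2$, so each $\bold v_c$ lies on the sphere automatically and $\{\bold v_c\}$ is feasible for Problem~(\ref{eqn: repulsion eqn}). The key step is then to lower-bound $\mathcal L_\text{unif}^*(f)$ \emph{directly} in terms of this configuration: writing $f(x)^\top f(x^-) = \bold v_c^\top f(x^-) + (f(x)-\bold v_c)^\top f(x^-)$ and applying Cauchy--Schwarz gives
\[
\sup_{x^-\nsim x} f(x)^\top f(x^-) \;\geq\; \max_{c^-\neq c}\bold v_c^\top\bold v_{c^-} \;-\; \tfrac{1}{t}\|f(x)-\bold v_c\|.
\]
Taking expectations and using Jensen to replace $\mathbb E\|f(x)-\bold v_c\|$ by $\sqrt{\mathbb E\|f(x)-\bold v_c\|^2}=O(\sqrt\varepsilon)$ is exactly what produces both the $\sqrt\varepsilon$ and the $1/t$ factor. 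This yields $\mathbb E_c X_c \geq \mathbb E_c X_c^* - 2(1+1/t)\sqrt\varepsilon$ with $X_c = \min_{c'\neq c}\|\bold v_c-\bold v_{c'}\|^2$, while feasibility gives $\mathbb E_c X_c \leq \mathbb E_c X_c^*$; under uniform $\rho$ the average deficit is then amplified by $|\mathcal C|$ to the per-class bound $X_c \geq \xi^2 - 2|\mathcal C|(1+1/t)\sqrt\varepsilon$. Routing through $\mathcal L_\text{unif}^*(f)$ rather than $B(f)$, and choosing $\bold v_c$ to be an actual embedded point (hence on the sphere) rather than a projected centroid, is the missing idea.
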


\vspace{-10pt}
\begin{proof}
See Appendix \ref{generalization theorem}.
\end{proof}
\vspace{-1pt}

In particular, $\mathbb P(\hat{h}(x) \neq c) = \mathcal O(\varepsilon)$ as $\varepsilon \rightarrow 0$, and in the limit $\varepsilon \rightarrow 0$ we recover the invariance claim of Theorem \ref{thm: back packing} as a special case. The result can be generalized to arbitrary $\rho$ by replacing $\abs{\mathcal C}$ in the bound by $1/ \min_c \rho(c)$. The result also implies that it is possible to build simple classifiers for tasks that involve only a subset of classes from $\mathcal C$, or classes that are a union of classes from $\mathcal C$. The constant $\xi = \min_{c,c^- : c \neq c^-} \norm{\bold v^*_c -\bold v^*_{c-} } > 0$ is a purely geometrical property of spheres, and describes the minimum separation distance between a set of points that solves the Tammes' ball-packing problem.

\vspace{+6pt}

\vspace{-5pt}
\section{Empirical Results}
\vspace{-10pt}

Next, we evaluate our hard negative sampling method empirically, and apply it as a modification to state-of-the-art contrastive methods on image, graph, and text data. For all experiments  $\beta$ is treated as a hyper-parameter (see ablations in Fig. \ref{fig: stl10 cifar10 accuracies}  for more understanding of how to pick $\beta$). Values for $M$ and $\tau^+$ must also be determined. We fix $M=1$ for all experiments, since taking $M>1$ would increase the number of inputs for the forward-backward pass. Lemma \ref{lemma: bias/var} in the appendix gives a theoretical justification for the choice of $M=1$. 
Choosing  the class-prior $\tau^+$ can be done in two ways: estimating it from data \citep{christoffel2016class,jain2016estimating}, or treating it as a hyper-parameter. The first option requires the possession of labeled data \emph{before} contrastive training.

\subsection{Image Representations} 
\vspace{-2pt}

We begin by testing the hard sampling method on vision tasks using the STL10, CIFAR100 and CIFAR10 data. We use SimCLR \citep{chen2020simple} as the baseline method, and all models are trained for $400$ epochs. The results in Fig. \ref{fig: stl10 cifar10 accuracies} show consistent improvement over SimCLR ($q=p$) and the particular case of our method with $\beta=0$ proposed in \citep{chuang2020debiased} (called debiasing) on STL10 and CIFAR100. For $N=510$ negative examples per data point we observe absolute improvements of  $3\%$ and $7.3\%$ over SimCLR on CIFAR100 and STL10 respectively, and absolute improvements over the  best debiased baseline of $1.9\%$ and $3.2\%$. On tinyImageNet (Tab. \ref{tab: tinyImageNet}) we observe an absolute improvement of $3.6\%$ over SimCLR, while on CIFAR10 there is a slight improvement for smaller $N$, which disappears at larger $N$. See Appendix \ref{sec: moco} results using MoCo-v2 for large negative batch size, and Appendix \ref{vision experiments} for full setup details.

 \vspace{-5pt}
\begin{wraptable}{r}{5.5cm}
 \vspace{-30pt}
\begin{tabular}{ccc}\\\toprule  
SimCLR & Debiased & Hard ($\beta=1$) \\\midrule
53.4\% & 53.7\% & 57.0\% \\  \bottomrule
\end{tabular}
\caption{Top-1 linear readout on tinyImageNet. Class prior is set to $\tau^+=0.01$. }
 \vspace{-10pt}
 \label{tab: tinyImageNet}
\end{wraptable} 

 \vspace{-3pt}
 \begin{figure}[h]
  \centering
  \includegraphics[width=45mm]{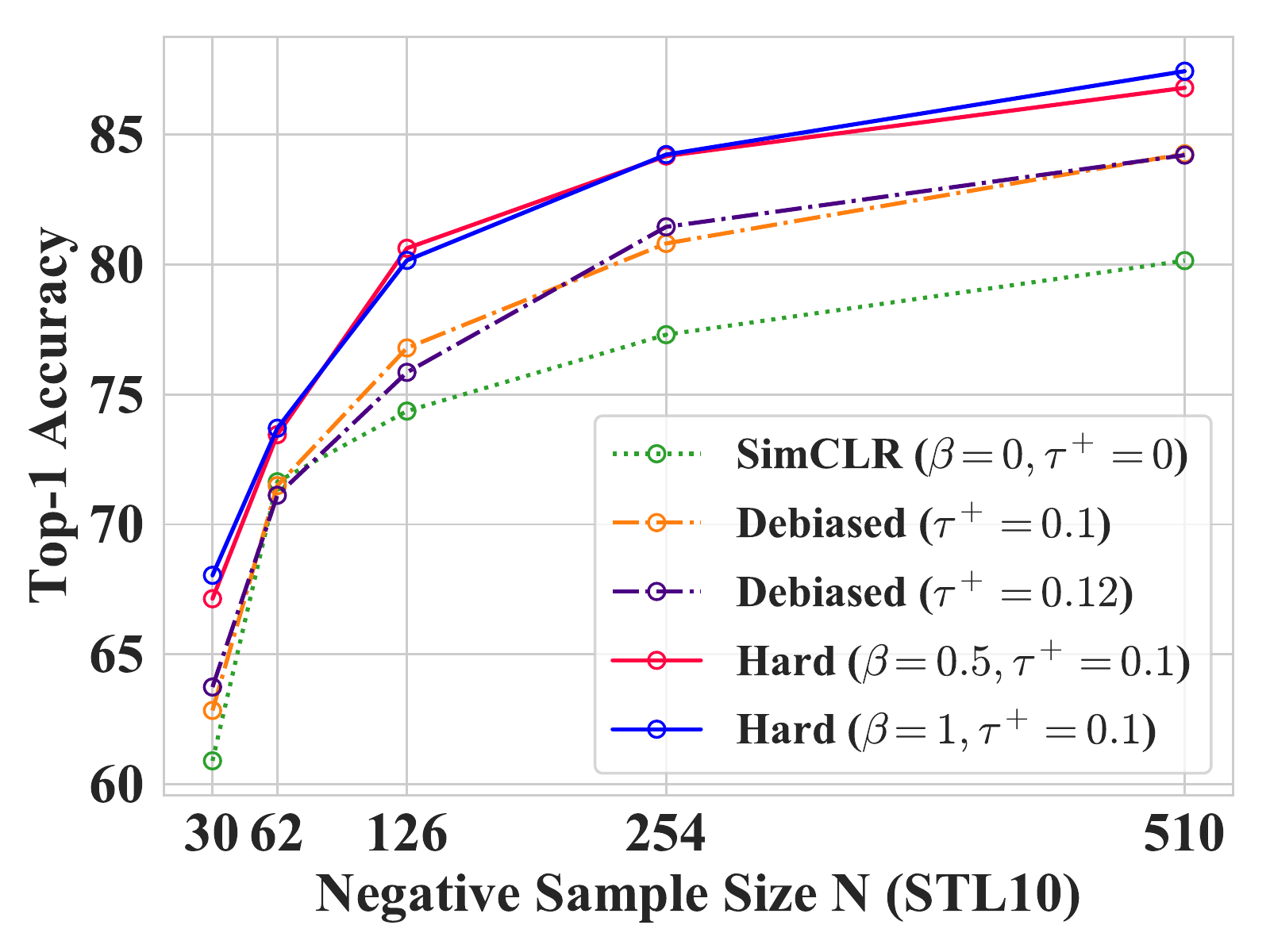}
  \includegraphics[width=45mm]{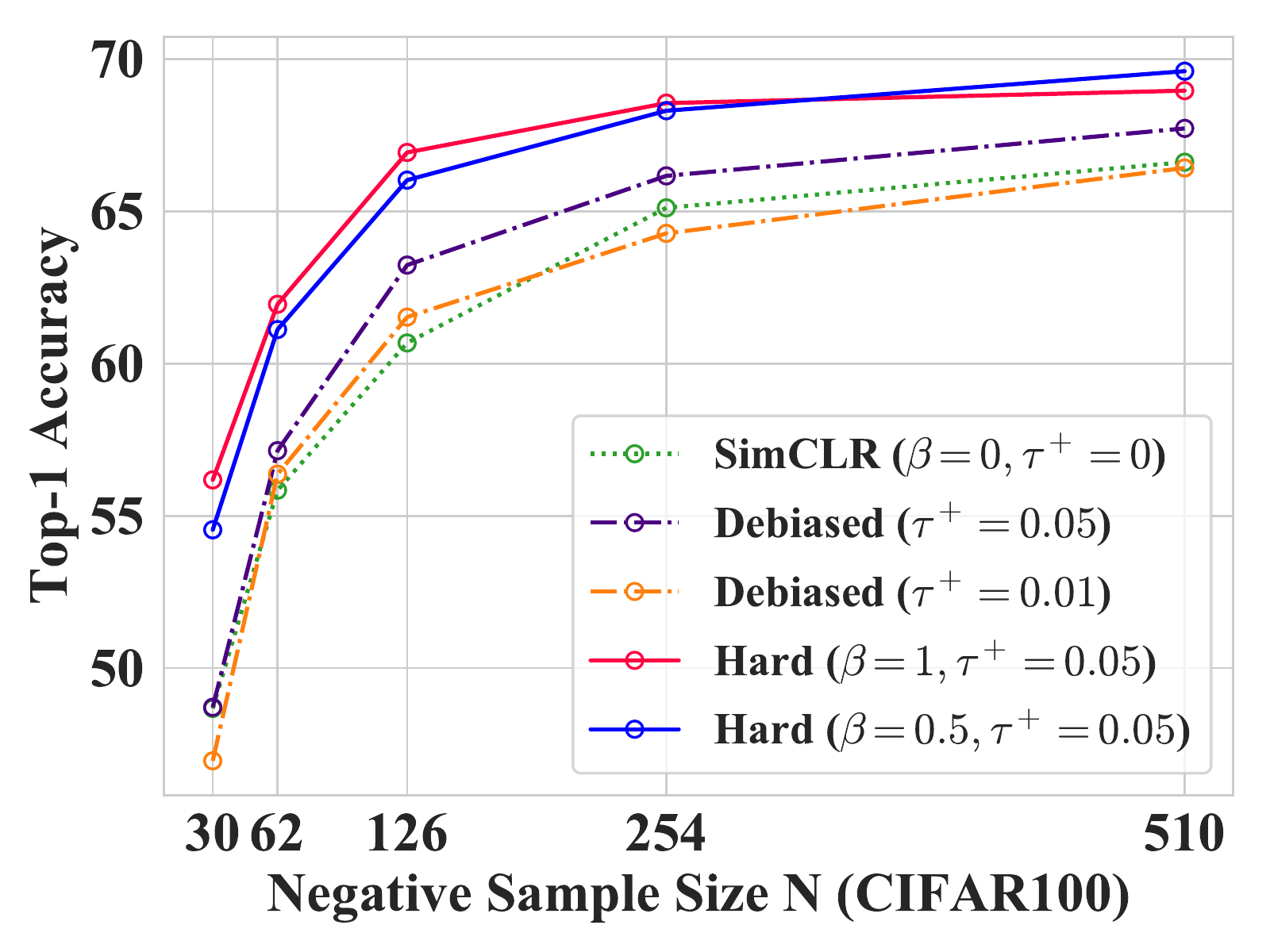}           \includegraphics[width=45mm]{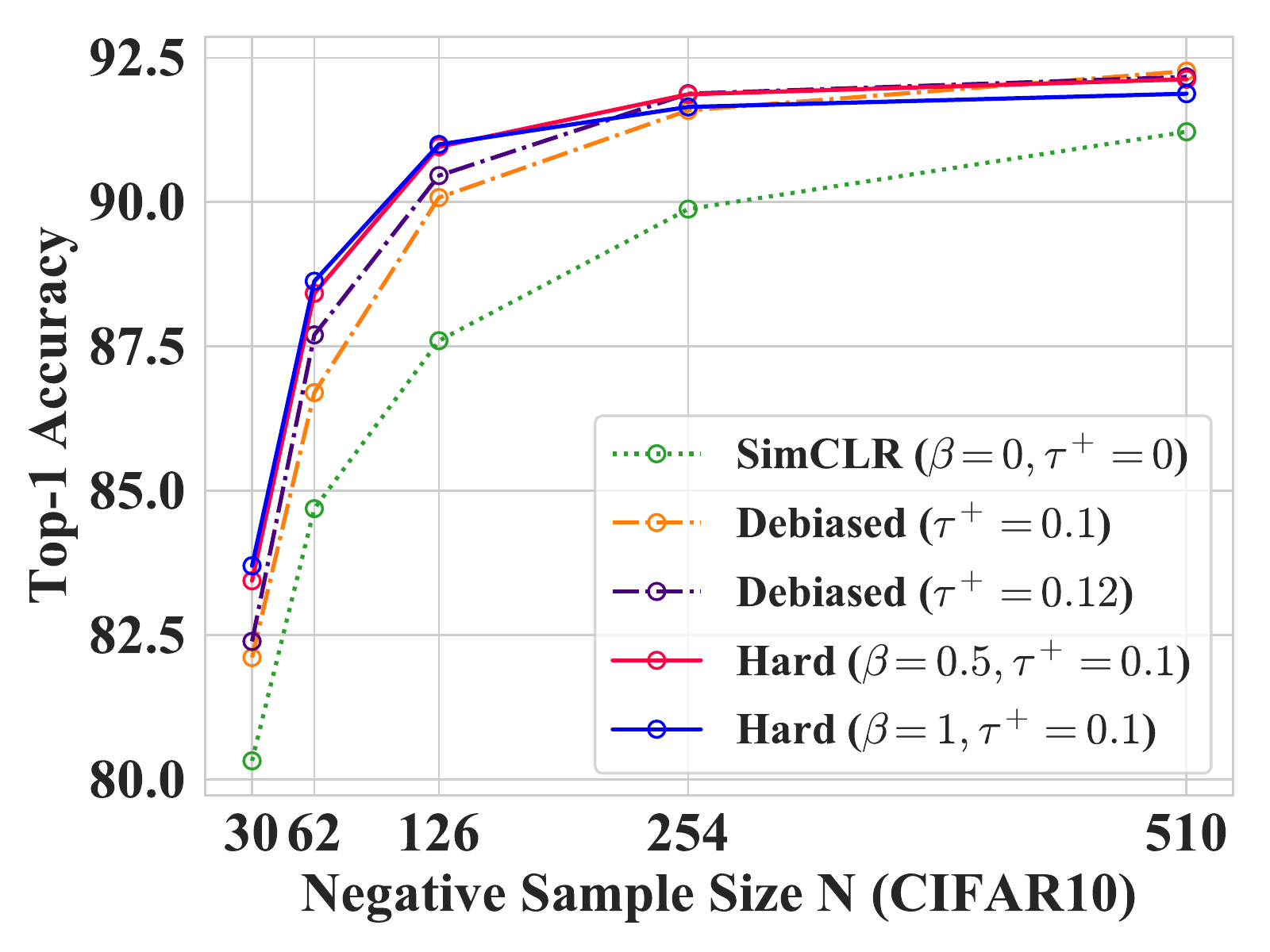}  
 \vspace{-3pt}
    \caption{\textbf{Classification accuracy on downstream tasks.}  Embeddings trained using hard, debiased, and standard ($\beta=0, \tau^+=0$) versions of SimCLR, and evaluated using linear readout accuracy. }%
        \label{fig: stl10 cifar10 accuracies}
 \end{figure}
 \vspace{-10pt}
 
\subsection{Graph Representations } \label{section: graph classification}
\vspace{-6pt}

Second, we consider hard negative sampling in the context of learning graph representations. We use the state-of-the-art InfoGraph method introduced by \cite{sun2019infograph} as the baseline, which is suitable for downstream graph-level classification. 
The objective is of a slightly different form from the NCE loss.
Because of this we use a  generalization of the formulation presented in Section~\ref{out method} (See Appendix~\ref{appendix: graph method} for details). In doing so, we illustrate that it is easy to adapt our hard sampling method to other contrastive frameworks. 

Fig. \ref{graph_table} shows the results of
fine-tuning an SVM \citep{boser1992training,cortes1995support} on the fixed, learned embedding for a range of different values of $\beta$. Hard sampling does as well as InfoGraph in all cases, and better in 6 out of 8 cases. For ENZYMES and REDDIT, hard negative samples improve the accuracy by  $3.2\%$ and $2.4\%$, respectively, for DD and PTC by $1-2\%$, and for IMDB-B and MUTAG by at least $0.5\%$. Usually, multiple different choices of $\beta>0$ were competitive with the InfoGraph baseline: 17 out of the 24 values of $\beta>0$ tried (across all 8 datasets) achieve accuracy as high or better than InfoGraph ($\beta=0$).

 \begin{figure}[t]
  \centering
\includegraphics[width=\textwidth]{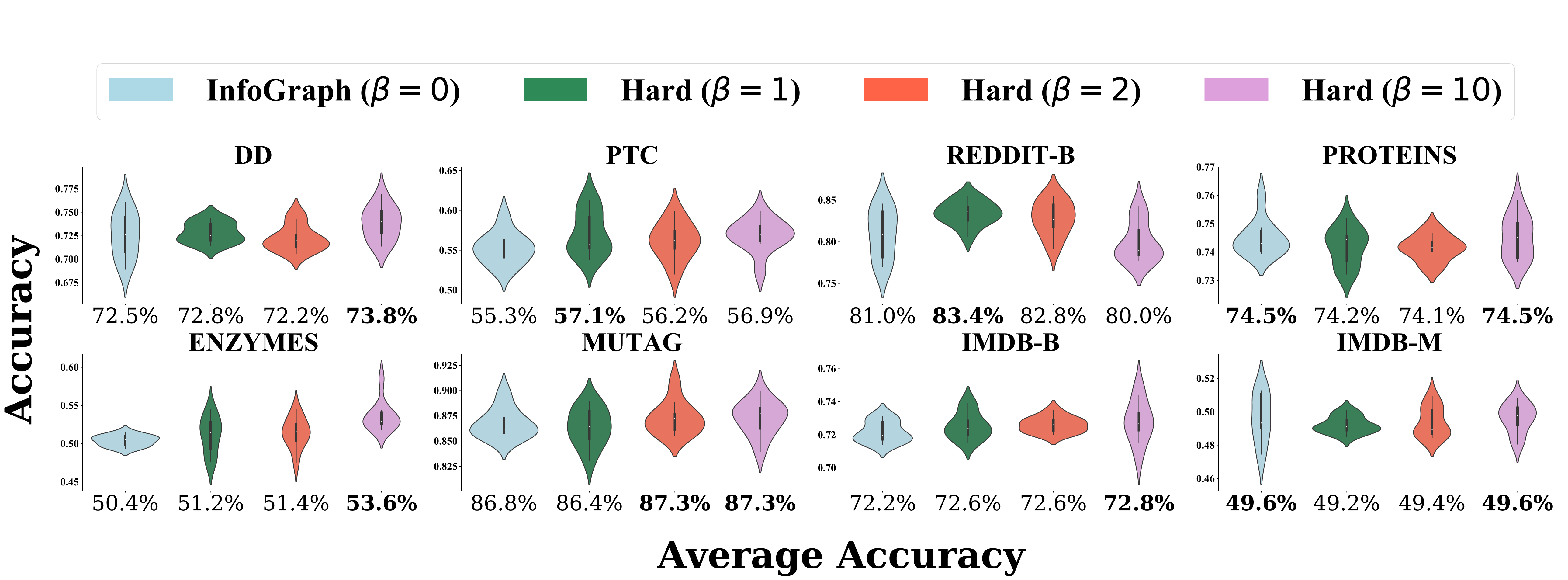}  
\vspace{-10pt}
\caption{\textbf{Classification accuracy on downstream tasks.} We compare graph representations on four classification tasks. Accuracies are obtained by fine-tuning an SVM readout function, and are the average of $10$ runs, each using $10$-fold cross validation. Results in \textbf{bold} indicate best performer.}
\vspace{-10pt}
\label{graph_table}
 \end{figure}

\vspace{-4pt}
\subsection{Sentence Representations}
\vspace{-6pt}

Third, we test hard negative sampling on learning representations of sentences using the \emph{quick-thoughts} (QT) vectors framework introduced by \cite{logeswaran2018efficient}, which uses adjacent sentences (before/after) as positive samples. Embeddings are trained using the unlabeled BookCorpus dataset \citep{kiros2015skip}, and evaluated following the protocol of \cite{logeswaran2018efficient} on six downstream tasks. The results are reported in Table \ref{sent_comp}. Hard sampling outperforms or equals the QT baseline in 5 out of 6 cases, the debiased baseline \citep{chuang2020debiased}  in 4 out of 6, and both in 3 out of 6 cases. Setting $\tau^+ >0$ led to numerical issues in optimization for hard sampling. 
\begin{table*}[!ht]
\small
\begin{center}
\begin{tabularx}{0.80\textwidth}{l| *{7}{c}}
\hline
 \fontsize{9pt}{9pt}\selectfont\textbf{Objective} &  \fontsize{9pt}{9pt}\selectfont\textbf{MR} & \fontsize{9pt}{9pt}\selectfont\textbf{CR} & \fontsize{9pt}{9pt}\selectfont\textbf{SUBJ} & \fontsize{9pt}{9pt}\selectfont\textbf{MPQA} & \fontsize{9pt}{9pt}\selectfont\textbf{TREC} & \multicolumn{2}{c}{\fontsize{9pt}{9pt}\selectfont\textbf{MSRP}}
 \\
& & & & & &  \fontsize{9pt}{9pt}\selectfont\textbf{(Acc)} & \fontsize{9pt}{9pt}\selectfont\textbf{(F1)} \\
\hline
\hline
QT ($\beta=0,  \tau^+ = 0$) & 76.8 & 81.3 & 86.6 & 93.4 & \textbf{89.8} & 73.6 & 81.8
\\
Debiased ($\tau^+ = 0.01$) & 76.2 & 82.9 & 86.9 & \textbf{93.7} & 89.1 & \textbf{74.7} & \textbf{82.7}
\\
\hline
Hard ($\beta=1,\tau^+ = 0$) & 77.1 &	 82.5 & \textbf{87.0} & 92.9 & 89.2	& 73.9 & 82.2
\\
Hard ($ \beta=2, \tau^+ = 0$) & \textbf{77.4} & \textbf{83.6} & 86.8 & 93.4 & 88.7 & 73.5 & 82.0	
\\
\hline
\end{tabularx}
\end{center}
\vspace{-6pt}
\caption{\textbf{Classification accuracy on downstream tasks.} Sentence representations are learned using quick-thoughts (QT) vectors on the BookCorpus dataset and evaluated on six classification tasks.  Evaluation of binary classification tasks (MR, CR, SUBJ, MPQA) uses 10-fold cross validation.
}
\label{sent_comp}
\end{table*}
\vspace{-12pt}

 \vspace{-5pt}
\section{A Closer Look at Hard Sampling}
\vspace{-2pt}

\subsection{Are Harder Samples Necessarily Better?}\label{sec: harder is not necessarily better}
\vspace{-4pt}

By setting $\beta$ to large values, one can focus on only the hardest samples in a training batch. But is this desirable? Fig. \ref{fig:stl ablation} (left, middle)  shows that for vision problems, taking larger $\beta$ does not necessarily lead to better representations. In contrast, when one uses true positive pairs during training (green curve, uses label information for positive but not negative pairs), the downstream performance monotonically increases with $\beta$ until convergence (Fig. \ref{fig:stl ablation} , middle). Interestingly, this is achieved without using label information for the negative pairs.
This observation suggests an explanation for why bigger $\beta$ hurts performance in practice. Debiasing (conditioning on the event  $\{ h(x) \neq h(x^-)\}$) using the true $p^+$ corrects for sampling $x^-$ with the same label as $x$. However, since in practice we approximate $p^+$ using a set of data transformations, we can only partially correct. This is harmful for large $\beta$ since this regime strongly prefers $x^-$ for which $f(x^-)$ is close to $f(x)$, many of whom will have the same label as $x$ if not corrected for. We note also that by annealing $\beta$ (gradually decreasing $\beta$ to $0$ throughout training; see Appendix \ref{vision experiments} for details) it is possible to be more robust to the choice of initial $\beta$, with marginal impact on downstream accuracy compared to the best fixed value of $\beta$.

\subsection{Does Avoiding False Negatives Improve Hard Sampling?}
\vspace{-4pt}

Our proposed hard negative sampling method conditions on the event  $\{ h(x) \neq h(x^-)\}$ in order to avoid false negatives (termed ``debiasing'' \citep{chuang2020debiased}). But does this help? To test this, we train four embeddings: hard sampling with and without debiasing, and uniform sampling ($\beta=0$) with and without debiasing. The results in Fig. \ref{fig:stl ablation} (right) show that hard sampling with debiasing obtains the highest linear readout accuracy on STL10, only using hard sampling or only debiasing yields (in this case) similar accuracy. All improve over the SimCLR baseline.

 \begin{figure}[t]
  \centering
 \includegraphics[width=0.32\textwidth]{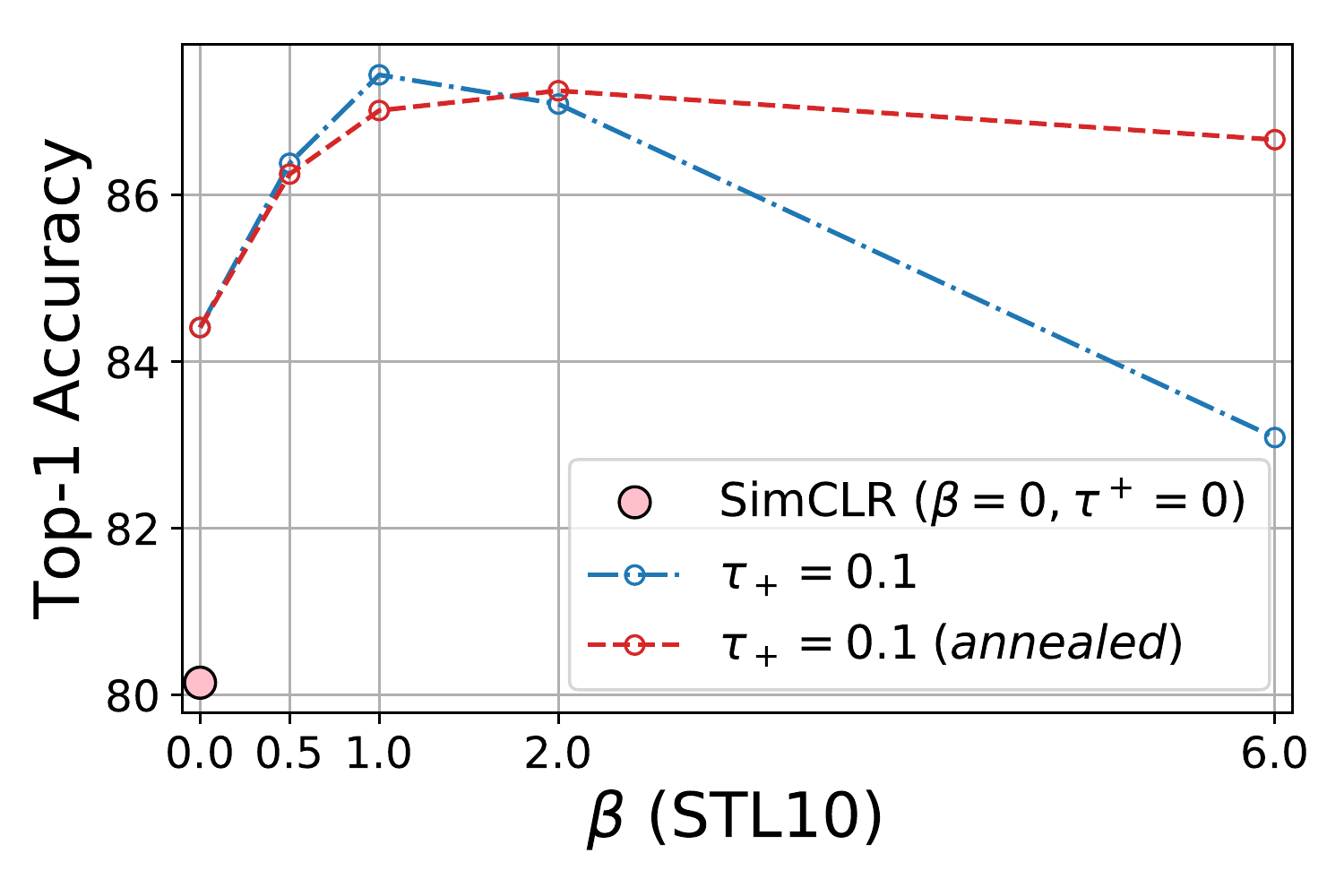}  
 \includegraphics[width=0.32\textwidth]{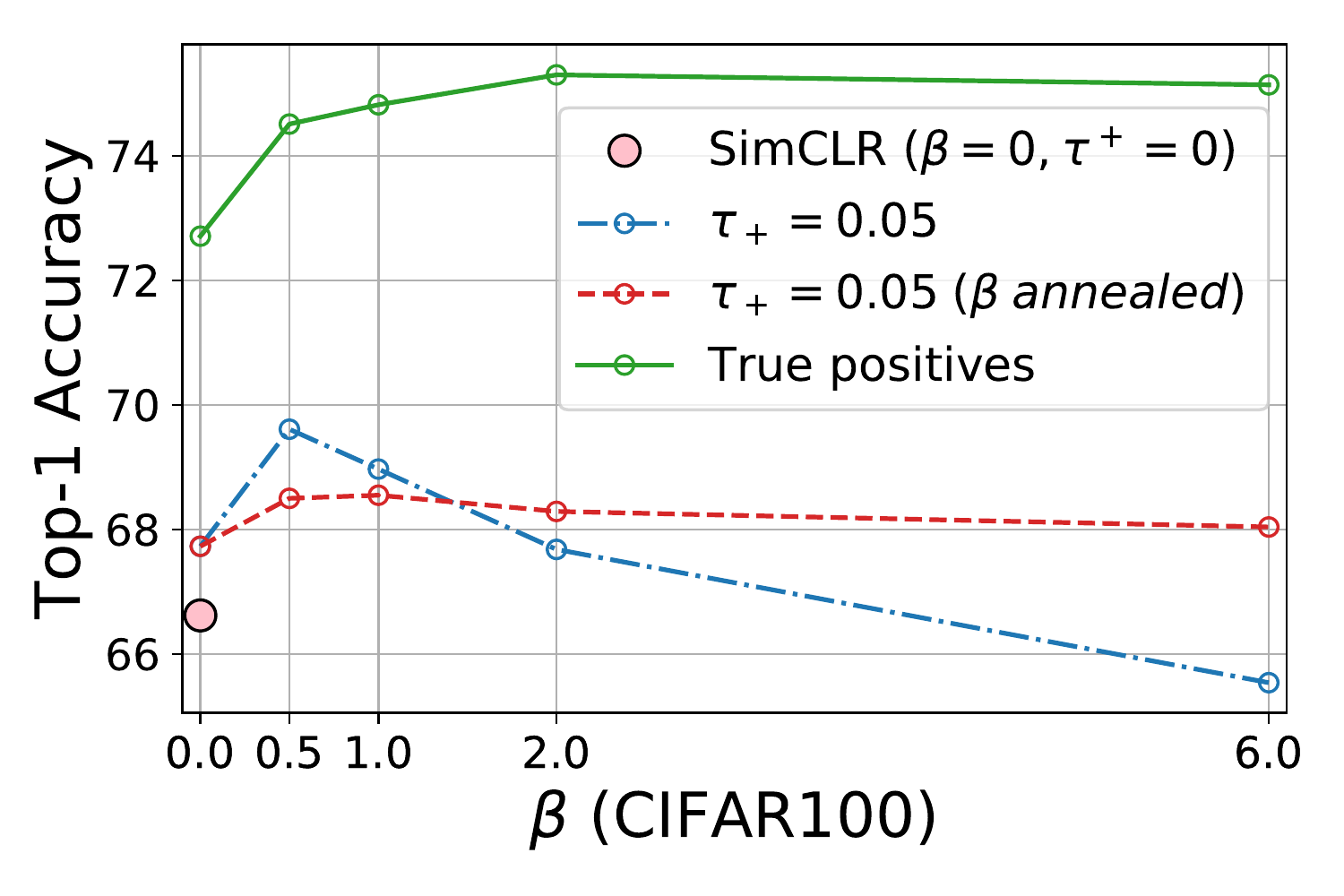}
 \includegraphics[width=0.32\textwidth]{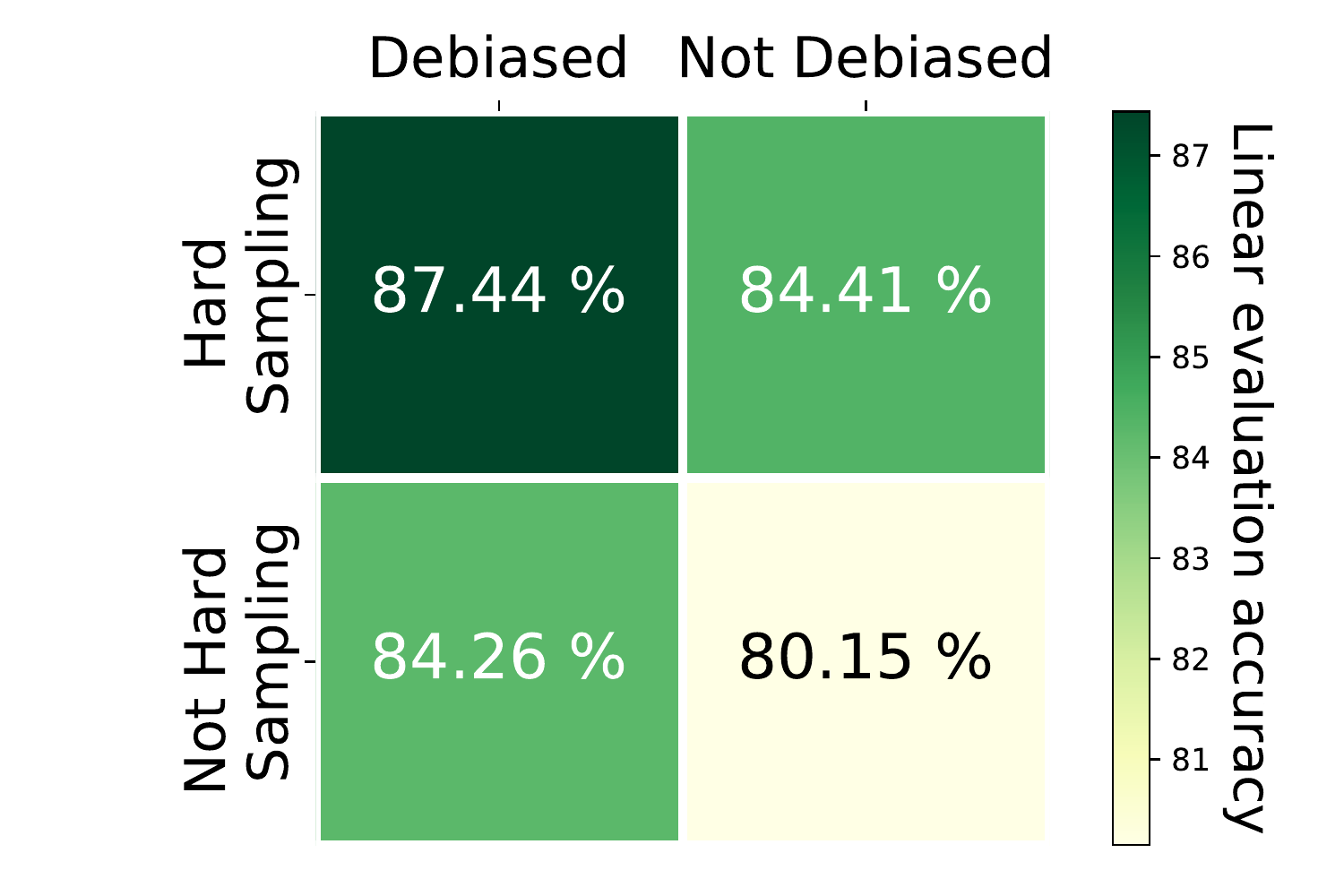}
	\caption{\label{fig:stl ablation}Left: the effect of varying concentration parameter $\beta$ on linear readout accuracy. Middle: linear readout accuracy as concentration parameter $\beta$ varyies, in the case of contrastive learning (fully unsupervised),  using true positive samples (uses label information), and an annealing method that improves robustness to the choice of $\beta$ (see Appendix \ref{vision experiments} for details). Right: STL10 linear readout accuracy  for hard sampling with and without debiasing, and non-hard sampling ($\beta=0$) with and without debiasing. Best results come from using both simultaneously.}
\vspace{-10pt}
 \end{figure}

Fig. \ref{fig:stl histogram ablation}  compares the histograms of cosine similarities of positive and negative pairs for the four learned representations. The representation trained with hard negatives and debiasing assigns much lower similarity score to a pair of negative samples than other methods. On the other hand, the SimCLR baseline assigns higher cosine similarity scores to pairs of positive samples. 
However, to discriminate positive and negative pairs, a key property is the amount of \emph{overlap} of positive and negative histograms. Our hard sampling method achieves less overlap than SimCLR, by better trading off higher dissimilarity of negative pairs with less similarity of positive pairs.  Similar tradeoffs are observed for the debiased objective, and hard sampling without debiasing.

\begin{figure}[t]
    \includegraphics[width=\textwidth]{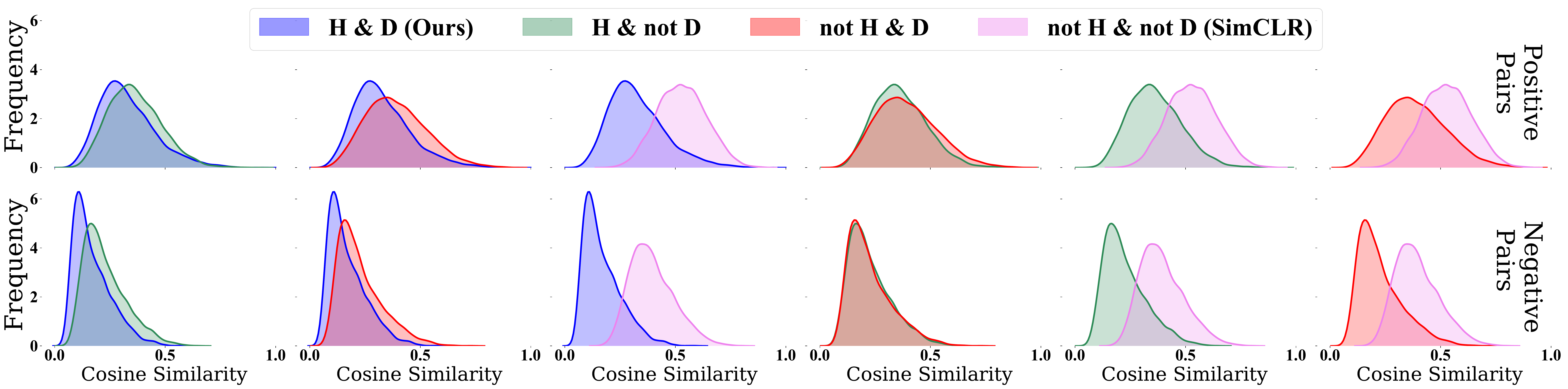}
    \caption{Histograms of cosine similarity of pairs of points with the same label (top) and different labels (bottom) for embeddings trained on STL10 with four different objectives. H$=$Hard Sampling, D$=$Debiasing.  Histograms overlaid pairwise to allow for convenient comparison. }
   \vspace{-8pt}
    \label{fig:stl histogram ablation}
\end{figure}

\subsection{How do Hard Negatives Affect Optimization?}
\vspace{-6pt}

Fig. \ref{fig: optimization} (in Appendix \ref{sec: app ablations} due to space constraints) shows the performance on STL10 and CIFAR100 of SimCLR versus using hard negatives throughout training. We use weighted $k$-nearest neighbors with $k=200$ as the classifier and evaluate each model once every five epochs. Hard sampling with $\beta=1$ leads to much faster training: on STL10 hard sampling takes only 60 epochs to reach the same performance as SimCLR does in 400 epochs. On CIFAR100 hard sampling takes only 125 epochs to reach the same performance as SimCLR does in 400 epochs. We speculate that the speedup is, in part, due to hard negatives providing non-negligible gradient information during training.

\vspace{-8pt}
\section{Conclusion}
\vspace{-8pt}

 We argue for the value of hard negatives in unsupervised contrastive representation learning, and introduce a simple hard negative sampling method. Our work connects two major lines of work: contrastive learning, and negative mining in metric learning. Doing so requires overcoming an apparent roadblock: negative mining in metric learning uses pairwise similarity information as a core component, while contrastive learning is unsupervised. Our method enjoys several nice aspects: having desirable theoretical properties, a very simple implementation that requires modifying only a couple of lines of code, not changing anything about the data sampling pipeline, introducing zero extra computational overhead, and handling false negatives in a principled way.

\pagebreak

{
  \bibliographystyle{iclr2021_conference}
  \bibliography{egbib}
}

\newpage
\appendix
\section{Analysis of Hard Sampling}

\subsection{Hard Sampling Interpolates Between Marginal and Worst-Case Negatives}\label{appendix: first proof}

We begin by proving Proposition \ref{prop: beta convergence}. Recall that the proposition stated the following.

\begin{prop}
Let $\mathcal{L}^*(f) = \sup_{q \in \Pi} \mathcal{L} ( f, q)$. Then for any $t>0$ and measurable $f : \mathcal X \rightarrow \mathbb{S}^{d-1}/t$ we observe the convergence $ \mathcal{L}(f,q^-_\beta)   \longrightarrow \mathcal{L}^*(f) $ as $\beta \rightarrow \infty$.
\end{prop}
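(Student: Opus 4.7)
The plan is to reduce the convergence $\mathcal{L}(f, q_\beta^-) \to \mathcal{L}^*(f)$ to a pointwise (in $x$) limit of the inner expectation $\mathbb{E}_{x^- \sim q}[e^{f(x)^\top f(x^-)}]$, and then invoke bounded convergence on the outer expectation. First I rewrite
$$\mathcal{L}(f,q) = \mathbb{E}_{x \sim p,\, x^+ \sim p_x^+}\, \Phi_{x,x^+}\!\left( \mathbb{E}_{x^- \sim q}[e^{f(x)^\top f(x^-)}]\right),$$
where $\Phi_{x,x^+}(z) = -\log \frac{e^{f(x)^\top f(x^+)}}{e^{f(x)^\top f(x^+)} + Q z}$. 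Because $f$ maps to a hypersphere of radius $1/t$, the argument $z$ lies in the compact interval $[e^{-1/t^2}, e^{1/t^2}]$, on which $\Phi_{x,x^+}$ is continuous, strictly increasing, and uniformly bounded. Since $q$ enters $\mathcal{L}(f,q)$ only through the inner expectation, monotonicity of $\Phi_{x,x^+}$ pulls the supremum inside, giving
$$\mathcal{L}^*(f) = \mathbb{E}_{x,x^+}\, \Phi_{x,x^+}\!\left(M(x)\right), \qquad M(x) := \sup_{x^-\,:\, x^-\nsim x} e^{f(x)^\top f(x^-)}.$$

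The central step is to show the pointwise limit $\mathbb{E}_{x^- \sim q_\beta^-}[e^{f(x)^\top f(x^-)}] \to M(x)$ as $\beta \to \infty$ via a standard concentration argument for tilted densities. Fix $x$, set $g(x^-) = f(x)^\top f(x^-)$, and let $M_g$ denote its essential supremum under $p_x^-$. The full-support assumption $\operatorname{supp}(p) = \mathcal{X}$ implies that $p_x^-$ has full support on $\{x^- \nsim x\}$, so for every $\delta > 0$ the superlevel set $A_\delta = \{g \geq M_g - \delta\}$ has positive mass $a_\delta > 0$. Since $q_\beta^-(x^-) \propto e^{\beta g(x^-)} p_x^-(x^-)$, I lower-bound the partition function by $\int_{A_{\epsilon/2}} e^{\beta g}\, dp_x^- \geq a_{\epsilon/2}\, e^{\beta(M_g-\epsilon/2)}$ and upper-bound the numerator on the complement by $\int_{\{g < M_g - \epsilon\}} e^{\beta g}\, dp_x^- \leq e^{\beta(M_g - \epsilon)}$, yielding $q_\beta^-(\{g < M_g - \epsilon\}) \leq a_{\epsilon/2}^{-1}\, e^{-\beta\epsilon/2} \to 0$. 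Since $e^g$ is bounded on the compact range of $g$, this concentration shows $\mathbb{E}_{q_\beta^-}[e^g] \to e^{M_g}$.

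Finally, the argument of $\Phi_{x,x^+}$ is uniformly bounded in $\beta$ and $(x,x^+)$, so $\Phi_{x,x^+}(\mathbb{E}_{q_\beta^-}[e^g])$ is uniformly bounded and converges pointwise to $\Phi_{x,x^+}(M(x))$ by continuity; bounded convergence on the outer expectation over $(x,x^+)$ then delivers $\mathcal{L}(f,q_\beta^-) \to \mathcal{L}^*(f)$. I expect the main obstacle to be the identification $e^{M_g} = M(x)$: for merely measurable $f$, the essential supremum of $g$ under $p_x^-$ need not coincide with its pointwise supremum, so the proof must either leverage full support together with $p_x^- \ll p$ to identify them, or interpret $\sup_{q \in \Pi}$ in the definition of $\mathcal{L}^*(f)$ as an essential supremum over $p_x^-$-absolutely continuous distributions (which is natural, since distributions in $\Pi$ supported on $p_x^-$-null sets can be excluded without loss for the purpose of comparison with the limit of $q_\beta^-$).
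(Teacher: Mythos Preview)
Your proposal is correct and follows essentially the same route as the paper: reduce to pointwise convergence of the inner expectation via the tilted-density concentration argument (lower-bound the partition function on the superlevel set $\{g \ge M_g - \varepsilon/2\}$, upper-bound the mass of $\{g < M_g - \varepsilon\}$, and send $\beta\to\infty$), then pass to the outer expectation by bounded/dominated convergence using boundedness of $f$ on the sphere. The paper handles your flagged issue about pointwise versus essential supremum by \emph{defining} $M(x)$ as the essential supremum under $p_x^-$ from the outset and writing $\mathcal{L}^*(f)$ directly in terms of $e^{M(x)}$, which is exactly the interpretation you suggest.
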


\begin{proof}
Consider the following essential supremum, 
\[M(x) = \esssup_{x^- \in \mathcal X: x^- \nsim x}f(x)^T f(x^-) =   \sup \{ m >0 : m \geq  f(x)^T f(x^-) \; \text{a.s. for } x^- \sim p^-\}.\]

 The second inequality holds since $\text{supp}(p) = \mathcal X$. We may rewrite 

\begin{align*}
&\mathcal{L}^*(f) =  \mathbb{E}_{\substack{x \sim p \\ x^+ \sim p_x^+ }} \left [-\log \frac{e^{f(x)^T f(x^+)}}{e^{f(x)^T f(x^+)} + Q e^{M(x)} } \right ] ,  \\
&\mathcal{L}(f,q^-_\beta) = \ \mathbb{E}_{\substack{x \sim p \\ x^+ \sim p_x^+ }} \left [-\log \frac{e^{f(x)^T f(x^+)}}{e^{f(x)^T f(x^+)} + Q \mathbb{E}_{x^- \sim q^-_\beta} [e^{f(x)^T f(x^-)}] } \right ].
\end{align*}

The difference between these two terms can be bounded as follows,

\begin{align*}
\abs{\mathcal{L}^*(f)  - \mathcal{L}(f,q^-_\beta)} &\leq   \mathbb{E}_{\substack{x \sim p \\ x^+ \sim p_x^+ }} \left | -\log \frac{e^{f(x)^T f(x^+)}}{e^{f(x)^T f(x^+)} + Q e^{M(x)} }   + \log \frac{e^{f(x)^T f(x^+)}}{e^{f(x)^T f(x^+)} + Q \mathbb{E}_{x^- \sim q^-_\beta} [e^{f(x)^T f(x^-)}] } \right | \\
&=  \mathbb{E}_{\substack{x \sim p \\ x^+ \sim p_x^+ }} \left | \log \left (e^{f(x)^T f(x^+)} + Q \mathbb{E}_{x^- \sim q^-_\beta} [e^{f(x)^T f(x^-)}]  \right ) - \log \left ( e^{f(x)^T f(x^+)} + Q e^{M(x)} \right ) \right | \\
&\leq  \frac{e^{1/t}}{Q+1} \cdot \mathbb{E}_{\substack{x \sim p \\ x^+ \sim p_x^+ }} \left | e^{f(x)^T f(x^+)} + Q \mathbb{E}_{x^- \sim q^-_\beta} [e^{f(x)^T f(x^-)}]   -  e^{f(x)^T f(x^+)} - Q e^{M(x)} \right | \\
&= \frac{e^{1/t}Q}{Q+1} \cdot  \mathbb{E}_{\substack{x \sim p }} \left |  \mathbb{E}_{x^- \sim q^-_\beta} [e^{f(x)^T f(x^-)}]   - e^{M(x)} \right | \\
&\leq e^{1/t} \cdot    \mathbb{E}_{x \sim p} \mathbb{E}_{x^- \sim q^-_\beta} \abs{ e^{M(x)} -  e^{f(x)^T f(x^-)}} 
\end{align*}

where for the second inequality we have used the fact that $f$ lies on the  hypersphere of radius $1/t$ to restrict the domain of the logarithm to values greater than $(Q+1)e^{-1/t}$. Because of this the logarithm is Lipschitz with parameter $e^{1/t}/(Q+1)$. Using again the fact that $f$ lies on the hypersphere we know that $\abs{f(x)^T f(x^-)} \leq 1/t^2$ and hence have the following inequality, 

\begin{align*}
\mathbb{E}_{x \sim p} \mathbb{E}_{q^-_\beta} \abs{ e^{M(x)} -  e^{f(x)^T f(x^-)}} 
&\leq e^{1/t^2} \mathbb{E}_{x \sim p} \mathbb{E}_{q^-_\beta} \abs{ M(x) - f(x)^T f(x^-)} 
\end{align*}

Let us consider the inner expectation $E_\beta(x ) = \mathbb{E}_{q^-_\beta} \abs{ M(x) - f(x)^T f(x^-)}$. Note that since $f$ is bounded, $E_\beta(x)$ is uniformly bounded in $x$. Therefore, in order to show the convergence $\mathcal{L}(f,q^-_\beta)   \rightarrow \mathcal{L}^*(f) $ as $\beta \rightarrow \infty$, it suffices by the dominated convergence theorem to show that $E_\beta(x) \rightarrow 0$ pointwise as $\beta \rightarrow \infty$ for arbitrary fixed $x \in \mathcal X$.

From now on we denote $M=M(x)$ for brevity, and consider a fixed $x \in \mathcal X$. From the definition of $q^-_\beta$ it is clear that $q^-_\beta \ll p^-$. That is, since $q^-_\beta = c \cdot p^-$ for some (non-constant) $c$, it is absolutely continuous with respect to $p^-$. So $M(x) \geq f(x)^T f(x^-)$ almost surely for $x^- \sim q^-_\beta $, and we may therefore drop the absolute value signs from our expectation. Define the following event $\mathcal G_\varepsilon = \{ x^- : f(x)^\top f(x^-) \geq M - \varepsilon \}$ where $\mathcal G$ is refers to a ``good'' event. Define its complement $\mathcal B_\varepsilon = \mathcal G_\varepsilon^c$ where $\mathcal B$ is for ``bad''. For a fixed $x \in \mathcal X$ and $\varepsilon > 0$ consider,

\begin{align*}
E_\beta(x) &=  \mathbb{E}_{x^- \sim q^-_\beta} \abs{ M(x) - f(x)^T f(x^-)} \\
& = \mathbb{P}_{x^- \sim q^-_\beta}(  \mathcal G_\varepsilon) \cdot   \mathbb{E}_{x^- \sim q^-_\beta} \left [ \abs{ M(x) - f(x)^T f(x^-)} |  \mathcal G_\varepsilon \right ]  +   \mathbb{P}_{x^- \sim q^-_\beta}(  \mathcal B_\varepsilon) \cdot \mathbb{E}_{x^- \sim q^-_\beta} \left [ \abs{ M(x) - f(x)^T f(x^-)} |  \mathcal B_\varepsilon \right ] \\
& \leq \mathbb{P}_{x^- \sim q^-_\beta}(  \mathcal G_\varepsilon) \cdot   \varepsilon  + 2  \mathbb{P}_{x^- \sim q^-_\beta}(  \mathcal B_\varepsilon) \\
& \leq  \varepsilon  + 2  \mathbb{P}_{x^- \sim q^-_\beta}(  \mathcal B_\varepsilon) .
\end{align*}

We need to control $ \mathbb{P}_{x^- \sim q^-_\beta}(  \mathcal B_\varepsilon)$. Expanding, 

\vspace{-10pt}
\begin{align*}
\mathbb{P}_{x^- \sim q^-_\beta}(  \mathcal B_\varepsilon) &= \int_{\mathcal X} \mathbf{1} \left  \{  f(x)^T f(x^-) < M(x) -\varepsilon \right \} \frac{e^{\beta  f(x)^T f(x^-)}\cdot p^-(x^-)}{Z_\beta} \text{d}{x^-}
\end{align*}

where $Z_\beta = \int_{ \mathcal X} e^{\beta  f(x)^T f(x^-)} p^-(x^-) \text{d}{x^-}$ is the partition function of $q^-_\beta$. We may bound this expression by,
\vspace{-10pt}

\begin{align*}
\int_{\mathcal X} \mathbf{1} \left  \{  f(x)^T f(x^-) < M -\varepsilon \right \} \frac{e^{\beta  (M -\varepsilon)}\cdot p^-(x^-)}{Z_\beta} \text{d}{x^-} 
&\leq  \frac{e^{\beta  (M -\varepsilon)} }{Z_\beta} \int_{\mathcal X} \mathbf{1} \left  \{  f(x)^T f(x^-) < M -\varepsilon \right \} p^-(x^-)\text{d}{x^-}  \\
&=  \frac{e^{\beta  (M -\varepsilon)} }{Z_\beta}  \mathbb{P}_{x^- \sim p^-} ( \mathcal B_\varepsilon )  \\
&\leq  \frac{e^{\beta  (M -\varepsilon)} }{Z_\beta}  
\end{align*}
\vspace{-10pt}
Note that
\vspace{-5pt}

 \[Z_\beta = \int_{ \mathcal X} e^{\beta  f(x)^T f(x^-)} p^-(x^-) \text{d}{x^-} \geq e^{\beta (M-\varepsilon/2)}  \mathbb{P}_{x^- \sim p^-} (  f(x)^T f(x^-) \geq  M -\varepsilon/2  ).  \]
 
 By the definition of $M=M(x)$ the probability $\rho_\varepsilon =  \mathbb{P}_{x^- \sim p^-} (  f(x)^T f(x^-) \geq  M -\varepsilon/2  ) > 0$, and we may therefore bound,
 \vspace{-10pt}
 
 \begin{align*}
\mathbb{P}_{x^- \sim q^-_\beta}(  \mathcal B_\varepsilon) &=  \frac{e^{\beta  (M -\varepsilon)} }{e^{\beta (M-\varepsilon/2)} \rho_\varepsilon} \\&= e^{ -\beta \varepsilon/2} / \rho_\varepsilon \\
& \longrightarrow 0 \;  \text{       as       } \; \beta \rightarrow \infty.
\end{align*}

We may therefore take $\beta$ to be sufficiently big so as to make $\mathbb{P}_{x^- \sim q^-_\beta}(  \mathcal B_\varepsilon)  \leq \varepsilon$ and therefore  $E_\beta(x) \leq 3\varepsilon $. In other words,   $E_\beta(x) \longrightarrow 0  $ as $\beta \rightarrow \infty$.
\end{proof} 

\vspace{-5pt}

\subsection{Optimal Embeddings on the Hypersphere for Worst-Case Negative Samples}\label{appendix: second proof}

In order to study properties of global optima of the contrastive objective using the adversarial worst case hard sampling distribution recall that we have the following limiting objective,

\begin{equation}
\mathcal L_\infty (f,q) =  \mathbb{E}_{\substack{x \sim p \\ x^+ \sim p_x^+ }} \left [-\log \frac{e^{f(x)^T f(x^+)}}{ \mathbb{E}_{x^- \sim q_\beta} [e^{f(x)^T f(x^-)}] } \right ] .
\end{equation}

We may separate the logarithm of a quotient into the sum of two terms plus a constant,
\[ \mathcal L_\infty(f,q) = \mathcal L _\text{align}(f) +\mathcal L _\text{unif}(f, q)  - 1/t^2 \]

\vspace{-5pt}
where $ \mathcal L _\text{align}(f)  = \mathbb{E}_{x,x^+ } \| f(x)-f(x^+)\|^2 /2$ and $\mathcal L _\text{unif}(f, q) = \mathbb{E}_{x \sim p} \log \mathbb{E}_{x^- \sim q} e^{f(x)^\top f(x^-)}$. Here we have used the fact that $f$ lies on the boundary of the hypersphere of radius $1/t$, which gives us the following equivalence between inner products and squared Euclidean norm, 

\begin{equation}\label{eqn: inner product MSE} 
2/t^2  - 2f(x)^\top f(x^+) = \| f(x)\|^2 + \| f(x^+)\|^2   - 2f(x)^\top f(x^+) = \| f(x)-f(x^+)\|^2.
\end{equation}

Taking supremum  to obtain $\mathcal L_\infty^*(f) = \sup_{q \in \Pi} \mathcal L_\infty(f,q)$ we find that the second expression simplifies to, 
\[\mathcal L _\text{unif}^*(f) = \sup_{q \in \Pi} \mathcal L _\text{unif}(f,q) = \mathbb{E}_{x \sim p} \log \sup_{x^- \nsim x}e^{f(x)^\top f(x^-)} = \mathbb{E}_{x \sim p}  \sup_{x^- \nsim x} f(x)^\top f(x^-).\]

Using Eqn. (\ref{eqn: inner product MSE}), this can be re-expressed as,

\begin{equation}\label{eqn: inner product MSE loss} 
\mathbb{E}_{x \sim p}  \sup_{x^- \nsim x} f(x)^\top f(x^-) = - \mathbb{E}_{x \sim p}  \inf_{x^- \nsim x} \| f(x)-f(\xm)\|^2 /2 +1/t^2.
\end{equation}

The forthcoming theorem exactly characterizes the global optima of $\min_f \mathcal L^*_\infty (f)$ \\

\begin{thm}
Suppose the downstream task is classification (i.e. $\mathcal C$ is finite), and let $ \mathcal{L}_\infty^* ( f) = \sup_{q \in \Pi} \mathcal{L}_\infty ( f, q)$ . The infimum $ \inf_{f: \; \text{measurable} } \mathcal{L}_\infty^* ( f)$  is attained, and any $f^*$ achieving the global minimum is such that $f^*(x) = f^*(x^+)$ almost surely. Furthermore, letting $\bold v_c = f^*(x)$ for any $x$ such that $h(x)=c$ (so $\bold v_c$ is well defined up to a set of $x$ of measure zero), $f^*$ is characterized as being any solution to the following ball-packing problem,

\vspace{-10pt}
\begin{equation} \max_{ \{ \bold v_c \in \mathbb{S}^{d-1}/t \}_{c \in \mathcal C}} \sum_{c \in \mathcal C} \rho(c) \cdot \min_{c' \neq c} \| \bold v_c - \bold v_{c'} \| ^2 . 
\end{equation}
\label{thm: gobal opt}
\end{thm}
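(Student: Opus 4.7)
My starting point is the decomposition derived just above the theorem,
\[\mathcal L_\infty^*(f) = \mathcal L_\text{align}(f) - \tfrac{1}{2}\,\mathbb{E}_{x \sim p}\inf_{x^- \nsim x}\|f(x) - f(x^-)\|^2,\]
which cleanly splits the objective into a ``within-class spread'' term and a ``nearest-other-class'' term. The proof then factors into (i) evaluating this quantity on class-constant embeddings, (ii) showing the resulting value is a lower bound for every measurable $f$, and (iii) reading off the characterization of minimizers.

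For (i), any class-constant $f(x) = \bold v_{h(x)}$ kills the alignment term and the inner infimum collapses to $\min_{c' \ne c}\|\bold v_c - \bold v_{c'}\|^2$, so minimizing over such $f$ is exactly the ball-packing Problem~\eqref{eqn: repulsion eqn}. Compactness of $(\mathbb{S}^{d-1}/t)^{|\mathcal C|}$ and continuity of the functional guarantee its supremum $U^*$ is attained, producing a class-constant $f^*$ with $\mathcal L_\infty^*(f^*) = -U^*/2$.

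For (ii), fix an arbitrary measurable $f$ and let $\mu_c$ denote the pushforward of $p(\cdot \mid h=c)$ under $f$, with centroid $\bar y_c$ and variance $\Var(\mu_c) = 1/t^2 - \|\bar y_c\|^2$; in particular $\mathcal L_\text{align}(f) = \sum_c \rho(c)\Var(\mu_c)$. Picking any representatives $\bold v_c \in \text{supp}(\mu_c)$, the pointwise inequality $\inf_{x^- \nsim x}\|f(x) - f(x^-)\|^2 \leq \min_{c' \ne c}\|f(x) - \bold v_{c'}\|^2$ combined with the identity $\mathbb{E}_{\mu_c}\|Y - \bold v_{c'}\|^2 = \|\bar y_c - \bold v_{c'}\|^2 + \Var(\mu_c)$ yields
\[\mathbb{E}_x\inf_{x^- \nsim x}\|f(x) - f(x^-)\|^2 \leq \sum_c \rho(c)\min_{c' \ne c}\|\bar y_c - \bold v_{c'}\|^2 + \mathcal L_\text{align}(f).\]
Writing $r_c = t\|\bar y_c\| \in [0,1]$ and $\bold w_c = \bar y_c/r_c \in \mathbb{S}^{d-1}/t$ (the radial projection of $\bar y_c$ onto the sphere, defined arbitrarily when $\bar y_c = 0$), elementary expansion gives the identity $\|\bar y_c - \bold v_{c'}\|^2 = r_c\|\bold w_c - \bold v_{c'}\|^2 + (1-r_c)^2/t^2$. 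Because $(1-r_c)^2 \leq 1 - r_c^2 = t^2\Var(\mu_c)$, the radial residual is absorbed into a second copy of the alignment term, and the remaining purely spherical sum $\sum_c \rho(c)\min_{c' \ne c}\|\bold w_c - \bold v_{c'}\|^2$ is compared to $U^*$ through a coordinated choice of the $\bold v_c$. Assembling everything yields $\mathcal L_\infty^*(f) \geq -U^*/2$; saturation of each inequality forces $\Var(\mu_c) = 0$ for every $c$ with $\rho(c) > 0$ --- equivalently $f(x) = f(x^+)$ almost surely --- and the resulting $\{\bold v_c\}$ to solve~\eqref{eqn: repulsion eqn}, giving (iii).

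\textbf{Main obstacle.} The delicate step is bounding the spherical combinatorial sum $\sum_c \rho(c)\min_{c' \ne c}\|\bold w_c - \bold v_{c'}\|^2$ by $U^*$ with a \emph{single} choice of representatives $\bold v_c \in \text{supp}(\mu_c)$ that is simultaneously close to $\bold w_c$ for every $c$; one cannot pick the $\bold v_c$ class by class independently, since each appears in other classes' inner minima. If a direct simultaneous choice proves awkward, a cleaner alternative is to first establish the class-collapse claim $f^*(x) = f^*(x^+)$ a.s.\ by a perturbation/improvement argument --- if $\Var(\mu_c) > 0$ for some $c$, replace $\mu_c$ by a Dirac at a point of its support where the local ``nearest-other-class'' distance is largest, which strictly decreases $\mathcal L_\infty^*$ --- after which the reduction to Problem~\eqref{eqn: repulsion eqn} is automatic and attainment follows from compactness of the sphere.
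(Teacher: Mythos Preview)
Your primary (centroid/variance) route is different from the paper's and the quantitative work is correct up through the radial identity $\|\bar y_c - \bold v_{c'}\|^2 = r_c\|\bold w_c - \bold v_{c'}\|^2 + (1-r_c)^2/t^2$ and the absorption $(1-r_c)^2/t^2 \le \Var(\mu_c)$. You are also right that the remaining step is the real obstacle, and I do not see how to close it along these lines: bounding the mixed sum $\sum_c \rho(c)\min_{c'\ne c}\|\bold w_c - \bold v_{c'}\|^2$ by $U^*$ requires a \emph{single} choice $\bold v_c\in\text{supp}(\mu_c)$ that simultaneously serves every anchor, yet the normalized centroid $\bold w_c$ need not lie in or near $\text{supp}(\mu_c)$ (take $\mu_c$ supported on two nearly antipodal points). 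The unconstrained two-family functional $\sum_c \rho(c)\min_{c'\ne c}\|\bold a_c-\bold b_{c'}\|^2$ can exceed $U^*$ (for $|\mathcal C|\ge 3$, put every $\bold a_c$ at $p$ and every $\bold b_c$ at $-p$), so any salvage must exploit the coupling $\bold v_c\in\text{supp}(\mu_c)$ nontrivially, and a clean argument for that is not apparent.

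Your fallback --- collapse one class at a time to the point in its support with largest nearest-other-class distance --- is exactly the paper's argument. The one ingredient you should make explicit is the effect on the \emph{other} classes: replacing $f$ on class $c$ by the constant $f(x_c)$ shrinks the range of the embedding, so for every $\hat c\neq c$ and every $x$ with $h(x)=\hat c$ the set $\{\hat f(x^-):x^-\nsim x\}$ is contained in $\{f(x^-):x^-\nsim x\}$, whence $\inf_{x^-\nsim x}\|\hat f(x)-\hat f(x^-)\|^2\ge\inf_{x^-\nsim x}\|f(x)-f(x^-)\|^2$. Together with your choice of $x_c$ on class $c$ itself, this gives $\mathbb E_x\inf_{x^-\nsim x}\|\hat f(x)-\hat f(x^-)\|^2\ge\mathbb E_x\inf_{x^-\nsim x}\|f(x)-f(x^-)\|^2$, while the alignment term drops by $\rho(c)\Var(\mu_c)>0$; so your strict-decrease claim is valid and the characterization follows by iterating over $\mathcal C$. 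The paper frames the same collapse slightly differently --- it shows $\mathcal L_\text{unif}^*$ alone is nonincreasing under the collapse and then argues that $\argmin_f\mathcal L_\text{align}$ and $\argmin_f\mathcal L_\text{unif}^*$ intersect --- but the content is the same.
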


\begin{proof}
Any minimizer of $ \mathcal L _\text{align}(f)  $ has the property that $f(x) = f(x^+)$ almost surely. So, in order to prove the first claim, it suffices to show that there exist functions $f \in \arg \inf_f \mathcal L _\text{unif}^*(f)$ for which $f(x) = f(x^+)$ almost surely.  This is because, at that point, we have shown that  $ \argmin_f \mathcal L _\text{align}(f)$  and $ \argmin_f \mathcal L _\text{unif}^*(f)$  intersect, and therefore any solution of $\mathcal L_\infty^*(f) = \mathcal L _\text{align}(f) +\mathcal L _\text{unif}^*(f)$ must lie in this intersection.

To this end, suppose that  $f \in  \argmin_f \mathcal L _\text{unif}^*(f)$ but that $f(x) \neq f(x^+)$ with non-zero probability. We shall show that we can construct a new embedding $\hat{f}$ such that  $f(x) = f(x^+)$  almost surely, and $\mathcal L _\text{unif}^*(\hat{f}) \leq  \mathcal L _\text{unif}^*(f)$. Due to Eqn. (\ref{eqn: inner product MSE loss}) this last condition is equivalent to showing,
\vspace{-10pt}

\begin{equation}\label{ref: eqn mnimizer configuration eqn}  \mathbb{E}_{x\sim p} \inf_{\xm \nsim x} \| \hat{f}(x) - \hat{f}(x^-) \|^2 \geq  \mathbb{E}_{x\sim p} \inf_{\xm \nsim x} \| f(x) - f(x^-) \|^2.  
\end{equation}
\vspace{-10pt}

Fix a $c \in \mathcal C$, and let $x_c \in \argmax_{x : h(x) =c} \inf_{x^- \nsim x}  \| f(x) - f(x^-) \|^2$. The maximum is guaranteed to be attained, as we explain now. Indeed we know the maximum is attained at some point in the closure $ \partial \{x : h(x) =c\} \cup  \{x : h(x) =c\}$. Since $\mathcal X$ is compact and connected, any point $\bar{x} \in \partial \{x : h(x) =c\} \setminus  \{x : h(x) =c\}$ is such that  $\inf_{x^- \nsim \bar{x}}  \| f(\bar{x}) - f(x^-) \|^2 = 0 $ since $\bar{x}$ must belong to $\{x : h(x) =c'\}$ for some other $c'$. Such an $\bar{x}$ cannot be a solution unless all points in $ \{x : h(x) =c\}$ also achieve $0$, in which case we can simply take $x_c$ to be a point in the interior of $ \{x : h(x) =c\}$. 

Now, define $\hat{f}(x) = f(x_c)$ for any $x$ such that $h(x)=c$ and $\hat{f}(x) = f(x)$ otherwise. Let us first aim to show that Eqn. (\ref{ref: eqn mnimizer configuration eqn})  holds for this $\hat{f}$. Let us begin to expand the left hand side of Eqn. (\ref{ref: eqn mnimizer configuration eqn}),
\vspace{-10pt}

\begin{align}
\mathbb{E}_{x\sim p} \inf_{\xm \nsim x} \| & \hat{f}(x) - \hat{f}(x^-) \|^2 \nonumber \\ &= \mathbb{E}_{\hat{c}\sim \rho} \mathbb{E}_{x \sim p(\cdot | \hat{c})} \inf_{\xm \nsim x} \| \hat{f}(x) - \hat{f}(x^-) \|^2   \nonumber \\
&= \rho(c)  \mathbb{E}_{x \sim p(\cdot | c)} \inf_{\xm \nsim x} \| \hat{f}(x) - \hat{f}(x^-) \|^2 & \nonumber \\
&  \quad  \quad  \quad+ (1-\rho(c) ) \mathbb{E}_{\hat{c}\sim \rho(\cdot | \hat{c} \neq c)}\mathbb{E}_{x \sim p(\cdot | \hat{c})} \inf_{\xm \nsim x} \| \hat{f}(x) - \hat{f}(x^-) \|^2 \nonumber \\
&= \rho(c)  \mathbb{E}_{x \sim p(\cdot | c)} \inf_{\xm \nsim x} \| f(x_c) - f(x^-) \|^2 \nonumber  \\
 &  \quad  \quad  \quad+ (1-\rho(c) ) \mathbb{E}_{\hat{c}\sim \rho(\cdot | \hat{c} \neq c)}\mathbb{E}_{x \sim p(\cdot | \hat{c})} \inf_{\xm \nsim x} \| \hat{f}(x) - \hat{f}(x^-) \|^2 \nonumber \\
 &= \rho(c)   \inf_{\xm \nsim x_c} \| f(x_c) - f(x^-) \|^2   \nonumber \\
 &  \quad  \quad  \quad+ (1-\rho(c) ) \mathbb{E}_{\hat{c}\sim \rho(\cdot | \hat{c} \neq c)}\mathbb{E}_{x \sim p(\cdot | \hat{c})} \inf_{h(\xm) \neq \hat{c} } \| \hat{f}(x) - \hat{f}(x^-) \|^2 \label{eqn: last line}
\end{align}

By construction, the first term can be lower bounded by $  \inf_{\xm \nsim x_c} \| f(x_c) - f(x^-) \|^2 \geq  \mathbb{E}_{x\sim p(\cdot | c)}   \inf_{h(\xm) \neq c} \| f(x) - f(x^-) \|^2$ for any $x$ such that $h(x)=c$. To lower bound the second term, consider any fixed $\hat{c} \neq c$ and $x \sim p(\cdot | \hat{c})$ (so $h(x)=\hat{c}$). Define the following two subsets of the input space $\mathcal X$
\vspace{-5pt}

\begin{align*}
\mathcal A = \{ f(\xm)   : f(\xm) \neq  \hat{c} \; \text{ for } \; \xm \in \mathcal X\} 
\qquad
\widehat{\mathcal A}=  \{ f(\xm) \in \mathcal X : \hat{f}(\xm) \neq \hat{c}  \; \text{ for } \; \xm \in \mathcal X \} .
\end{align*}
\vspace{-5pt}

Since by construction the range of $\hat{f}$ is a subset of the range of $f$, we know that $\widehat{\mathcal A} \subseteq \mathcal A$. Combining this with the fact that $\hat{f}(x) = f(x)$ whenever $h(x)=\hat{c} \neq c$ we see,

\begin{align*}
\inf_{h(\xm) \neq \hat{c} } \| \hat{f}(x) - \hat{f}(x^-) \|^2 &=  \inf_{h(\xm) \neq \hat{c} } \| f(x) - \hat{f}(x^-) \|^2 \\
&= \inf_{ u \in \widehat{\mathcal A} } \| f(x) -u \|^2 \\
&\geq \inf_{ u \in \mathcal A } \| f(x) -u \|^2 \\
& = \inf_{h(\xm) \neq \hat{c} } \| f(x) - f(x^-) \|^2
\end{align*}

Using these two lower bounds we may conclude that Eqn. (\ref{eqn: last line}) can be lower bounded by,
\vspace{-10pt}

\begin{align*}
\rho(c)  \mathbb{E}_{x\sim p(\cdot | c)}   \inf_{h(\xm) \neq c} \| f(x) - f(x^-) \|^2   + (1-\rho(c) ) \mathbb{E}_{\hat{c}\sim \rho(\cdot | \hat{c} \neq c)}\mathbb{E}_{x \sim p(\cdot | \hat{c})} \inf_{h(\xm) \neq \hat{c} } \| f(x) - f(x^-) \|^2
\end{align*}

which equals $\mathbb{E}_{x\sim p} \inf_{\xm \nsim x} \| f(x) - f(x^-) \|^2$. We have therefore proved Eqn. (\ref{ref: eqn mnimizer configuration eqn}). To summarize the current progress; given an embedding $f$ we have constructed a new embedding $\hat{f}$ that attains lower $\mathcal L _\text{unif}$ loss and which is constant on $x$ such that $\hat{f}$ is constant on  $\{ x: h(x)=c\}$. Enumerating $\mathcal C = \{ c_1 , c_2 \ldots , c_{\abs{\mathcal C}} \}$, we may repeatedly apply the same argument to construct a sequence of embeddings $f_1, f_2, \ldots , f_{\abs{\mathcal C}}$ such that $f_i$ is constant on each of the following sets $\{ x: h(x)=c_j\}$ for $j \leq i$ . The final embedding in the sequence $f^* =  f_{\abs{\mathcal C}}$ is such that  $\mathcal L _\text{unif}^*(f^*) \leq  \mathcal L _\text{unif}^*(f)$ and therefore $f^*$ is a minimizer. This embedding is constant on each of $\{ x: h(x)=c_j\}$ for $j = 1, 2, \ldots , \abs{\mathcal C}$. In other words, $f^*(x) = f^*(x^+)$ almost surely. We have proved the first claim. 

Obtaining the second claim is a matter of manipulating $\mathcal L_\infty^*(f^*)$. Indeed, we know that $\mathcal L _\infty^*(f^*)  = \mathcal L _\text{unif}^*(f^*) -1/t^2$ and defining $\bold v_c = f^*(x)=f(x_c)$ for each $c \in \mathcal C$, this expression is minimized if and only if $f^*$ attains,

\begin{align*}
\max_f \mathbb{E}_{x \sim p}  \inf_{\xm \nsim x} \| f(x) - f(\xm) \| ^2 & = \max_f \mathbb{E}_{c \sim \rho }   \mathbb{E}_{x \sim p(\cdot | c)}  \inf_{h(\xm) \neq c} \| f(x) - f(\xm) \| ^2 \\
& = \max_f \sum_{c \in \mathcal C} \rho(c) \cdot \inf_{h(\xm) \neq c} \| f(x) - f(\xm) \| ^2 \\
&=  \max_{ \{ \bold v_c \in \mathbb{S}^{d-1} /t \}_{c \in \mathcal C}} \sum_{c \in \mathcal C} \rho(c) \cdot \min_{c' \neq c} \| \bold v_c - \bold v_{c'} \| ^2 
\end{align*}

where the final equality inserts $f^*$ as an optimal $f$ and reparameterizes the maximum to be over the set of vectors $ \{ \bold v_c \in \mathbb{S}^{d-1} /t \}_{c \in \mathcal C}$. 
\end{proof}

\subsection{Downstream Generalization}\label{generalization theorem}

\begin{thm}
Suppose $\rho$ is uniform on $\mathcal C$ and $f$ is such that $\mathcal L_\infty ^*(f) - \inf_{\bar{f} \; \text{measurable}} \mathcal L_\infty ^*(\bar{f}) \leq \varepsilon$ with $\varepsilon \leq 1$. Let $ \{ \bold v^*_c \in \mathbb{S}^{d-1}/t \}_{c \in \mathcal C}$ be a solution to Problem \ref{eqn: repulsion eqn}, and define $\xi = \min_{c,c^- : c \neq c^-} \norm{\bold v^*_c -\bold v^*_{c-} } > 0$. Then there exists a set of vectors $ \{ \bold v_c \in \mathbb{S}^{d-1}/t \}_{c \in \mathcal C}$ such that the following 1-nearest neighbor classifier, 

\vspace{-10pt}
\begin{equation*} 
\hat{h}(x) = \hat{c} ,\quad\text{where}\quad \hat{c} = \arg\min_{\bar{c} \in \mathcal C} \norm{f(x) - \bold v_{\bar{c}} } \quad\text{(ties broken arbitrarily)}
\end{equation*}

achieves misclassification risk,
\begin{equation*} 
\mathbb P(\hat{h}(x) \neq c) \leq \frac{8\varepsilon}{(\xi^2 - 2\abs{\mathcal C} (1+1/t)\varepsilon^{1/2})^2}
\end{equation*}
\end{thm}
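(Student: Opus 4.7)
The plan is to leverage the decomposition $\mathcal{L}_\infty^*(f) = \mathcal{L}_{\text{align}}(f) + \mathcal{L}_{\text{unif}}^*(f) - 1/t^2$ established in the proof of Theorem~\ref{thm: back packing}, where $\mathcal{L}_{\text{align}}(f) = \tfrac12\mathbb{E}\|f(x)-f(x^+)\|^2$ and $\mathcal{L}_{\text{unif}}^*(f) = 1/t^2 - \tfrac12\mathbb{E}_x \inf_{\xm \nsim x}\|f(x)-f(\xm)\|^2$. That proof also shows any global minimizer zeroes $\mathcal{L}_{\text{align}}$ and simultaneously minimizes $\mathcal{L}_{\text{unif}}^*$, so the excess-risk hypothesis splits into $\mathcal{L}_{\text{align}}(f)\leq \varepsilon$ and $\mathcal{L}_{\text{unif}}^*(f) - \inf \mathcal{L}_{\text{unif}}^* \leq \varepsilon$. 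Unpacking these via equation~\eqref{eqn: inner product MSE loss} and using $T^* = \sum_c \rho(c)\min_{c'\neq c}\|\bold v_c^* - \bold v_{c'}^*\|^2 \geq \xi^2$ (uniform $\rho$) gives the two workhorse estimates
\[\mathbb{E}_{x,x^+}\|f(x)-f(x^+)\|^2 \leq 2\varepsilon, \qquad \mathbb{E}_x \inf_{\xm \nsim x}\|f(x)-f(\xm)\|^2 \geq \xi^2 - 2\varepsilon.\]

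Next I would construct the class prototypes. Let $\bar v_c := \mathbb{E}_{x\mid c} f(x)$ and $\sigma_c^2 := \mathbb{E}_{x\mid c}\|f(x)-\bar v_c\|^2$, so the alignment estimate gives $\mathbb{E}_c \sigma_c^2 \leq \varepsilon$. Since $\bar v_c$ generically lies strictly inside the sphere, I would put $\bold v_c := f(x_c)$ for an $x_c \sim p(\cdot \mid c)$ chosen (via Markov on the random draw) to satisfy $\|f(x_c) - \bar v_c\|^2 \leq \sigma_c^2$. The orthogonal decomposition $\mathbb{E}_{x\mid c}\|f(x) - f(x_c)\|^2 = \sigma_c^2 + \|f(x_c) - \bar v_c\|^2$, which holds because $\mathbb{E}_{x\mid c}[f(x) - \bar v_c]=0$, then yields $A_c := \mathbb{E}_{x\mid c}\|f(x) - \bold v_c\|^2 \leq 2\sigma_c^2$ and hence $\mathbb{E}\|f(x) - \bold v_{c(x)}\|^2 \leq 2\varepsilon$.

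The conclusion then follows by a standard Markov argument. If $\hat h(x)\neq c$ then some $c'\neq c$ has $\|f(x)-\bold v_{c'}\|\leq \|f(x)-\bold v_c\|$, and triangle forces $\|f(x) - \bold v_c\|\geq \tfrac12 D$ with $D := \min_{c\neq c'}\|\bold v_c - \bold v_{c'}\|$; Markov at threshold $D^2/4$ then gives $\mathbb{P}_{x,c}(\hat h(x)\neq c) \leq 8\varepsilon/D^2$, so matching the stated bound reduces to showing $D \geq \xi^2 - 2|\mathcal{C}|(1+1/t)\sqrt{\varepsilon}$. The chain for this last inequality is: triangle gives $\|\bold v_c - \bold v_{c'}\|\geq \|\bar v_c - \bar v_{c'}\| - \sigma_c - \sigma_{c'}$; the mean identity $\mathbb{E}_{x\mid c, x'\mid c'}\|f(x)-f(x')\|^2 = \sigma_c^2 + \sigma_{c'}^2 + \|\bar v_c - \bar v_{c'}\|^2$ combined with the pointwise bound $\|f(x)-f(x')\|^2 \geq \inf_{\xm\nsim x}\|f(x) - f(\xm)\|^2$ (valid whenever $h(x')\neq h(x)$) supplies a lower bound on $\|\bar v_c - \bar v_{c'}\|^2$ in terms of $M_c := \mathbb{E}_{x\mid c}\inf_{\xm\nsim x}\|f(x) - f(\xm)\|^2$ and the within-class variances; converting squared-distance ($L^2$) bounds to distance ($L^1$) bounds via Cauchy--Schwarz/Jensen introduces $\sqrt{\varepsilon}$, and absorbing cross terms using the hypersphere diameter $2/t$ produces the $(1+1/t)$ factor.

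The main obstacle is this final step. The uniformity inequality only controls the class-average $\mathbb{E}_c M_c \geq \xi^2 - 2\varepsilon$, whereas the Markov argument needs $\|\bold v_c - \bold v_{c'}\|$ to be simultaneously large for \emph{every} pair of classes. Converting an average-over-$c$ bound into a worst-case pointwise bound is what forces the multiplicative $|\mathcal{C}|$ correction (each of the $|\mathcal{C}|$ classes may have to absorb a full share of the averaged slack), and tracking the resulting constants carefully enough to land at exactly $2|\mathcal{C}|(1+1/t)\sqrt\varepsilon$---while simultaneously ensuring that the selected representatives $x_c$ can be chosen to control both $\|f(x_c) - \bar v_c\|$ and the cross-class deviations---is the crux of the technical work.
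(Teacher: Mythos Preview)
Your outline tracks the paper's proof closely. The choice of prototypes $\bold v_c = f(x_c)$ and the alignment bound $\mathbb{E}_{x,c}\norm{f(x)-\bold v_{h(x)}}^2 \leq 2\varepsilon$ are exactly what the paper does (the paper's $x_c$ minimizes $\mathbb{E}_{x\mid c}\norm{f(x)-f(x^+)}^2$ over $x^+$, which by your orthogonal decomposition is the same as minimizing $\norm{f(x^+)-\bar v_c}$). Your single Markov step via the triangle inequality ($\hat h(x)\neq c \Rightarrow \norm{f(x)-\bold v_c}\geq D/2$) is in fact cleaner than the paper's two-term $\delta$-split-and-optimize; it yields $8\varepsilon/D^2$ directly. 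One bookkeeping point: what is actually provable for the prototypes is $D^2 \geq \xi^2 - 2\abs{\mathcal C}(1+1/t)\sqrt\varepsilon$, not $D \geq \xi^2 - \cdots$; plugging that into $8\varepsilon/D^2$ gives $8\varepsilon/(\xi^2-\cdots)$, which is tighter than the stated bound with the squared denominator.

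The genuine gap is in your crux step. Your route through $\bar v_c$ and $M_c := \mathbb{E}_{x\mid c}\inf_{\xm\nsim x}\norm{f(x)-f(\xm)}^2$ supplies only a one-sided average bound, $\mathbb{E}_c M_c \geq \xi^2 - 2\varepsilon$. But converting an \emph{average lower} bound to a \emph{pointwise lower} bound via the factor $\abs{\mathcal C}$ requires two-sided control: with only a lower bound on the sum, any single $M_c$ can be arbitrarily small if the others compensate. The paper closes this by working instead with $X_c := \min_{c'\neq c}\norm{\bold v_c - \bold v_{c'}}^2$ and comparing directly to $X_c^* := \min_{c'\neq c}\norm{\bold v_c^* - \bold v_{c'}^*}^2$. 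The lower bound $\mathbb{E}_c X_c \geq \mathbb{E}_c X_c^* - 2(1+1/t)\sqrt\varepsilon$ comes from restricting the supremum in $\mathcal L_{\text{unif}}^*(f)=\mathbb{E}_x\sup_{x^-\nsim x}f(x)^\top f(x^-)$ to the finite set of centers $\{x_{c'}\}_{c'\neq c}$ and absorbing the residual $f(x)-\bold v_{h(x)}$ by Cauchy--Schwarz (this is where the $1/t$ enters). The matching upper bound $\mathbb{E}_c X_c \leq \mathbb{E}_c X_c^*$ is immediate because $\{\bold v_c\}$ is feasible for the packing problem~\eqref{eqn: repulsion eqn} while $\{\bold v_c^*\}$ is optimal. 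This optimality argument is the missing ingredient in your sketch; without it the $\abs{\mathcal C}$-blowup step does not go through.
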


\begin{proof}
To begin, using the definition of $\hat{h}$ we know that for any $0<\delta <\xi$,

\begin{align*}
\mathbb{P}_{x,c}( \hat{h}(x) = c) &= \mathbb{P}_{x,c} \left ( \norm{f(x) -  \bold v_c} \leq \min_{c^-:c^-\neq c} \norm{f(x) -  \bold v_{c^-}} \right ) \\
& \geq  \mathbb{P}_{x,c} \left ( \norm{f(x) -  \bold v_c} \leq \delta, \quad\text{and}\quad \delta \leq \min_{c^-:c^-\neq c} \norm{f(x) -  \bold v_{c^-}} \right ) \\
&\geq 1 -   \mathbb{P}_{x,c} \left ( \norm{f(x) -  \bold v_c} > \delta \right)  -   \mathbb{P}_{x,c}  \big (\min_{c^-:c^-\neq c} \norm{f(x) -  \bold v_{c^-}} < \delta\big )
\end{align*}

So to prove the result, our goal is now to bound these two probabilities. To do so, we use the bound on the excess risk. Indeed, combining the fact $\mathcal L_\infty ^*(f) - \inf_{\bar{f} \; \text{measurable}} \mathcal L_\infty ^*(\bar{f}) \leq \varepsilon$ with the notational rearrangements before Theorem \ref{thm: gobal opt} we observe that $\mathbb{E}_{x,x^+} \norm{f(x) - f(x^+)}^2 \leq  2\varepsilon$.

We have,

\begin{align*}
2\varepsilon \geq \mathbb{E}_{x,x^+} \norm{f(x) - f(x^+)}^2 
&= \mathbb{E}_{c \sim \rho} \mathbb{E}_{x^+ \sim p(\cdot | c)} \mathbb{E}_{x \sim p(\cdot | c)}  \norm{f(x) - f(x^+)}^2. 
\end{align*}
\vspace{-10pt}

For fixed $c,x^+$, let $x_c \in \arg\min_{\overline{\{x^+: h(x^+) = c\}}} \mathbb{E}_{x \sim p(\cdot | c)} \norm{f(x) - f(x^+)}^2$ where we extend the minimum to be over the closure, a compact set, to guarantee it is attained. Then we have 

\begin{align*}
 2\varepsilon \geq \mathbb{E}_{c \sim \rho} \mathbb{E}_{x^+ \sim p(\cdot | c)} \mathbb{E}_{x \sim p(\cdot | c)}  \norm{f(x) - f(x^+)}^2 \geq  \mathbb{E}_{c \sim \rho} \mathbb{E}_{x \sim p(\cdot | c)}   \norm{f(x) - \bold v_c}^2
\end{align*}

where we have now defined $\bold v_c =  f(x_c) $ for each $c \in \mathcal C$. Note in particular that $\bold v_c$ lies on the surface of the hypersphere $\mathbb{S}^{d-1}/t$. This enables us to obtain the follow bound using Markov's inequality, 

\begin{align*}
\mathbb{P}_{x,c} \left ( \norm{f(x) -  \bold v_c} > \delta \right) &= \mathbb{P}_{x,c} \left ( \norm{f(x) -  \bold v_c}^2 > \delta^2 \right)  \\
&\leq  \frac{\mathbb{E}_{x,c}   \norm{f(x) - \bold v_c}^2}{\delta^2} \\
& \leq \frac{2\varepsilon}{\delta^2}.
\end{align*}

so it remains still to bound $\mathbb{P}_{x,c}  \big (\min_{c^-:c^-\neq c} \norm{f(x) -  \bold v_{c^-}} < \delta\big )$. Defining $\xi' = \min_{c,c^- : c \neq c^-} \norm{\bold v_c -\bold v_{c-} }$, we have the following fact (proven later).

\paragraph{Fact (see lemma \ref{aux lemma 1}):} $\xi' \geq \sqrt{\xi^2 - 2 \abs{\mathcal C} (1+1/t)\sqrt{\varepsilon}}$.

Using this fact we are able to get control over the tail probability as follows,  
\begin{align*}
\mathbb{P}_{x,c}  \left (\min_{c^-:c^-\neq c} \norm{f(x) -  \bold v_{c^-}} < \delta \right) &\leq \mathbb{P}_{x,c} \left ( \norm{f(x) -  \bold v_c} >  \xi' - \delta \right ) \\
 &\leq \mathbb{P}_{x,c} \left ( \norm{f(x) -  \bold v_c} >  \xi - \sqrt{\xi^2 - 2 \abs{\mathcal C} (1+1/t)\varepsilon^{1/2} } -  \delta \right ) \\
&= \mathbb{P}_{x,c} \left ( \norm{f(x) -  \bold v_c}^2 > (\sqrt{\xi^2 - 2 \abs{\mathcal C} (1+1/t)\varepsilon^{1/2} }  - \delta )^2 \right ) \\
& \leq \frac{2\varepsilon}{(\sqrt{\xi^2 - 2 \abs{\mathcal C} (1+1/t)\varepsilon^{1/2} }  - \delta )^2}.
\end{align*}

where this inequality holds for for any $0 \leq \delta \leq \sqrt{\xi^2 - 2 \abs{\mathcal C} (1+1/t)\varepsilon^{1/2} }$. 
 
 Gathering together our tail probability bounds we find that $\mathbb{P}_{x,c}( \hat{h}(x) = c) \geq 1 - \frac{2\varepsilon}{\delta^2} - \frac{2\varepsilon}{(\sqrt{\xi^2 - 2 \abs{\mathcal C} (1+1/t)\varepsilon^{1/2}}  - \delta )^2}$ for any $0 \leq \delta \leq \sqrt{\xi^2 - 2 \abs{\mathcal C} (1+1/t)\varepsilon^{1/2}}$. That is,
 
 \vspace{-5pt}
 \begin{align*}
\mathbb{P}_{x,c}( \hat{h}(x) \neq c) \leq \frac{2\varepsilon}{\delta^2} + \frac{2\varepsilon}{(\sqrt{\xi^2 - 2 \abs{\mathcal C} (1+1/t)\varepsilon^{1/2}}  - \delta )^2}
\end{align*}
\vspace{-5pt}

Since this holds for any $0 \leq \delta \leq \sqrt{\xi^2 - 2 \abs{\mathcal C} (1+1/t)\varepsilon^{1/2}}$,
\vspace{-10pt}

 \begin{align*}
\mathbb{P}_{x,c}( \hat{h}(x) \neq c) \leq \min_{0 \leq \delta \leq \sqrt{\xi^2 - 2 \abs{\mathcal C} \varepsilon}} \bigg \{  \frac{2\varepsilon}{\delta^2} + \frac{2\varepsilon}{(\sqrt{\xi^2 - 2 \abs{\mathcal C} (1+1/t)\varepsilon^{1/2}}  - \delta )^2} \bigg \}.
\end{align*}

Elementary calculus shows that the minimum is attained at $\delta =\frac{\sqrt{\xi^2 - 2 \abs{\mathcal C} (1+1/t)\varepsilon^{1/2}}}{2}$. Plugging this in yields the final bound, 

\vspace{-10pt}
\begin{equation*} 
\mathbb P(\hat{h}(x) \neq c) \leq \frac{8\varepsilon}{(\xi^2 - 2\abs{\mathcal C}(1+1/t)\varepsilon^{1/2})^2}.
\end{equation*}

\end{proof}

\begin{lemma}
Consider the same setting as introduced in Theorem \ref{generalization thm}. In particular define 

 \begin{align*}
\xi' = \min_{c,c^- : c \neq c^-} \norm{\bold v_c -\bold v_{c-} },
\qquad
\xi = \min_{c,c^- : c \neq c^-} \norm{\bold v^*_c -\bold v^*_{c-} }.
\end{align*}

where $ \{ \bold v^*_c \in \mathbb{S}^{d-1}/t \}_{c \in \mathcal C}$ is a solution to Problem \ref{eqn: repulsion eqn}, and $ \{ \bold v_c \in \mathbb{S}^{d-1}/t \}_{c \in \mathcal C}$ is defined via $\bold v_c =  f(x_c) $ with $x_c \in \arg\min_{\overline{\{x^+: h(x^+) = c\}}} \mathbb{E}_{x \sim p(\cdot | c)} \norm{f(x) - f(x^+)}^2$  for each $c \in \mathcal C$. Then we have,  

\[\xi' \geq \sqrt{\xi^2 - 2 \abs{\mathcal C} (1+1/t)\varepsilon^{1/2} }.\]
\label{aux lemma 1}
\end{lemma}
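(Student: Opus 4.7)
The plan is to exploit the decomposition $\mathcal{L}_\infty^*(f) = \mathcal{L}_\text{align}(f) - G(f)/2$, where $G(f) := \mathbb{E}_x \inf_{x^- \nsim x} \|f(x)-f(x^-)\|^2$, and to combine two consequences of the $\varepsilon$-excess-risk hypothesis: that $f$ is nearly constant on each class, and that its expected minimum cross-class separation is near optimal. First, from Theorem~\ref{thm: gobal opt} and the discussion of Section~\ref{appendix: second proof}, we know that $\inf_{\bar f}\mathcal{L}_\infty^*(\bar f) = -T^*/2$ where $T^* = \max_{\bar f} G(\bar f) \geq \xi^2$ (by definition of $\xi$), and that $G(f) \leq T^*$ for every $f$. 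Combined with $\mathcal{L}_\text{align}(f) \geq 0$, the excess-risk hypothesis therefore yields the two estimates $\mathcal{L}_\text{align}(f) \leq \varepsilon$ (so $\mathbb{E}_{x,x^+}\|f(x)-f(x^+)\|^2 \leq 2\varepsilon$) and $G(f) \geq T^* - 2\varepsilon \geq \xi^2 - 2\varepsilon$.

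Next, the defining property of $x_c$ as a minimizer of $\mathbb{E}_{x \sim p(\cdot\mid c)}\|f(x)-f(\cdot)\|^2$ over the closure of class $c$ gives $\mathbb{E}_{x \sim p(\cdot\mid c)}\|f(x)-\bold v_c\|^2 \leq \mathbb{E}_{x,x^+ \sim p(\cdot\mid c)}\|f(x)-f(x^+)\|^2$. Averaging against the uniform $\rho$ yields $\mathbb{E}_{c,x}\|f(x)-\bold v_c\|^2 \leq 2\varepsilon$, hence the per-class bound $\mathbb{E}_{x \sim p(\cdot\mid c)}\|f(x)-\bold v_c\|^2 \leq 2\varepsilon|\mathcal C|$. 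By Markov's inequality, for every class $c'$ we can pick a representative $\tilde y_{c'}$ in class $c'$ with $\|f(\tilde y_{c'}) - \bold v_{c'}\| \leq \sqrt{2\varepsilon|\mathcal C|}$ (we work with $\tilde y_{c'}$ rather than $x_{c'}$ directly to avoid the boundary-of-closure subtlety).

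The third step upper-bounds $G(f)$ in terms of the prototypes. For $x$ in class $c$, the triangle inequality at $x^- = \tilde y_{c'}$ gives, for any $c' \neq c$,
\[
\inf_{x^- \nsim x}\|f(x)-f(x^-)\| \;\leq\; \|f(x)-\bold v_c\| + \|\bold v_c - \bold v_{c'}\| + \sqrt{2\varepsilon|\mathcal C|}.
\]
Squaring and expanding, then using the hypersphere diameter $\|\bold v_c - \bold v_{c'}\| \leq 2/t$ together with Cauchy--Schwarz (and Jensen to bound $\mathbb{E}\|f(x)-\bold v_c\|$ by $\sqrt{2\varepsilon|\mathcal C|}$) to control the cross terms, one obtains, after minimizing over $c'$,
\[
\mathbb{E}_{x \sim p(\cdot\mid c)}\!\!\inf_{x^-\nsim x}\|f(x)-f(x^-)\|^2 \;\leq\; (\xi'_c)^2 + 2|\mathcal C|(1+1/t)\sqrt{\varepsilon}
\]
after collecting and bounding the error terms by $|\mathcal C|(1+1/t)\sqrt{\varepsilon}$ (using $\varepsilon \leq 1$ so $\varepsilon|\mathcal C| \leq |\mathcal C|\sqrt{\varepsilon}$ and $\sqrt{\varepsilon|\mathcal C|} \leq |\mathcal C|\sqrt{\varepsilon}$).

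Averaging the previous display over $c$ and comparing with the lower bound $G(f) \geq \xi^2 - 2\varepsilon$ gives $\mathbb{E}_c(\xi'_c)^2 \geq \xi^2 - 2|\mathcal C|(1+1/t)\sqrt{\varepsilon}$. The final (and main) obstacle is that $\xi' = \min_c \xi'_c$ is \emph{not} automatically controlled by an average bound. To close this gap, one applies the per-class upper bound of step three directly at the class $c^\star$ realising the overall minimum (so $\xi'_{c^\star} = \xi'$), and chooses $c'$ in the triangle inequality to be the partner of $c^\star$ in the closest pair; this avoids the average-vs-minimum conversion and yields $(\xi')^2 \geq \xi^2 - 2|\mathcal C|(1+1/t)\sqrt{\varepsilon}$ directly, provided one can certify a per-class lower bound on $\mathbb{E}_{x \sim p(\cdot\mid c^\star)}\inf_{x^-\nsim x}\|f(x)-f(x^-)\|^2$ matching the aggregate $\xi^2 - 2\varepsilon$. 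The delicate part is carefully tracking which error terms pick up the $|\mathcal C|$ and which pick up the $1/t$, so that the final constants assemble into the claimed $2|\mathcal C|(1+1/t)\sqrt{\varepsilon}$.
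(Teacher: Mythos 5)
Your overall architecture is close to the paper's (decompose $\mathcal L^*_\infty$ into an alignment term plus a uniformity term equal to $-G(f)/2$ up to a constant, deduce $\mathcal L_\text{align}(f)\le\varepsilon$ and $G(f)\ge \mathbb{E}_c X^*_c-2\varepsilon$, and pay a factor $|\mathcal C|$ to convert an averaged statement into a per-class one), but the decisive step is missing. You correctly identify the obstacle --- an average lower bound $\mathbb{E}_c(\xi'_c)^2\ge\xi^2-(\text{err})$ says nothing about $\min_c(\xi'_c)^2$ --- but your proposed fix only restates it: working ``directly at the class $c^\star$ realising the minimum'' requires a per-class lower bound on $\mathbb{E}_{x\sim p(\cdot\mid c^\star)}\inf_{x^-\nsim x}\|f(x)-f(x^-)\|^2$, and you give no way to certify one. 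The aggregate $G(f)\ge\xi^2-2\varepsilon$ cannot be localized to a single class without an upper bound on the other classes' contributions, and the only a priori cap (the squared diameter $4/t^2$) makes the resulting bound vacuous. The paper's resolution is to run the average-to-pointwise conversion not on $(\xi'_c)^2$ but on the \emph{non-negative} deviation $|X_c-X^*_c|$, where $X_c=\min_{c^-\neq c}\|\bold v_c-\bold v_{c^-}\|^2$: one establishes \emph{two-sided} control $\mathbb{E}_c X^*_c-2(1+1/t)\sqrt{\varepsilon}\le\mathbb{E}_c X_c\le\mathbb{E}_c X^*_c$ (the upper bound comes from feasibility of $\{\bold v_c\}$ in the Tammes problem --- a direction your plan never invokes), hence $\mathbb{E}_c|X_c-X^*_c|\le 2(1+1/t)\sqrt{\varepsilon}$, and since each summand is non-negative and $\rho$ is uniform, $|X_c-X^*_c|\le 2|\mathcal C|(1+1/t)\sqrt{\varepsilon}$ for \emph{every} $c$, giving $X_c\ge X^*_c-2|\mathcal C|(1+1/t)\sqrt{\varepsilon}\ge\xi^2-2|\mathcal C|(1+1/t)\sqrt{\varepsilon}$ pointwise. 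Without this (or an equivalent) device your argument terminates at the average and the claimed conclusion does not follow.

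A secondary issue: even in the averaged version, your step-three error terms do not assemble into $2|\mathcal C|(1+1/t)\sqrt{\varepsilon}$. The cross term $2\|\bold v_c-\bold v_{c'}\|\cdot\bigl(\mathbb{E}\|f(x)-\bold v_c\|+\sqrt{2\varepsilon|\mathcal C|}\bigr)$ is of order $(8\sqrt{2|\mathcal C|}/t)\sqrt{\varepsilon}$, which exceeds $2|\mathcal C|\sqrt{\varepsilon}/t$ whenever $|\mathcal C|<32$, so the constants do not close for small $|\mathcal C|$. The paper sidesteps the diameter entirely by working with inner products: $\sup_{x^-\nsim x}f(x)^\top f(x^-)\ge\max_{c'\neq c}\bold v_c^\top\bold v_{c'}-\|f(x)-\bold v_c\|/t$ by Cauchy--Schwarz against $\|f(x^-)\|=1/t$, followed by Jensen, which yields a clean $\sqrt{\varepsilon}/t$ with the factor $|\mathcal C|$ entering only at the final average-to-pointwise step.
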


\begin{proof}
Define,

\begin{align*}
X = \min_{c^-:c^-\neq c} \norm{\bold v_c -\bold v_{c-} }^2,
\qquad
X^* = \min_{c^-:c^-\neq c} \norm{\bold v^*_c -\bold v^*_{c-} }^2.
\end{align*}

$X$ and $X^*$ are random due to the randomness of $c \sim \rho$. We can split up the following expectation by conditioning on the event $\{ X \leq X^*\}$ and its complement,

\begin{align}\label{abs eq X X^*}
\mathbb{E}\abs{X-X^* } = \mathbb{P}(X \geq X^*) \mathbb{E}[ X -X^*]  +  \mathbb{P}(X \leq X^*) \mathbb{E}[ X^* -X] .
\end{align}

Using $\mathcal L_\infty ^*(f) - \inf_{\bar{f} \; \text{measurable}} \mathcal L_\infty ^*(\bar{f}) \leq \varepsilon$ and the notational re-writing of the objective  $\mathcal L_\infty ^*$ introduced before Theorem \ref{thm: gobal opt}, we observe the following fact, whose proof we give in a separate lemma after the conclusion of this proof.

\paragraph{Fact (see lemma \ref{aux lemma}):} $\mathbb{E}X^* - 2(1+1/t)\sqrt{\varepsilon}\leq \mathbb{E}X \leq \mathbb{E}X^*$. 

This fact implies in particular $ \mathbb{E}[ X -X^*]\leq 0$ and $ \mathbb{E}[ X^* -X] \leq 2(1+1/t)\sqrt{\varepsilon}$. Inserting both inequalities into Eqn. \ref{abs eq X X^*} we find that $\mathbb{E}\abs{X-X^* } \leq 2 (1+1/t)\sqrt{\varepsilon}$. In other words, since $\rho$ is uniform,

\begin{align*}
\frac{1}{\abs{\mathcal C} } \sum_{c \in \mathcal C} \abs{ \min_{c^-:c^-\neq c} \norm{\bold v_c -\bold v_{c-} }^2 - 
 \min_{c^-:c^-\neq c} \norm{\bold v^*_c -\bold v^*_{c-} }^2 } \leq 2 (1+1/t)\sqrt{\varepsilon}.
\end{align*}

From which we can say that for any $c \in \mathcal C$ , 

\begin{align*}
\abs{ \min_{c^-:c^-\neq c} \norm{\bold v_c -\bold v_{c-} }^2 - 
 \min_{c^-:c^-\neq c} \norm{\bold v^*_c -\bold v^*_{c-} }^2 } \leq 2 \abs{\mathcal C} (1+1/t)\sqrt{\varepsilon}.
\end{align*}

So $\min_{c^-:c^-\neq c} \norm{\bold v_c -\bold v_{c-} } \geq \sqrt{
 \min_{c^-:c^-\neq c} \norm{\bold v^*_c -\bold v^*_{c-} }^2 - 2 \abs{\mathcal C} (1+1/t)\varepsilon^{1/2} } \geq \sqrt{\xi^2 - 2 \abs{\mathcal C}(1+1/t)\varepsilon^{1/2}}$. Since this holds for any $c \in \mathcal C$ , we conclude that $\xi' \geq\sqrt{\xi^2 - 2 \abs{\mathcal C} (1+1/t)\varepsilon^{1/2} }$.
\end{proof}

\begin{lemma}\label{aux lemma}
Consider the same setting as introduced in Theorem \ref{generalization thm}. Define also,
\begin{align*}
X = \min_{c^-:c^-\neq c} \norm{\bold v_c -\bold v_{c-} }^2,
\qquad
X^* = \min_{c^-:c^-\neq c} \norm{\bold v^*_c -\bold v^*_{c-} }^2,
\end{align*}

where $\bold v_c =  f(x_c)$ with $x_c \in \arg\min_{\overline{\{x^+: h(x^+)= c\}}} \mathbb{E}_{x \sim p(\cdot | c)} \norm{f(x) - f(x^+)}^2$ for each $c \in \mathcal C$. We have, 
\[ \mathbb{E}X^* - 2(1+1/t)\sqrt{\varepsilon} \leq \mathbb{E}X \leq \mathbb{E}X^*.\]
\end{lemma}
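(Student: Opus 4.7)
The plan is to deduce both inequalities from the excess‑risk hypothesis $\mathcal L_\infty^*(f) - \inf_{\bar f}\mathcal L_\infty^*(\bar f) \leq \varepsilon$ via the decomposition $\mathcal L_\infty^*(f) = \mathcal L_\text{align}(f) + \mathcal L_\text{unif}^*(f) - 1/t^2$ that was recorded just before Theorem \ref{thm: gobal opt}, where $\mathcal L_\text{align}(f) = \tfrac12\mathbb{E}_{x,x^+}\|f(x)-f(x^+)\|^2$ and, by Eqn.~(\ref{eqn: inner product MSE loss}), $\mathcal L_\text{unif}^*(f) = -\tfrac12\mathbb{E}_x\inf_{x^-\nsim x}\|f(x)-f(x^-)\|^2 + 1/t^2$. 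Theorem \ref{thm: gobal opt} identifies $\inf_{\bar f}\mathcal L_\infty^*(\bar f) = -\tfrac12\mathbb{E}X^*$, attained by an $f^*$ that is constant on classes with values $\{\bold v_c^*\}$ solving Problem~(\ref{eqn: repulsion eqn}). Hence the excess‑risk hypothesis gives me two facts I will use throughout: (i) $\mathcal L_\text{align}(f) \leq \varepsilon$, and consequently $\mathbb{E}_x\|f(x)-\bold v_{c(x)}\|^2 \leq 2\varepsilon$ by the defining choice $\bold v_c = f(x_c)$ as a minimizer over $x^+$ in class $c$; (ii) $\mathbb{E}_x \inf_{x^-\nsim x}\|f(x)-f(x^-)\|^2 \geq \mathbb{E}X^* - 2\varepsilon$, since the nonnegative $\mathcal L_\text{align}$ can only help.

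The upper bound $\mathbb{E}X \leq \mathbb{E}X^*$ is immediate: because $\rho$ is uniform and $\{\bold v_c\}\subset\mathbb{S}^{d-1}/t$, the quantity $\mathbb{E}X = |\mathcal C|^{-1}\sum_c \min_{c'\neq c}\|\bold v_c-\bold v_{c'}\|^2$ is simply the value of the objective in Problem~(\ref{eqn: repulsion eqn}) at a particular configuration, and $\{\bold v_c^*\}$ was defined to be a maximizer. For the lower bound $\mathbb{E}X \geq \mathbb{E}X^* - 2(1+1/t)\sqrt{\varepsilon}$ I would exploit that each $\bold v_{c'} = f(x_{c'})$ is itself a feasible competitor in the inf over $x^-$ with $h(x^-)=c'$, so $\inf_{x^-\nsim x}\|f(x)-f(x^-)\|^2 \leq \min_{c'\neq c(x)}\|f(x)-\bold v_{c'}\|^2$; combining with (ii) yields $\mathbb{E}_x\min_{c'\neq c(x)}\|f(x)-\bold v_{c'}\|^2 \geq \mathbb{E}X^* - 2\varepsilon$. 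Then the triangle inequality
\[
\min_{c'\neq c(x)}\|f(x)-\bold v_{c'}\| \;\leq\; \|f(x)-\bold v_{c(x)}\| + \min_{c'\neq c(x)}\|\bold v_{c(x)}-\bold v_{c'}\|,
\]
squared and integrated in $x$, gives an upper bound on the left‑hand side in terms of $\mathbb{E}X$ plus a cross term plus $\mathbb{E}_x\|f(x)-\bold v_{c(x)}\|^2$. The cross term $2\mathbb{E}_x\big[\|f(x)-\bold v_{c(x)}\|\cdot \min_{c'\neq c(x)}\|\bold v_{c(x)}-\bold v_{c'}\|\big]$ is controlled either by Cauchy–Schwarz with (i) or by the uniform bound $\|\bold v_c-\bold v_{c'}\|\leq 2/t$ combined with Jensen applied to (i), producing an $O(\sqrt{\varepsilon}/t)$ contribution; using (i) for the pure $\mathbb{E}_x\|f(x)-\bold v_{c(x)}\|^2$ term gives an additional $O(\varepsilon)$. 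Rearranging and bounding $\varepsilon\leq\sqrt{\varepsilon}$ (valid since $\varepsilon\leq 1$) delivers the stated $2(1+1/t)\sqrt{\varepsilon}$.

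The main obstacle is the bookkeeping of constants in the last step: the triangle–squared expansion produces three error terms — a quadratic alignment term $\mathbb{E}_x\|f(x)-\bold v_{c(x)}\|^2 \leq 2\varepsilon$, a cross term scaling like $(1/t)\sqrt{\varepsilon}$, and the residual $2\varepsilon$ from fact (ii) — and collapsing them into a single factor $2(1+1/t)\sqrt{\varepsilon}$ requires a careful choice between Cauchy–Schwarz and the uniform $2/t$ bound on inter‑class distances, as well as the absorption $\varepsilon \le \sqrt{\varepsilon}$. Everything else (the decomposition identity, invocation of Theorem~\ref{thm: gobal opt}, and the feasibility argument placing $\bold v_{c'}$ inside the class‑$c'$ infimum) is routine once set up in this order.
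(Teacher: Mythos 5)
Your skeleton matches the paper's own proof: both arguments extract from the excess-risk hypothesis the facts (i) $\mathcal L_\text{align}(f)\le\varepsilon$, hence $\mathbb{E}_{x,c}\norm{f(x)-\bold v_c}^2\le 2\varepsilon$ (this uses that the minimizer $f^*$ of $\mathcal L_\infty^*$ simultaneously minimizes $\mathcal L_\text{unif}^*$, as established in the proof of Theorem~\ref{thm: gobal opt}), and (ii) a lower bound on the uniformity term of $f$ in terms of $\mathbb{E}X^*$; both obtain $\mathbb{E}X\le\mathbb{E}X^*$ from feasibility of the configuration $\{\bold v_c\}$ for Problem~\ref{eqn: repulsion eqn}; and both finish by comparing the class-level minimum distance for $\{\bold v_c\}$ to a quantity involving $f(x)$ via the alignment bound. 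Both also share the same minor technicality that $x_{c'}$ lies only in the closure of class $c'$.

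The one substantive divergence is the final perturbation step, and it is exactly where your argument does not deliver the stated constant. The paper perturbs \emph{inner products}: writing $f(x)^\top f(x^-)=\bold v_c^\top f(x^-)+(f(x)-\bold v_c)^\top f(x^-)$, the error is a single term bounded by $\norm{f(x)-\bold v_c}\cdot\norm{f(x^-)}=\norm{f(x)-\bold v_c}/t$, i.e.\ linear in the alignment error with coefficient $1/t$. Your squared triangle inequality instead produces $\norm{f(x)-\bold v_c}^2+2\norm{f(x)-\bold v_c}\min_{c'\neq c}\norm{\bold v_c-\bold v_{c'}}$, and since $\min_{c'\neq c}\norm{\bold v_c-\bold v_{c'}}$ can only be bounded by the diameter $2/t$ (Cauchy--Schwarz against $\sqrt{\mathbb{E}X}\le 2/t$ does no better), the cross term is at best $(4/t)\,\mathbb{E}\norm{f(x)-\bold v_c}\le(4\sqrt2/t)\sqrt{\varepsilon}$; adding the quadratic term and the $2\varepsilon$ from fact (ii) leaves you with roughly $4\sqrt{\varepsilon}+(4\sqrt2/t)\sqrt{\varepsilon}$, not $2(1+1/t)\sqrt{\varepsilon}$. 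No choice between Cauchy--Schwarz and the uniform bound repairs this: the loss is intrinsic to squaring the triangle inequality. The fix is the exact sphere identity $\norm{u-v}^2=2/t^2-2u^\top v$, which gives $\norm{f(x)-\bold v_{c'}}^2=\norm{\bold v_c-\bold v_{c'}}^2-2(f(x)-\bold v_c)^\top\bold v_{c'}$, killing the quadratic term and replacing your cross-term coefficient $\le 4/t$ by $2\norm{\bold v_{c'}}=2/t$ --- this is precisely the paper's route. (Even then one lands at $2\varepsilon+(2\sqrt2/t)\sqrt{\varepsilon}$; the paper's own accounting silently drops a $\sqrt2$ from $\sqrt{2\varepsilon}$.) So your proof is structurally sound and proves the lemma, but only with a strictly larger absolute constant than claimed; the assertion that careful bookkeeping along your route recovers $2(1+1/t)$ is not correct.
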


\begin{proof}
By Theorem \ref{eqn: repulsion eqn} we know there is an $f^*$ attaining the minimum $\inf_{\bar{f} \; \text{measurable}} \mathcal L_\infty ^*(\bar{f})$ and that this $f^*$ attains $ \mathcal L^* _\text{align}(f^*) =0$, and also minimizes the uniformity term $\mathcal L _\text{unif}^*(f) $, taking the value $\mathcal L _\text{unif}^*(f^*) =  \mathbb{E}_{c \sim \rho} \max_{c^-: c^- \neq c}  \bold {v^*_c}^\top \bold v^*_{c^-}$.
Because of this we find,

\begin{align*}
\mathcal L _\text{unif}^*(f) &\leq \big ( \mathcal L _\infty^*(f) -  \mathcal L _\infty^*(f^*) \big ) + \big (  \mathcal L _\text{align}^*(f^*)  - \mathcal L _\text{align}^*(f)  \big ) + \mathcal L _\text{unif}^*(f^*) \\
&\leq \big ( \mathcal L _\infty^*(f) -  \mathcal L _\infty^*(f^*) \big )  + \mathcal L _\text{unif}^*(f^*) \\
&\leq \varepsilon + \mathcal L _\text{unif}^*(f^*) \\
&=  \varepsilon +\mathbb{E}_{c \sim \rho} \max_{c^-: c^- \neq c}  \bold {v^*_c}^\top \bold v^*_{c^-}.
\end{align*}

Since we would like to bound $\mathbb{E}_{c \sim \rho} \max_{c^-: c^- \neq c}  \bold {v_c}^\top \bold v_{c^-}$ in terms of $\mathbb{E}_{c \sim \rho} \max_{c^-: c^- \neq c}  \bold {v^*_c}^\top \bold v^*_{c^-}$, this observation means that is suffices to bound $\mathbb{E}_{c \sim \rho} \max_{c^-: c^- \neq c}  \bold {v_c}^\top \bold v_{c^-}$ in terms of $\mathcal L _\text{unif}^*(f)$. To this end, note that for a fixed $c$, and $x$ such that $h(x)=c$ we have, 

\begin{align*}
\sup_{x^- \nsim x} f(x) ^\top f(x^-) &= \sup_{x^- \nsim x}  \big \{ {\bold v_c}^\top  f(x^-) +  (f(x) -{\bold v_c})^\top f(x^-) \big \} \\
&= \sup_{x^- \nsim x}   {\bold v_c}^\top  f(x^-) -  \norm{f(x) -{\bold v_c}}/t \\
&\geq \max_{x^- \in \{ x_c\}_{c \in \mathcal C}}  {\bold v_c}^\top  f(x^-)  -  \norm{f(x) -{\bold v_c}}/t \\
&= \max_{c^- \neq c}   {\bold v_c}^\top    {\bold v_{c^-}} -  \norm{f(x) -{\bold v_c}}/t \\
\end{align*}

where the inequality follows since $ \{ x_c\}_{c \in \mathcal C}$ is a subset of the closure of $\{x^-: x^- \nsim x\}$. Taking expectations over $c,x$,

\begin{align*}
\mathcal L _\text{unif}^*(f)  &= \mathbb{E}_{x,c}\sup_{x^- \nsim x} f(x) ^\top f(x^-) \\
&\geq \mathbb{E}_{c\sim \rho} \max_{c^- \neq c}   {\bold v_c}^\top    {\bold v_{c^-}} -  \mathbb{E}_{x,c} \norm{f(x) -{\bold v_c}}/t \\
&\geq \mathbb{E}_{c\sim \rho} \max_{c^- \neq c}   {\bold v_c}^\top    {\bold v_{c^-}} -  \sqrt{\mathbb{E}_{x,c} \norm{f(x) -{\bold v_c}}^2}/t \\
&\geq \mathbb{E}_{c\sim \rho} \max_{c^- \neq c}   {\bold v_c}^\top    {\bold v_{c^-}} -  \sqrt{\varepsilon}/t .
\end{align*}

So since $\varepsilon \leq \sqrt{\varepsilon}$, we have found that

\[  \mathbb{E}_{c\sim \rho} \max_{c^- \neq c}   {\bold v_c}^\top    {\bold v_{c^-}} \leq  \sqrt{\varepsilon}/t +  \varepsilon + \mathbb{E}_{c \sim \rho} \max_{c^-: c^- \neq c}  \bold {v^*_c}^\top \bold v^*_{c^-} \leq (1+1/t)\sqrt{\varepsilon} +  \mathbb{E}_{c \sim \rho} \max_{c^-: c^- \neq c}  \bold {v^*_c}^\top \bold v^*_{c^-}.\]

Of course we also have,
\[\mathbb{E}_{c \sim \rho} \max_{c^-: c^- \neq c}  \bold {v^*_c}^\top {\bold v^*_{c^-}} =  \mathcal L _\text{unif}^*(f^*) \leq \mathbb{E}_{c \sim \rho} \max_{c^-: c^- \neq c}  \bold {v_c}^\top \bold v_{c^-}  \]

since the embedding $f(x)=\bold v_c$ whenever $h(x)=c$ is also a feasible solution. Combining these two inequalities with the simple identity $\bold x^\top \bold y = 1/t^2 - \norm{\bold x - \bold y}^2/2$ for all length $1/t$ vectors $\bold x, \bold y$, we find,

\begin{align*}
1/t^2  - \mathbb{E}_{c \sim \rho} \max_{c^-: c^- \neq c} \norm{  \bold {v^*_c} - \bold v^*_{c^-}}^2/2  &\leq 1/t^2 - \mathbb{E}_{c \sim \rho} \max_{c^-: c^- \neq c} \norm{  \bold {v_c} - \bold v_{c^-}}^2/2  \\
&\leq  1/t^2  - \mathbb{E}_{c \sim \rho} \max_{c^-: c^- \neq c} \norm{  \bold {v^*_c} - \bold v^*_{c^-}}^2 /2 + (1+1/t)\sqrt{\varepsilon}.
\end{align*}

Subtracting $1/t^2 $ and multiplying by $-2$ yields the result.

\end{proof}

\section{Graph Representation Learning}\label{appendix: graph method}

We describe in detail the hard sampling method for graphs whose results are reported in Section \ref{section: graph classification}. Before getting that point, in the interests of completeness we cover some required background details on the InfoGraph method of  \cite{sun2019infograph}. For further information see the original paper \citep{sun2019infograph}.

\subsection{Background on Graph Representations}

We observe a set of graphs $\mathbf{G} = \{ G_j \in \mathbb{G}  \}_{j=1}^n$ sampled according to a distribution $p$ over an ambient graph space $\mathbb{G}$. Each node $u$ in a graph $G$ is assumed to have features $h^{(0)}_u$ living in some Euclidean space. We consider a $K$-layer graph neural network, whose $k$-th layer iteratively computes updated embeddings for each node $v \in G$ in the following way,

\[ h^{(k)}_v = \text{COMBINE}^{(k)} \left ( h_v^{(k-1)} , \text{AGGREGATE}^{(k)}  \left (  \left \{ \left ( h_v^{(k-1)}, h_u^{(k-1)}  , e_{uv} \right )  : u \in \mathcal N (v)   \right \}  \right )   \right ) \]

where $ \text{COMBINE}^{(k)}$ and $\text{AGGREGATE}^{(k)}$ are parameterized learnable functions and $ \mathcal N (v) $ denotes the set of neighboring nodes of $v$. The $K$ embeddings for a node $u$ are collected together to obtain a single final summary embedding for $u$. As recommended by \cite{xu2018powerful} we use concatenation, $h^u = h^u(G) = \text{CONCAT} \left (\{ h^{(k)}_u \}_{k=1}^K  \right)$ to obtain an embedding in $\mathbb{R}^d$. Finally, the node representations are combined together into a length $d$ graph level embedding using a readout function,

\[ H(G) = \text{READOUT} \left (\{ h^{u} \}_{u \in G}  \right)\]

which is typically taken to be a simple permutation invariant function such as the sum or mean. The InfoGraph method aims to maximize the mutual information between the graph level embedding $H(G) $ and patch-level embeddings $h^u(G) $ using the following objective,

\[ \max_h \mathbb{E}_{G \sim p} \frac{1}{\abs{G}} \sum_{u\in G} I \left (h^u(G) ; H(G) \right )   \]

In practice the population distribution $p$ is replaced by its empirical counterpart, and the mutual information $I$ is replaced by a variational approximation $I_T$. In line with  \cite{sun2019infograph} we use the Jensen-Shannon mutual information estimator as formulated by \cite{nowozin2016f}. It is defined using a neural network discriminator $T : \mathbb{R}^{2d} \rightarrow \mathbb{R}$ as, 

\[ I_T  \left (h^u(G) ; H(G) \right ) = \mathbb{E}_{G \sim p} \left [ - \text{sp}( - T   \left (h^u(G) , H(G) \right )) \right ] - \mathbb{E}_{(G,G')  \sim p \times  p} \left [ \text{sp}(  T   \left (h^u(G) , H(G') \right )) \right ] \]

where $\text{sp}(z) = \log(1+e^z)$ denotes the softplus function. The finial objective is the joint maximization over $h$ and $T$, 

\[ \max_{\theta, \psi} \mathbb{E}_{G \sim p } \frac{1}{\abs{G}} \sum_{u\in G} I_T \left (h^u(G) ; H(G) \right )   \]

\subsection{Hard Negative Sampling for Learning Graph Representations}

In order to derive a simple modification of the NCE hard sampling technique that is appropriate for use with InfoGraph, we first provide a mildly generalized view of hard sampling. Recall that the NCE contrastive objective can be decomposed into two constituent pieces,

\[ \mathcal L (f,q) = \mathcal L_\text{align}(f) +  \mathcal L_\text{unif}(f,q) \]  

where $q$ is in fact a family of distributions $q(\xm; x)$ over $\xm$ that is indexed by the possible values of the anchor $x$. $ \mathcal L_\text{align}$ performs the role of  ``aligning'' positive pairs (embedding near to one-another), while $\mathcal L_\text{unif}$ repels negative pairs. The hard sampling framework aims to solve,

\[ \inf_f \sup_q \mathcal L (f,q). \]  

In the case of NCE loss we take,

\vspace{-10pt}
\begin{align*}
& \mathcal L_\text{align}(f)  = -  \mathbb{E}_{\substack{x \sim p \\ x^+ \sim p_x^+ }} f(x)^T f(x^+)  ,  \\
& \mathcal L_\text{unif}(f,q)  =    \mathbb{E}_{\substack{x \sim p \\ x^+ \sim p_x^+ }} \log \left \{ e^{f(x)^T f(x^+)} +Q \mathbb{E}_{x^- \sim q} [e^{f(x)^T f(x^-)}]  \right \}.
\end{align*}

View this view, we can easily adapt to the InfoGraph framework, taking

\vspace{-10pt}
\begin{align*}
& \mathcal L_\text{align}(h,T)  = -  \mathbb{E}_{G \sim p} \frac{1}{\abs{G}} \sum_{u\in G} \text{sp}( - T   \left (h^u(G) , H(G) \right )) ,  \\
& \mathcal L_\text{unif}(h,T,q)  =    -  \mathbb{E}_{G \sim p} \frac{1}{\abs{G}} \sum_{u\in G}  \mathbb{E}_{G'  \sim  q } \text{sp}(  T   \left (h^u(G) , H(G') \right )) 
\end{align*}

Denote by $\hat{p}$ the distribution over nodes $u \in \mathbb R^s$ defined by first sampling $G \sim p$, then sampling $u \in G$ uniformly over all nodes of $G$. Then these two terms can be simplified to 

\vspace{-10pt}
\begin{align*}
& \mathcal L_\text{align}(h,T)  = -  \mathbb{E}_{u \sim \hat{p}} \text{sp}( - T   \left (h^u(G) , H(G) \right )) ,  \\
& \mathcal L_\text{unif}(h,T,q)  =    -   \mathbb{E}_{(u,G') \sim \hat{p} \times  q } \text{sp}(  T   \left (h^u(G) , H(G') \right )) 
\end{align*}

At this point it becomes clear that, just as with NCE, a distribution $q^* \in \argmax_q \mathcal L (f,q)$ in the InfoGraph framework if it is supported on $\argmax_{G' \in \mathbb G} \text{sp}(  T   \left (h^u(G) , H(G') \right )) $. Although this is still hard to compute exactly, it can be approximated by,

\[ q_u^\beta(G') \propto \exp \left ( \beta T (h^u(G), H(G) ) \right ) \cdot p(G'). \]

 \begin{wrapfigure}{r}{6.5cm}
  \includegraphics[width=6.5cm]{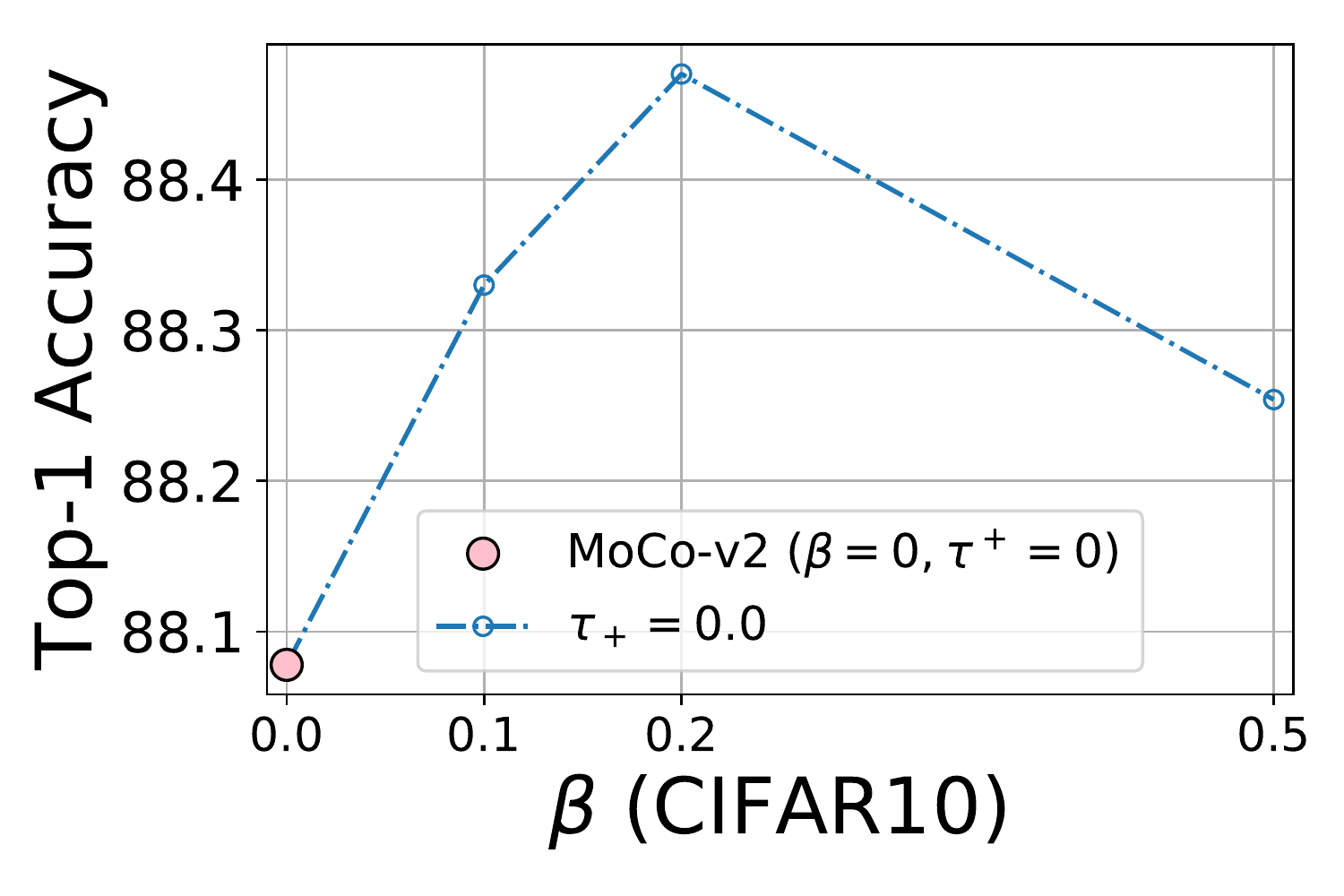}  
     \caption{ Hard negative sampling using MoCo-v2 framework. Results show that hard negative samples can still be useful when the negative memory bank is very large (in this case  $N=65536$).}
     \vspace{-30pt}
      \label{fig: moco beta}
\end{wrapfigure}
\section{Additional Experiments}\label{sec: app ablations}

\subsection{Hard negatives with large batch sizes}\label{sec: moco}

The vision experiments in the main body of the paper are all based off the SimCLR framework \citep{chen2020simple}. They use a relatively small batch size (up to $512$). In order to test whether our hard negatives sampling method can help when the negative batch size is very large, we also run experiments using MoCo-v2 with standard negative memory bank size $N=65536$ \citep{he2020momentum,chen2020improved}. We adopt the official MoCo-v2 code\footnote{  \url{https://github.com/facebookresearch/moco}}. Embeddings are trained for $200$ epochs, with batch size $128$. Figure \ref{fig: moco beta} summarizes the results. We find that hard negative sampling can still improve the generalization of embeddings trained on CIFAR10: MoCo-v2 attains linear readout accuracy of 88.08\%, and MoCo-v2  with hard negatives ($\beta=0.2$, $\tau^+=0$) attains 88.47\%.

 \begin{wrapfigure}{r}{6.5cm}
  \includegraphics[width=6.5cm]{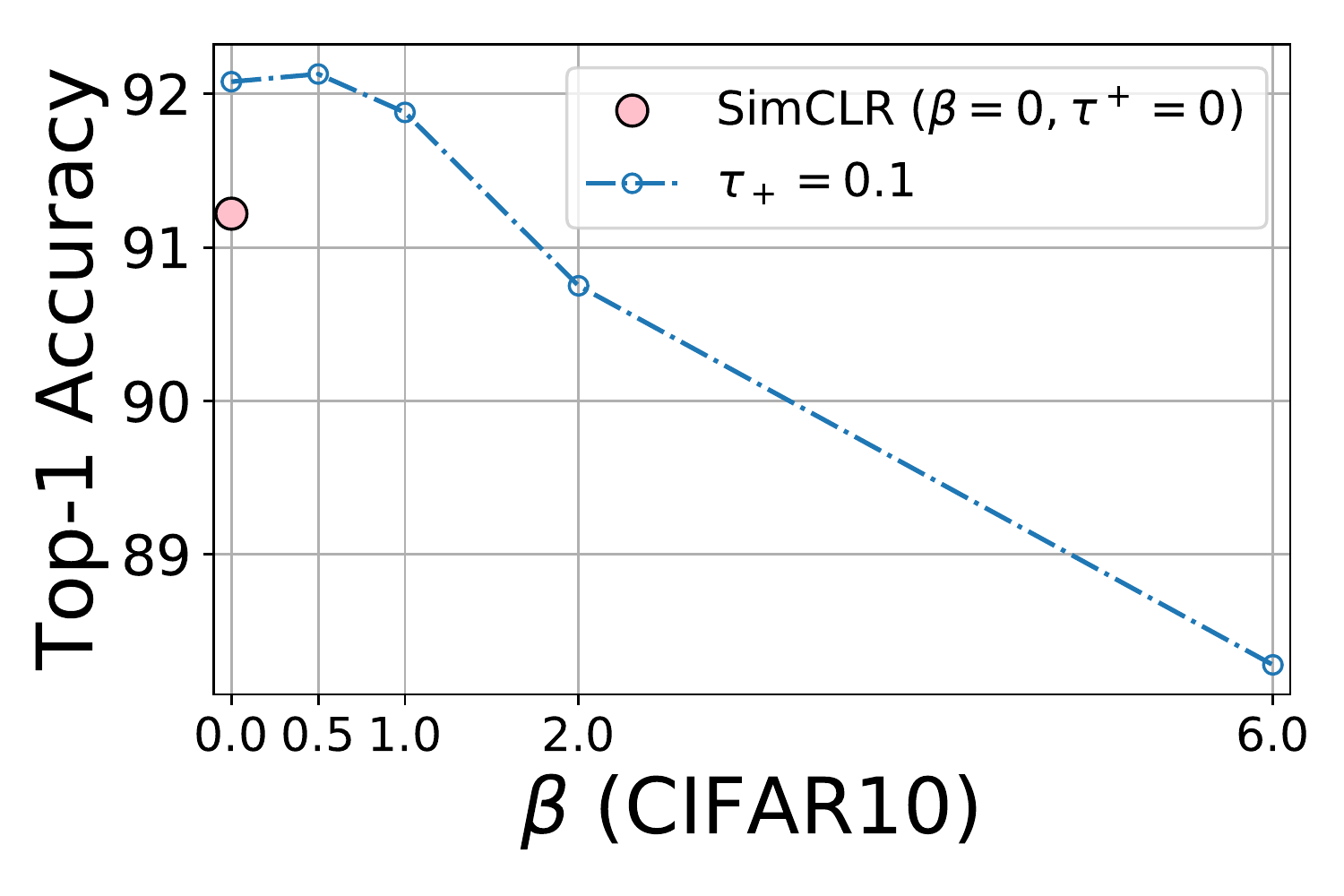}  \\
     \caption{ The effect of varying concentration parameter $\beta$ on linear readout accuracy for CIFAR10. (Complements the left and middle plot from Figure \ref{fig:stl ablation}.)  }
      \vspace{-15pt}
      \label{fig: cifar10 beta}
\end{wrapfigure}
\subsection{Ablations}
To study the affect of varying the concentration parameter $\beta$ on the learned embeddings Figure \ref{fig:cifar100 histogram ablation} plots cosine similarity histograms of pairs of similar and dissimilar points. The results show that for $\beta$ moving from $0$ through $0.5$ to $2$ causes both the positive and negative similarities to gradually skew left. In terms of downstream classification, an important property is the \emph{relative} difference in similarity between positive and negative pairs. In this case $\beta=0.5$ find the best balance (since it achieves the highest downstream accuracy). When $\beta$ is taken very large ($\beta=6$), we see a change in conditions. Both positive and negative pairs are assigned higher similarities in general. Visually it seems that the positive and negative histograms for $\beta=6$  overlap a lot more than for smaller values, which helps explain why the linear readout accuracy is lower for $\beta=6$ .

Figure \ref{fig: real fig1} gives real examples of hard vs. uniformly sampled negatives. Given an anchor $x$ (a monkey) and trained embedding  $f$ (trained on STL10 using standard SimCLR for $400$ epochs), we sample a batch of $128$ images. The top row shows the ten negatives $\xm$ that have the largest inner product $f(x)^\top f(\xm)$, while the bottom row is a random sample from from the same batch. Negatives with the largest inner product with the anchor correspond to the items in the batch are the most important terms in the objective since they are given the highest weighting by $q_\beta^-$. Figure \ref{fig: real fig1} shows that ``real'' hard negatives are conceptually similar to the idea as proposed in Figure 1: hard negatives are semantically similar to the anchor, possessing various similarities, including color (browns and greens), texture (fur), and objects (animals vs machinery).

\begin{figure}[h]
\vspace{+10pt}
    \includegraphics[width=\textwidth]{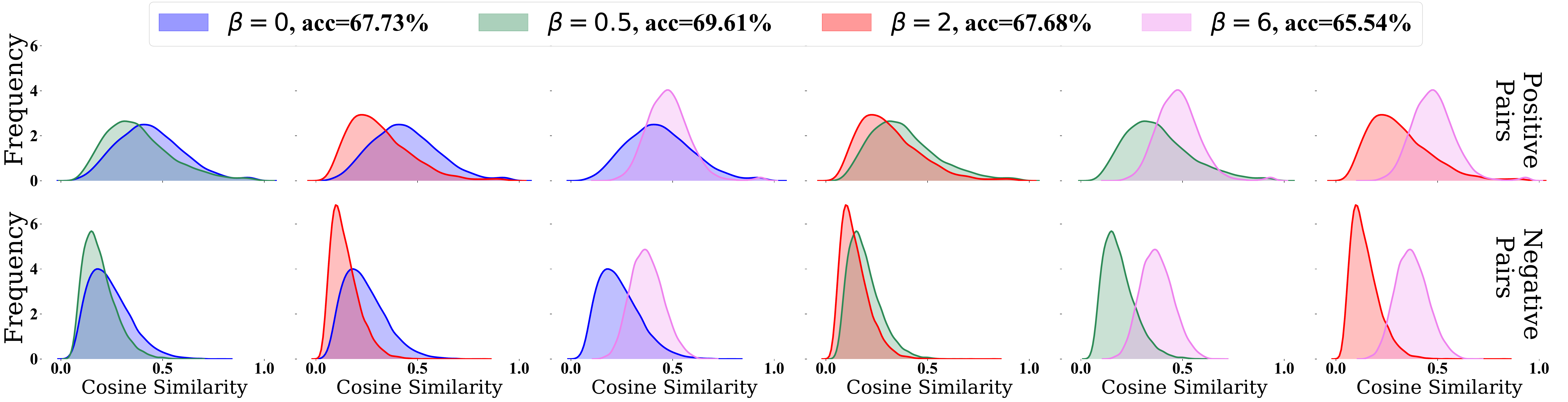}
    \caption{Histograms of cosine similarity of pairs of points with different label (bottom) and same label (top) for embeddings trained on CIFAR100 with  different values of $\beta$. Histograms overlaid pairwise to allow for easy comparison. }
    \label{fig:cifar100 beta histogram ablation}
\end{figure}

\begin{figure}[h]
    \includegraphics[width=\textwidth]{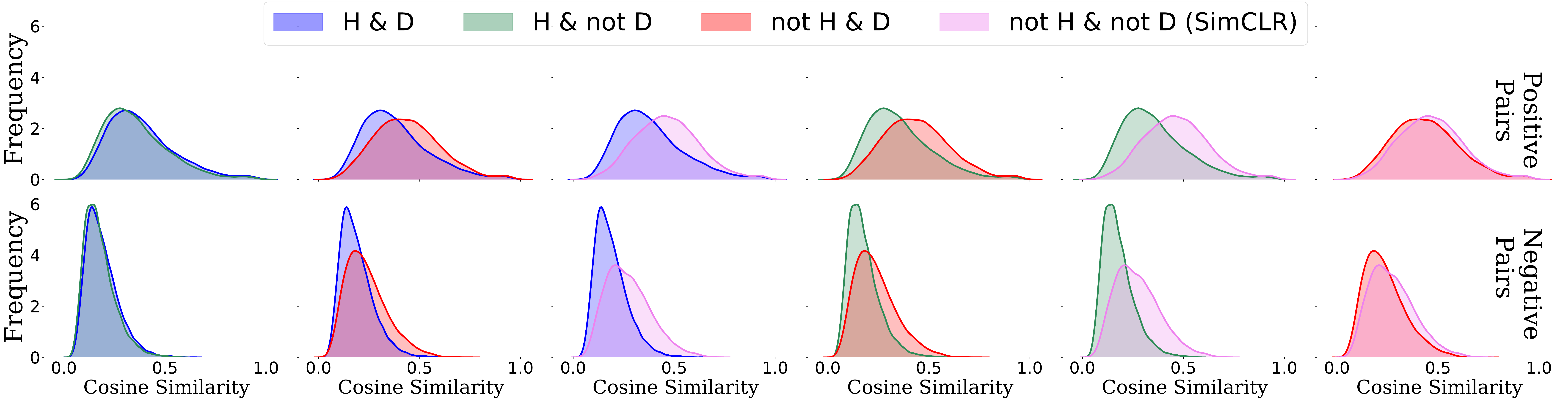}
    \caption{Histograms of cosine similarity of pairs of points with the same label (top) and different labels (bottom) for embeddings trained on CIFAR100 with four different objectives. H$=$Hard Sampling, D$=$Debiasing.  Histograms overlaid pairwise to allow for convenient comparison.}
    \label{fig:cifar100 histogram ablation}
\end{figure}

\begin{figure}[h]
    \includegraphics[width=\textwidth]{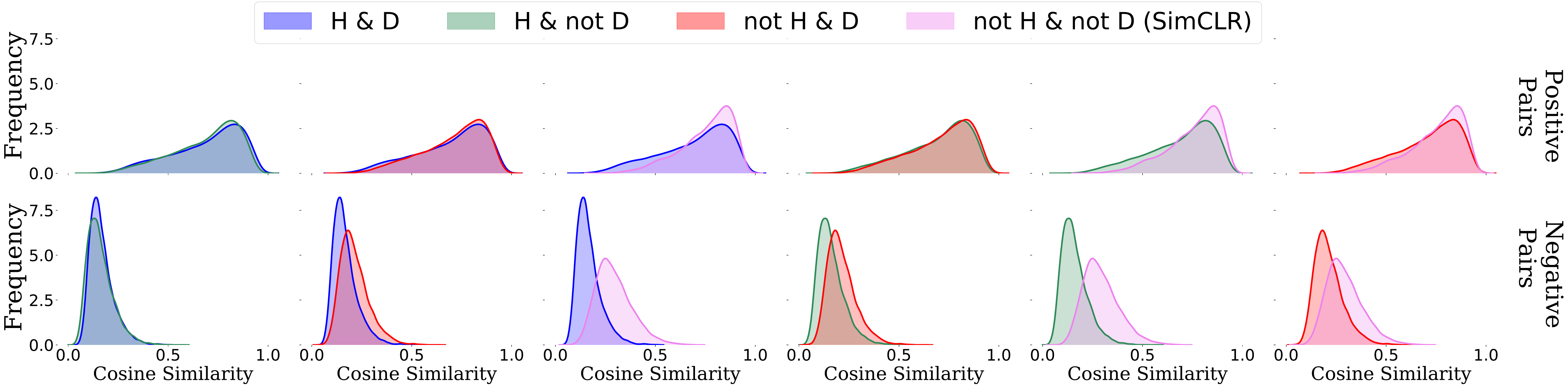}
    \caption{Histograms of cosine similarity of pairs of points with the same label (top) and different labels (bottom) for embeddings trained on CIFAR10 with four different objectives. H$=$Hard Sampling, D$=$Debiasing.  Histograms overlaid pairwise to allow for convenient comparison.}
    \label{fig:cifar10 histogram ablation}
\end{figure}

 \begin{figure}[h]
  \centering
  \includegraphics[width=6.5cm]{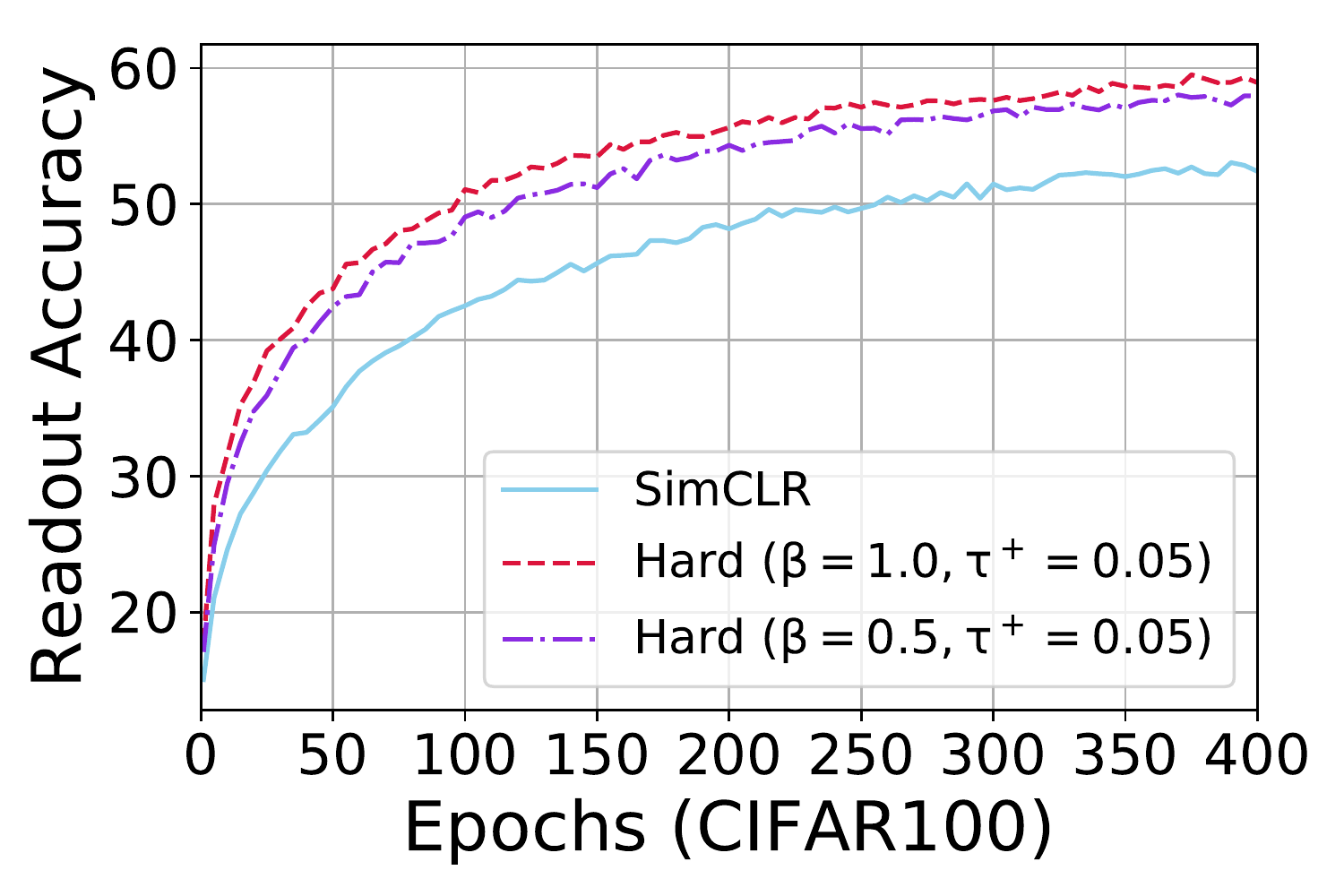}  
  \includegraphics[width=6.5cm]{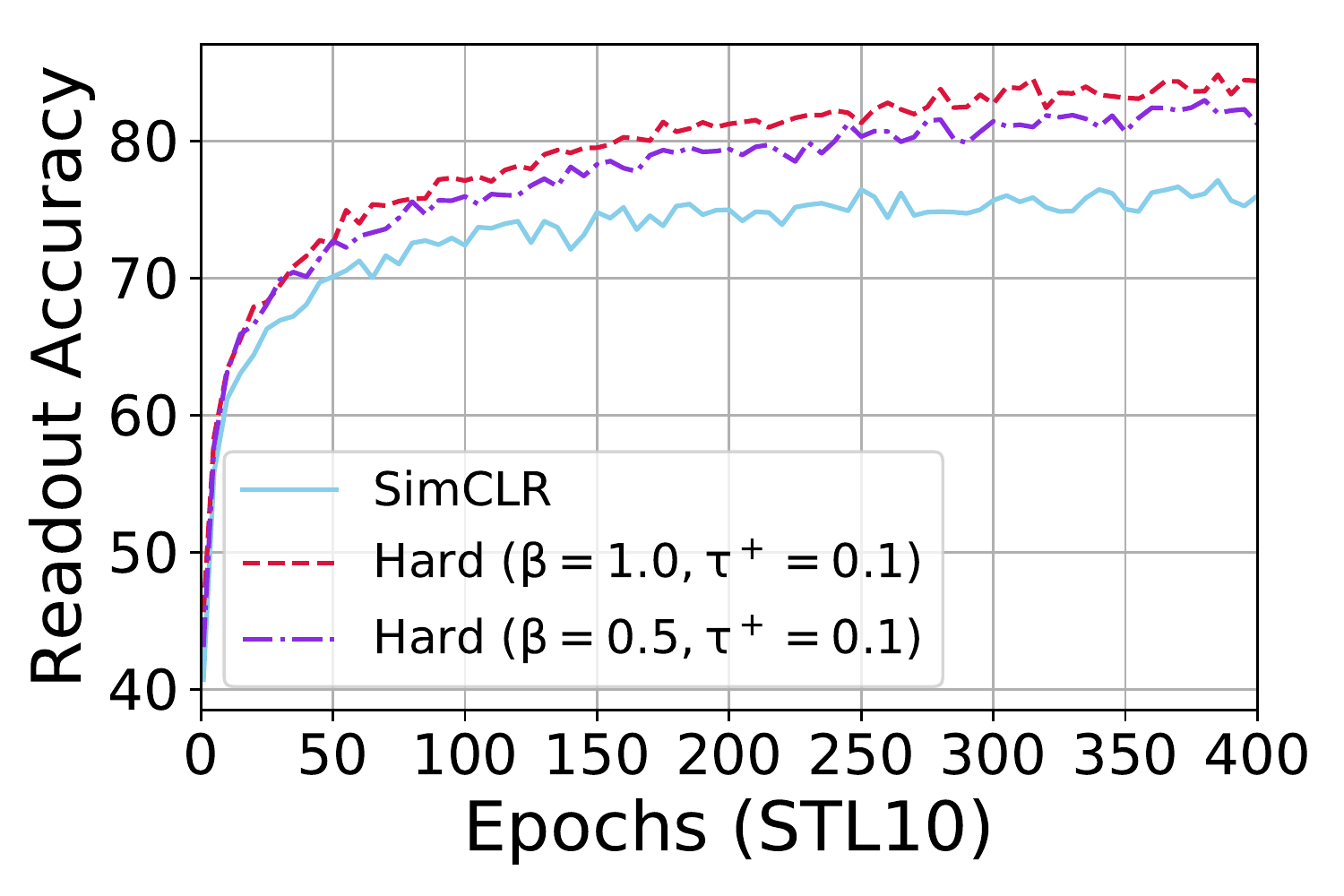} 
     \caption{Hard sampling takes much fewer epochs to reach the same accuracy as SimCLR does in 400 epochs; for STL10 with $\beta=1$ it takes only 60 epochs, and on CIFAR100 it takes 125 epochs (also with $\beta=1$). }
      \label{fig: optimization}
 \end{figure}
 \vspace{-10pt}
 
  \begin{figure}[h]
  \centering
  \includegraphics[width=\textwidth]{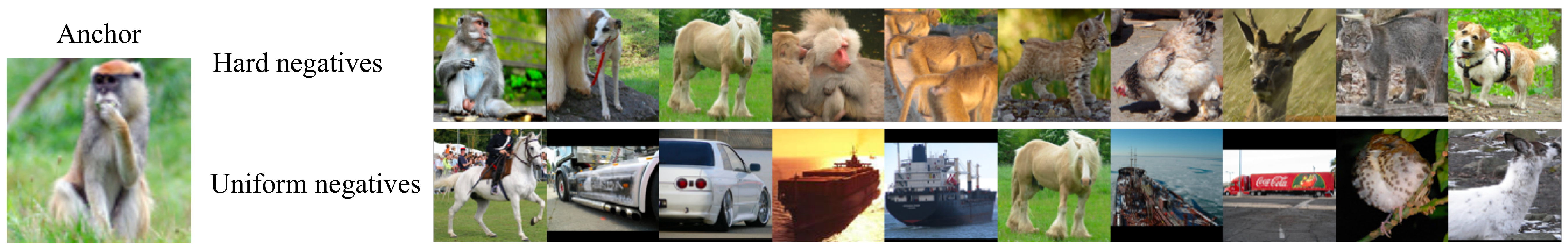}  
     \caption{Qualitative comparison of hard negatives and uniformly sampled negatives for embedding trained on STL10 for $400$ epochs using SimCLR. Top row: selecting the $10$ images with highest inner product with anchor in latent space from a batch of $128$ inputs. Bottom row: a set of random samples from the same batch. Hard negatives are semantically much more similar to the anchor than uniformly sampled negatives - hard negatives possess many similar characteristics to the anchor, including texture, colors, animals vs machinery. }
      \label{fig: real fig1}
 \end{figure}

\section{Experimental Details}\label{ap: experiments}

Figure \ref{fig_objective_code} shows PyTorch-style pseudocode for the standard objective, the debiased objective, and the hard sampling objective. The proposed hard-sample loss is very simple to implement, requiring only two extra lines of code compared to the standard objective. 

\begin{figure*}[t]
\lstinputlisting[language=Python]{scripts/debiased-importance-objective.sh}
\caption{Pseudocode for our proposed new hard sample objective, as well as the original NCE contrastive objective, and debiased contrastive objective. In each case we take the number of positive samples to be $M=1$. The implementation of our hard sampling method only requires two additional lines of code compared to the standard objective.} 
\label{fig_objective_code}
\end{figure*}

\subsection{Visual Representations}\label{vision experiments}

We implement SimCLR in PyTorch. We use a ResNet-50 \citep{he2016deep} as the backbone with embedding dimension $2048$ (the representation used for linear readout), and projection head into the lower $128$-dimensional space (the embedding used in the contrastive objective). We use the Adam optimizer \citep{kingma2014adam} with learning rate $0.001$ and weight decay $10^{-6}$. Code available at \url{https://github.com/joshr17/HCL}. Since we adopt the SimCLR framework, the number of negative samples $N=2(\text{batch size} - 1)$. Since we always take the batch size to be a power of 2 ($16,32,64,128,256$) the negative batch sizes are $30,62,126,254,510$ respectively. Unless otherwise stated, all models are trained for $400$ epochs.

\paragraph{Annealing $\beta$ Method:}

We detail the annealing method whose results are given in Figure \ref{fig:stl ablation}. The idea is to reduce the concentration parameter down to zero as training progresses. Specifically, suppose we have $e$ number of total training epochs. We also specify a number $\ell$ of ``changes'' to the concentration parameter  we shall make. We initialize the concentration parameter $\beta_1=\beta$ (where this $\beta$ is the number reported in Figure \ref{fig:stl ablation}), then once every $e/\ell$ epochs we reduce $\beta_i$ by $\beta/\ell$. In other words, if we are currently on $\beta_i$, then $\beta_{i+1} = \beta_i - \beta/\ell$, and we switch from $\beta_i$ to $\beta_{i+1}$ in epoch number $i \cdot e / \ell$. The idea of this method is to select particularly difficult negative samples early on order to obtain useful gradient information early on, but later (once the embedding is already quite good) we reduce the ``hardness'' level so as to reduce the harmful effect of only approximately correcting for false negatives (negatives with the same labels as the anchor).

We also found the annealing in the opposite direction (``down'') achieved similar performance. 

\paragraph{Bias-variance of empirical estimates in hard-negative objective:}

Recall the final hard negative samples objective we derive is,

\begin{equation}
 \mathbb{E}_{\substack{x \sim p \\ x^+ \sim p_x^+ }} \left [-\log \frac{e^{f(x)^T f(x^+)}}{e^{f(x)^T f(x^+)} + \frac{Q}{\tau^-}(\mathbb{E}_{x^- \sim q_\beta} [e^{f(x)^T f(x^-)}] - \tau^+ \mathbb{E}_{v \sim q_\beta^+} [e^{f(x)^T f(v) }])} \right ]. 
\end{equation}

This objective admits a practical counterpart by using empirical approximations to $\mathbb{E}_{x^- \sim q_\beta} [e^{f(x)^T f(x^-)}]$ and $\mathbb{E}_{v \sim q_\beta^+} [e^{f(x)^T f(v) }]$. In practice we use a fairly large number of samples (e.g. $N=510$) to approximate the first expectation, and only $M=1$ samples to approximate the second. Clearly in both cases the resulting estimator is unbiased. Further, since the first expectation is approximated using many samples, and the integrand is bounded, the resulting estimator is well concentrated (e.g. apply Hoeffding's inequality out-of-the-box). But what about the second expectation? This might seem uncontrolled since we use only one sample, however it turns out that the random variable $X = e^{f(x)^T f(v)}$  where $x \sim p$ and $v \sim q^+_\beta$ has variance that is bounded by $ \mathcal L_\text{align}(f) $.

\begin{lemma}
\label{lemma: bias/var}
Consider the random variable $X = e^{f(x)^T f(v)}$ where $x\sim p$ and $v \sim q^+_\beta$. Then $\text{Var}(X) \leq \mathcal O \big ( \mathcal L_\text{align}(f) \big ) $.
\end{lemma}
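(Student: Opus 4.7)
The plan is to exploit the fact that $f$ maps into the sphere of radius $1/t$, so that everything is bounded in a way that lets the variance be absorbed into an alignment-type expectation. First I would rewrite $X$ in a more tractable form: since $\|f(x)\| = \|f(v)\| = 1/t$, the identity $f(x)^T f(v) = 1/t^2 - \|f(x)-f(v)\|^2/2$ gives
\[ X = e^{1/t^2}\cdot e^{-\|f(x)-f(v)\|^2/2}, \]
which is bounded above by $e^{1/t^2}$.

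Next I would bound the variance by a second moment around the constant $e^{1/t^2}$, i.e. $\Var(X) \le \mathbb{E}[(X-e^{1/t^2})^2]$. Using the elementary inequality $1-e^{-u}\le u$ for $u\ge 0$ and the crude norm bound $\|f(x)-f(v)\|^2 \le 4/t^2$, one gets
\[ (e^{1/t^2}-X)^2 \le e^{2/t^2}\cdot\frac{\|f(x)-f(v)\|^4}{4} \le \frac{e^{2/t^2}}{t^2}\,\|f(x)-f(v)\|^2. \]
So it remains to relate $\mathbb{E}_{x,v\sim q_\beta^+}\|f(x)-f(v)\|^2$ to $\mathcal L_\text{align}(f) = \mathbb{E}_{x,x^+\sim p_x^+}\|f(x)-f(x^+)\|^2/2$.

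The key technical step is a change of measure from $q_\beta^+(\cdot\mid x)$ back to $p_x^+$. The Radon--Nikodym derivative is $dq_\beta^+/dp_x^+ = e^{\beta f(x)^T f(v)}/Z_\beta^+(x)$. Using $|f(x)^T f(v)|\le 1/t^2$, the numerator is at most $e^{\beta/t^2}$ and the partition function satisfies $Z_\beta^+(x) = \mathbb{E}_{v\sim p_x^+}[e^{\beta f(x)^T f(v)}] \ge e^{-\beta/t^2}$, so the density ratio is uniformly bounded by $e^{2\beta/t^2}$. Therefore
\[ \mathbb{E}_{x,v\sim q_\beta^+}\|f(x)-f(v)\|^2 \le e^{2\beta/t^2}\,\mathbb{E}_{x,x^+\sim p_x^+}\|f(x)-f(x^+)\|^2 = 2e^{2\beta/t^2}\,\mathcal L_\text{align}(f). \]

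Combining the two estimates gives $\Var(X) \le \tfrac{2}{t^2}\,e^{(2+2\beta)/t^2}\,\mathcal L_\text{align}(f) = \mathcal O(\mathcal L_\text{align}(f))$, with constants depending only on $t$ and $\beta$. There is no genuine obstacle here: the only mildly delicate point is the uniform lower bound on the partition function $Z_\beta^+(x)$, but since $f$ lives on a bounded hypersphere the crude estimate suffices. If a sharper constant were desired one could avoid the $\|f(x)-f(v)\|^2 \le 4/t^2$ step and instead apply Cauchy--Schwarz to $(e^{1/t^2}-X)^2 \le e^{2/t^2}\|f(x)-f(v)\|^4/4$, but this is not needed for the stated $\mathcal O$-bound.
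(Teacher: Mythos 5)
Your proof is correct, and it takes a mildly but genuinely different route from the paper's. The paper symmetrizes: it introduces an i.i.d.\ copy $X'$, writes $2\Var(X\mid x)\le\mathbb{E}[(X-X')^2\mid x]$, controls $|e^{a}-e^{b}|$ by the Lipschitz constant $e^{1/t^2}$ of the exponential on $[-1/t^2,1/t^2]$, applies Cauchy--Schwarz to get $\|f(v)-f(v')\|^2$, changes measure from $q_\beta^+\times q_\beta^+$ to $p^+\times p^+$, and then needs one further observation --- that $p^+$ depends on $x$ only through its class, so $\mathbb{E}_{c\sim\rho}\mathbb{E}_{v,v'\sim p^+}\|f(v)-f(v')\|^2=2\mathcal L_\text{align}(f)$ --- to land on alignment. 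You instead center at the constant $e^{1/t^2}=\max X$, use the sphere identity to write $X=e^{1/t^2}e^{-\|f(x)-f(v)\|^2/2}$, linearize via $1-e^{-u}\le u$, and downgrade the resulting fourth power to a second power with the crude bound $\|f(x)-f(v)\|^2\le 4/t^2$; after a single change of measure ($v$ only) the quantity $\mathbb{E}_{x\sim p,\,v\sim p_x^+}\|f(x)-f(v)\|^2$ \emph{is} $2\mathcal L_\text{align}(f)$ by definition, so the final class-conditioning step is unnecessary. Your version has two small advantages: it bounds the total variance $\Var(X)$ directly (the paper's conditioning argument, taken literally, controls only $\mathbb{E}_x[\Var(X\mid x)]$ and not the between-$x$ term $\Var_x(\mathbb{E}[X\mid x])$, which your centering at a single global constant absorbs automatically), and it makes the $\beta$-dependence of the hidden constant explicit via the density-ratio bound $e^{2\beta/t^2}$, which the paper leaves implicit in the phrase ``absolutely continuous with bounded ratio.'' Both arguments yield the same qualitative conclusion with constants depending on $t$ and $\beta$.
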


Recall that $ \mathcal L _\text{align}(f)  = \mathbb{E}_{x,x^+ } \| f(x)-f(x^+)\|^2 /2$ is termed \emph{alignment}, and \cite{wang2018understanding} show that the contrastive objective jointly optimize \emph{alignment}  and \emph{uniformity}. Lemma \ref{lemma: bias/var} therefore shows that as training evolves, the variance of the $X = e^{f(x)^T f(v)}$ where $x \sim p$ and $v \sim q^+_\beta$ is bounded by a term that we expect to see becoming small, suggesting that using a single sample ($M=1$) to approximate this expectation is not unreasonable. We cannot, however, say more than this since we have no guarantee that $\mathcal L _\text{align}(f) $ goes to zero.

\begin{proof}
Fix an $x$ and recall that we are considering $q^+_\beta (\cdot )= q^+_\beta(\cdot ; x)$. First let $X'$ be an i.i.d. copy of $X$, and note that, conditioning on $x$, we have $2\text{Var}(X | x ) = \text{Var}(X| x)+\text{Var}(X'| x) = \text{Var}(X-X'| x) \leq \mathbb{E}\big [ (X-X')^2  | x \big]$. Bounding this difference,

\begin{align*}
\mathbb{E}\big [ (X-X')^2  | x \big] &= \mathbb{E}_{v,v'\sim q^+_\beta} \bigg ( e^{f(x)^\top f(v)} - e^{f(x)^\top f(v')} \bigg ) ^2  \\
&\leq \mathbb{E}_{v,v'\sim q^+_\beta} \bigg ( e^{1/t^2} \big [ {f(x)^\top f(v)} -{f(x)^\top f(v')}\big ] \bigg ) ^2  \\ 
&\leq e^{1/t^4}  \mathbb{E}_{v,v'\sim q^+_\beta} \bigg (\big [ \norm{f(x)}\norm{f(v) - f(v')}\big ] \bigg ) ^2  \\ 
&= \frac{e^{1/t^4}}{t^2}  \mathbb{E}_{v,v'\sim q^+_\beta}\norm{f(v) - f(v')}^2  \\ 
&\leq \mathcal O \bigg ( \mathbb{E}_{v,v'\sim p^+}\norm{f(v) - f(v')}^2  \bigg ) \\ 
\end{align*}

where the first inequality follows since $f$ lies on the sphere of radius $1/t$, the second inequality by Cauchy–Schwarz, the third again since $f$ lies on the sphere of radius $1/t$, and the fourth since $ q^+_\beta$ is absolutely continuous with respect to $p^+$ with bounded ratio.

Since $p^+(x^+)=p(x^+|h(x))$ only depends on $c=h(x)$, rather than $x$ itself, taking expectations over $x\sim p$ is equivalent to taking expectations over $c \sim \rho$. Further, $\rho(c) p(v|c)p(v'|c) = p(v)p(v'|c)=p(v)p^+_{v}(v')$. So $\mathbb{E}_{c \sim \rho} \mathbb{E}_{v,v'\sim p^+}\norm{f(v) - f(v')}^2 = \mathbb{E}_{x,x^+}\norm{f(x) - f(x^+)}^2 = 2 \mathcal L _\text{align}(f)  $, where $x \sim p$ and $x^+ \sim p^+_x$. Thus we obtain the lemma.
\end{proof}

\begin{figure*}[h]
\lstinputlisting[language=Python]{scripts/clipping_trick.sh}
\caption{In cases where the learned embedding is not normalized to lie on a hypersphere we found that clipping the negatives to live in a fixed range (in this case $[-2,2]$)  stabilizes optimization.} 
\label{fig_trick}
\end{figure*}

\subsection{Graph Representations}

All datasets we benchmark on can be downloaded at  \url{www.graphlearning.io} from the TUDataset repository of graph classification problems \citep{morris2020tudataset}. Information on basic statistics of the datasets is included in Tables \ref{graph data: 1} and \ref{graph data: 2}. For fair comparison to the original InfoGraph method, we adopt the official code, which can be found at  \url{https://github.com/fanyun-sun/InfoGraph}. We modify only the \texttt{gan\_losses.py} script, adding in our proposed hard sampling via reweighting. For simplicity we trained all models using the same set of hyperparameters: we used the GIN architecture  \citep{xu2018powerful} with $K=3$ layers and embedding dimension $d=32$. Each model is trained for $200$ epochs with batch size $128$ using the Adam optimizer \citep{kingma2014adam}. with learning rate $0.001$, and weight decay of $10^{-6}$.
Each embedding is evaluated using the average accuracy 10-fold cross-validation using an SVM as the classifier (in line with the approach taken by \cite{morris2020tudataset}). Each experiment is repeated from scratch 10 times, and the distribution of results from these 10 runs is plotted in Figure \ref{graph_table}. 

Since the graph embeddings are not constrained to lie on a hypersphere, for a batch we clip all the inner products to live in the interval $[-2,2]$ while computing the reweighting, as illustrated in Figure \ref{fig_trick}. We found this to be important for stabilizing optimization.

\begin{table*}[!ht]
\small
\begin{center}
\begin{tabularx}{0.58\textwidth}{l| c*{4}{c}}
\hline
 \fontsize{9pt}{9pt}\selectfont\textbf{Dataset} &  \fontsize{9pt}{9pt}\selectfont\textbf{DD} & \fontsize{9pt}{9pt}\selectfont\textbf{PTC} & \fontsize{9pt}{9pt}\selectfont\textbf{REDDIT-B} & \fontsize{9pt}{9pt}\selectfont\textbf{PROTEINS} & \fontsize{9pt}{9pt}
 \\
\hline
\hline
No. graphs & 1178 & 344 & 2000 & 	1113 \\
No. classes & 2 & 2 & 2 & 2 \\
Avg. nodes & 284.32 & 14.29 & 429.63 & 39.06 \\
Avg. Edges & 715.66 & 14.69 & 497.75 & 72.82 \\
\end{tabularx}
\end{center}
\caption{Basic statistics for graph datasets.}
\label{graph data: 1}
\end{table*}

\begin{table*}[!ht]
\small
\begin{center}
\begin{tabularx}{0.615\textwidth}{l| c*{4}{c}}
\hline
 \fontsize{9pt}{9pt}\selectfont\textbf{Dataset} &  \fontsize{9pt}{9pt}\selectfont\textbf{ENZYMES} & \fontsize{9pt}{9pt}\selectfont\textbf{MUTAG} & \fontsize{9pt}{9pt}\selectfont\textbf{IMDB-B} & \fontsize{9pt}{9pt}\selectfont\textbf{IMDB-M} & \fontsize{9pt}{9pt}
 \\
\hline
\hline
No. graphs & 600 & 	188 & 1000 & 1500 \\
No. classes & 6 & 2 & 2 & 3 \\
Avg. nodes & 	32.63 & 	17.93 & 19.77 & 13.00 \\
Avg. Edges & 62.14 & 19.79 & 96.53 & 65.94 \\
\end{tabularx}
\end{center}
\caption{Basic statistics for graph datasets.}
\label{graph data: 2}
\end{table*}

\subsection{Sentence Representations}

We adopt the official quick-thoughts vectors experimental settings, which can be found at  \url{https://github.com/lajanugen/S2V}. We keep all hyperparameters at the default values and change only the  \texttt{s2v-model.py} script. Since the official BookCorpus dataset \cite{kiros2015skip} is not available, we use an unofficial version obtained using the following repository: \url{https://github.com/soskek/bookcorpus}. Since the sentence embeddings are also not constrained to lie on a hypersphere, we use the same clipping trick as for the graph embeddings, illustrated in Figure \ref{fig_trick}. 

After training on the BookCorpus dataset, we evaluate the embeddings on six different classification tasks: paraphrase identification (MSRP) \citep{dolan2004unsupervised},  question type classification (TREC) \citep{voorhees2002overview}, opinion polarity (MPQA) \citep{wiebe2005annotating}, subjectivity classification (SUBJ) \citep{pang2004sentimental}, product reviews (CR) \citep{hu2004mining}, and sentiment of movie reviews (MR)  \citep{pang2005seeing}.

\paragraph{Comparison with \cite{kalantidis2020hard}:}

 \cite{kalantidis2020hard} also consider ways to sample negatives, and propose a mixing strategy for hard negatives, called MoCHi. The main points of difference are: 1) MoCHi considers the benefit of hard negatives, but does not consider the possibility of false negatives (Principle 1), which we found to be valuable. 2) MoCHi introduces three extra hyperparameters, while our method introduces only two ($\beta$, $\tau^+$). If we discard Principle 1 (i.e. $\tau^+$) then only $\beta$ requires tuning. 3) our method introduces zero computational overhead by utilizing within-batch reweighting, whereas MoCHi involves a small amount of extra computation.

\end{document}